\documentclass{article} 
\usepackage{iclr2022_conference,times}
\usepackage{dblfloatfix} 
\usepackage{hyperref}
\usepackage{url}
\usepackage{xr-hyper}
\usepackage[T1]{fontenc}
\usepackage[latin1]{inputenc}
\usepackage{float}
\usepackage{amsmath}
\usepackage[noadjust]{cite}
\usepackage{amssymb}
\usepackage{enumerate}
\usepackage{dsfont}
\usepackage{amsthm}
\usepackage{graphics,epsfig,psfrag}
\usepackage{epsf,pstricks,subfloat}
\usepackage{wrapfig}

\usepackage{multirow}
\setlength{\arrayrulewidth}{0.3mm}
\renewcommand{\arraystretch}{1.2}
\setlength{\tabcolsep}{3pt}
\usepackage{xcolor}
\definecolor{mygray}{gray}{0.95}
\usepackage{bm}
\usepackage{url}
\usepackage{psfrag}
\floatstyle{ruled}
\newfloat{algorithm}{tbp}{loa}
\usepackage{amsmath}
\usepackage{amsfonts}
\usepackage{amsthm}
\usepackage{algpseudocode}
\usepackage{expl3}
\usepackage[most]{tcolorbox}
\newtcolorbox{mybox}[2][]{%
  attach boxed title to top center
               = {yshift=-8pt},
  colback      = black,
  colframe     = black,
  fonttitle    = \bfseries,
  colbacktitle = white,
  title        = #2,#1,
  enhanced,
}
\usepackage{algorithm}
\usepackage{graphicx}
\usepackage{subcaption}

\def\bfa{{\boldsymbol a}}
\def\bfb{{\boldsymbol b}}
\def\bfc{{\boldsymbol c}}

\def\bff{{\boldsymbol f}}

\def\bfh{{\boldsymbol h}}

\def\bfs{{\boldsymbol s}}

\def\bfu{{\boldsymbol u}}
\def\bfv{{\boldsymbol v}}
\def\bfw{{\boldsymbol w}}
\def\bfx{{\boldsymbol x}}
\def\bfy{{\boldsymbol y}}
\def\bfz{{\boldsymbol z}}

\def\bfA{{\boldsymbol A}}
\def\bfB{{\boldsymbol B}}

\def\bfE{{\boldsymbol E}}

\def\bfH{{\boldsymbol H}}
\def\bfI{{\boldsymbol I}}

\def\bfM{{\boldsymbol M}}

\def\bfP{{\boldsymbol P}}

\def\bfS{{\boldsymbol S}}

\def\bfU{{\boldsymbol U}}
\def\bfV{{\boldsymbol V}}
\def\bfW{{\boldsymbol W}}
\def\bfX{{\boldsymbol X}}

\def\bfZ{{\boldsymbol Z}}

\newcommand{\bfal}{\boldsymbol{\alpha}}

\newcommand{\WL}{\bfW^{[\hat{\lambda}]} }  
\newcommand{\Wp}{\bfW^{[p]} }  

\newcommand{\de}{\tilde{\delta}}

\newcommand{\R}{\mathbb{R}}
\newcommand{\tcr}{\textcolor{red}}
\newtheorem{theorem}{Theorem}

\newtheorem{lemma}{Lemma}
\newtheorem{definition}{Definition}

\newcommand{\Revise}[1]{\textcolor{black}{#1}}
\newcommand{\Semi}{{\text{SemiSL}}}


 \title{How does unlabeled data improve generalization in self-training? A one-hidden-layer theoretical analysis}
\author{%
  Shuai Zhang \\
  Rensselaer Polytechnic Institute\\
  Troy, NY, USA 12180\\
  \texttt{zhangs29@rpi.edu}
  \And
  Meng Wang \\
  Rensselaer Polytechnic Institute\\
  Troy, NY, USA 12180\\
  \texttt{wangm7@rpi.edu}\\
  \And
  Sijia Liu\\
  Michigan State University\\
  East Lansing, MI, USA 48824\\
  MIT-IBM Watson AI Lab, IBM Research\\
  \texttt{liusiji5@msu.edu}
  \And 
  Pin-Yu Chen\\
  IBM Research\\
  Yorktown Heights, NY, USA 10562\\
  \texttt{Pin-Yu.Chen@ibm.com}
  \And
  Jinjun Xiong\\
  University at Buffalo\\
  Buffalo NY, USA 14260\\
  \texttt{jinjun@buffalo.edu}
}

\begin{document}
\iclrfinalcopy
\maketitle

\begin{abstract}
    Self-training, a semi-supervised learning algorithm, leverages a large amount of unlabeled data to improve learning when the labeled data are limited. Despite empirical successes, its theoretical characterization remains elusive. To the best of our knowledge, this work establishes the first theoretical analysis for the known iterative self-training paradigm and   proves the benefits of unlabeled data in both training convergence and generalization ability. To make our theoretical analysis feasible, we focus on the case of one-hidden-layer neural networks. However, theoretical understanding of iterative self-training is non-trivial even for a shallow neural network. One of the key challenges is that existing neural network landscape analysis built upon supervised learning no longer holds in the (semi-supervised) self-training paradigm. We address this challenge and prove that iterative self-training converges linearly with both convergence rate and generalization accuracy improved in the order of $1/\sqrt{M}$, where $M$ is the number of unlabeled samples.  Experiments from shallow neural networks to deep neural networks are also provided to justify the correctness of our established theoretical insights on self-training.
\end{abstract}

\section{Introduction}

Self-training \citep{Scu65,Yar95,Lee13,HLW19}, one of the most powerful \underline{semi}-\underline{s}upervised \underline{l}earning (SemiSL) algorithms, augments a limited number of labeled data with  unlabeled data so as to achieve
improved   generalization performance   on test data, compared with the model trained by  supervised learning using the labeled data only.  
Self-training has shown empirical success in diversified applications such as few-shot image classification \citep{su2020does,XLHL20,CKSNH20,YJCPM19,ZGLC20}, objective detection \citep{RHS05}, robustness-aware model training against adversarial attacks \citep{CRSD19}, continual lifelong learning \citep{lee2019overcoming}, and natural language processing \citep{HGSR19,klh20}.  The terminology ``self-training'' has been used to describe various SemiSL algorithms in the literature, while this paper is centered on  the commonly-used iterative   self-training method in particular.  In this setup, an  initial teacher model (learned from the labeled data) is applied to the unlabeled data to generate pseudo labels. One then trains a student model   by minimizing the weighted empirical risk of both the labeled and unlabeled data. The student model is then used as the new teacher to update the pseudo labels of the unlabeled data. This process is repeated multiple times to improve the eventual student model. {We refer readers to Section \ref{sec: problem} for algorithmic details.}

Despite the empirical achievement of self-training  methods {with neural networks}, the theoretical justification of such success is very limited, even in the field of SemiSL. 
The majority of the theoretical results on general SemiSL are limited to linear networks \citep{CWKM20,RXYDL20, OG20, OADAR11}.  
 The authors in \citep{BMB10} 
 show that unlabeled data can improve the generalization bound if the unlabeled data distribution and target model are compatible. For instance, the unlabeled data need to be well-chosen such that the target function for labeled data can separate the unlabeled data clusters, which, however, may not be able to be verified ahead.
\Revise{Moreover, \citep{Rig07,SNZ08} proves that unlabeled data can improve the convergence rate and generalization error under a similar clustering assumption, where the data contains clusters that have homogeneous labels. 
A recent work by \citet{WSCM20}   analyzes SemiSL on nonlinear neural networks  and proves that an infinite number of unlabeled data can improve the generalization compared with training with labeled data only.}  
However, \citet{WSCM20} considers  single shot  rather than iterative SemiSL, and the training problem aims to  minimize the consistency regularization rather than the risk function in the conventional self-training method \citep{Lee13}. Moreover, 
\citet{WSCM20}  directly analyzes  the global optimum of the nonconvex training problem without any discussion about how to achieve the global optimum.   To the best of our knowledge, there exists no analytical characterization of how the unlabeled data affect the generalization of the learned model by iterative self-training on nonlinear neural networks.


\begin{wrapfigure}{r}{0.42\textwidth}
    \vspace{-4mm}
   	\centering
   	\includegraphics[width=1.0\linewidth]{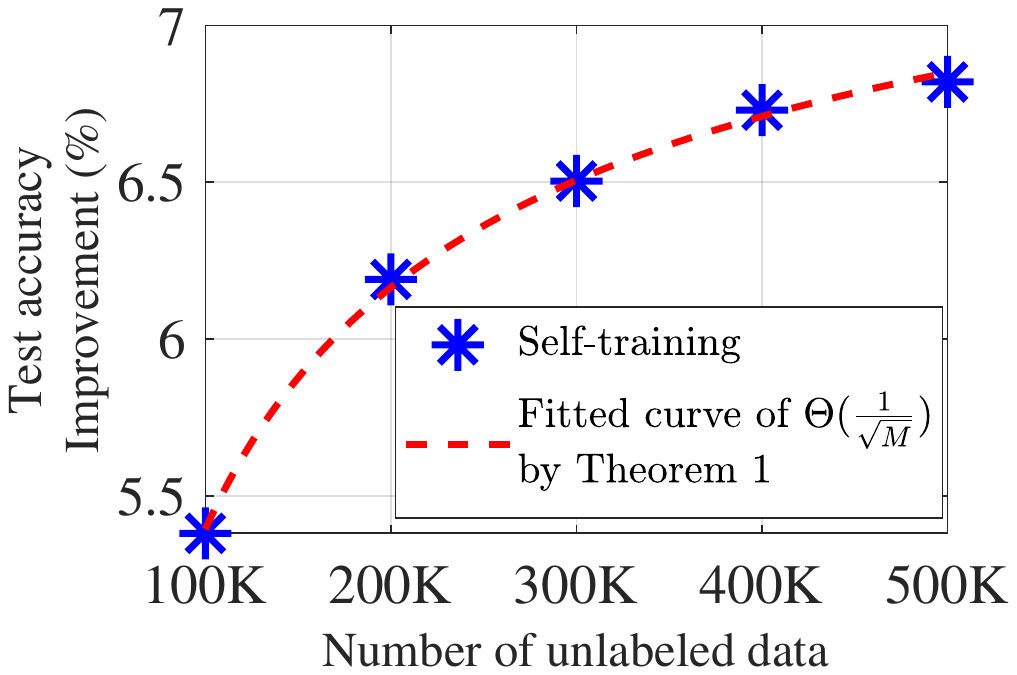}
   	\vspace{-2mm}
     \caption{The trend of test accuracy improvement (\%) on CIFAR-10 by self-training  on CIFAR-10 (labeled) with different amount of unlabeled data from 80 Million Tiny Images matches our theoretical prediction.}
     \label{fig: intro}
     \vspace{-4mm}
\end{wrapfigure}
\textbf{Contributions.} This paper provides the first theoretical study of iterative self-training on nonlinear neural networks. Focusing on one-hidden-layer neural networks, this paper provides a quantitative analysis of the generalization performance of iterative self-training as a function of the number of labeled and unlabeled samples. 
Specifically, our contributions include

\textbf{1. Quantitative justification of generalization improvement by unlabeled data.} Assuming the existence of a ground-truth model with weights $\bfW^*$ that maps the features to the corresponding labels, we prove that the learned model {via iterative self-training} moves closer to $\bfW^*$ as the number $M$ of unlabeled data increases, indicating a better testing performance. Specifically, we prove that the Frobenius distance to $\bfW^*$, which is approximately linear in the generalization error, decreases in the order of $1/\sqrt{M}$. 
{As an example, Figure\,\ref{fig: intro} shows that the   proposed theoretical bound matches the empirical self-training performance versus the number of unlabeled data for image classification; see details in Section \ref{sec: result_practice}.}



\textbf{2. Analytical justification of iterative self-training over single shot alternative.}  We prove that the student models returned by the iterative self-training method \textit{converges linearly} to a model close to $\bfW^*$, {with the rate improvement in the order of $1/\sqrt{M}$}. 



\textbf{3. Sample complexity analysis of labeled and unlabeled data for learning a proper model.} 
{We quantify the impact of labeled and unlabeled data on the generalization of the learned model. In particular, we prove that the sample complexity of labeled data   can be reduced compared with supervised learning.}

\subsection{Related works}\label{sec: related}

\textbf{Semi-supervised learning.}  Besides self-training, many recent SemiSL algorithms exploit either consistency regularization or entropy minimization.  Consistency regularization is  based on the assumption that the learned model will return same or similar output when the input is perturbed \citep{LA16,BAP14,SJT16,TV17,RLASER15}.  \citep{GB05}  claims that the unlabeled data are more informative if the 
pseudo labels of the unlabeled data have lower entropy. Therefore, a line of works \citep{GB05,MMKI18} adds a regularization term that minimizes the entropy of the outputs of the unlabeled data. 
In addition, hybrid algorithms that unify both the above regularizations have been developed like  \citep{BCNKSZR19,BCGOR19,SBCZZRCKL20}. 

\textbf{Domain adaptation.} Domain adaptation exploits abundant data in the source domain to learn a model for the target domain, where only limited training data are available \citep{LS10,VLMPG13,ZSMW13,LCWJ15,THZSD14}. Source and target domain are related but different. Unsupervised domain adaptation \citep{GL15,GUAGLLML16,GGS13,BTSKE16}, where   training data in target domain are unlabeled,  is similar to SemiSL, and self-training   methods have been used  for analysis \citep{ZYKW18,TRLK12,FMF18}. 
However,  self-training and unsupervised domain adaptation are fundamentally different. The former   learns a model for the domain where there is limited labeled data, with the help of a large number of \textit{unlabeled} data from a different domain.  The latter learns a model for the domain where the training data are unlabeled, with the help of sufficient \textit{labeled} data from a different domain. 

\textbf{Generalization analysis of supervised learning.}
In theory, the testing error is upper bounded by the training error plus the generalization gap between training and testing. These two quantities are often analyzed separately and cannot be proved to be small simultaneously for deep neural networks. For example, neural tangent kernel (NTK) method \citep{JGH18,DZPS18,LBNSSPS18} shows the training error can be zero, and the Rademacher complexity in \citep{BM02} bounds  the generalization gap \citep{ADHLW19}. For one-hidden-layer neural networks \citep{SS18}, the testing error can be proved to be zero under mild conditions. One common assumption is that the input data belongs to the Gaussian distribution \citep{ZSJB17,GLM17,kkms08,BJW19,ZLJ16,BG17,LY17,SJL18}. Another line of approaches \citep{BGM18,LL18,WGC19} consider  linearly separable data.

The rest of this paper is organized as follows. 
Section \ref{sec: problem} introduces the problem formulation and self-training algorithm. Major results are summarized in Section \ref{sec: theoretical}, and empirical evaluations are presented in Section \ref{sec: empirical_results}. Section \ref{sec: conclusion} concludes the whole paper.
All the proofs are in the Appendix.

\section{Formalizing {Self-Training}: Notation, Formulation, and Algorithm}\label{sec: problem}




\textbf{Problem formulation.} 
Given $N$ labeled data sampled from distribution $P_{l}$, denoted by $\mathcal{D} = \{ \bfx_n, y_n\}_{n=1}^N$, and $M$ unlabeled data drawn from distribution $P_{u}$, denoted by $\widetilde{\mathcal{D}} = \{ \widetilde{\bfx}_m \}_{m=1}^M$. The \Revise{aim} is to find a neural network model $g(\bfW)$, where $\bfW$ denotes the trainable weights, that minimizes the testing error on data sampled from $P_{l}$. 
\begin{mybox}[colback=mygray]{\textcolor{black}{Table 1: Iterative Self-Training}}
{
(S1) Initialize {iteration} $\ell =0$ and obtain a model $\W[\ell]$ as the teacher  using labeled data $\mathcal{D}$ only;
    
    (S2) Use the teacher model  to obtain  pseudo labels $\w[y]_m$ of unlabeled data in $\w[\mathcal{D}]$; 
    
    (S3) Train the neural network  by minimizing (\ref{eqn: empirical_risk_function}) 
    via $T$-step  mini-batch gradient descent method using disjoint subsets $\{\mathcal{D}_t\}_{t=0}^{T-1}$ and $\{\widetilde{\mathcal{D}}_t\}_{t=0}^{T-1}$ of $\w[\mathcal{D}]$. Let $\W[\ell+1]$ denote the obtained student model; 
    
    (S4) Use $\W[\ell+1]$ as the current teacher model. Let $\ell \leftarrow \ell+1$ and go back to step (S2);}
\end{mybox}

\textbf{Iterative self-training.} In each iteration, given the current teacher predictor $g(\W[\ell])$, 
the pseudo-labels for the unlabeled data in $\w[\mathcal{D}]$ are computed as $\tilde{y}_m = g(\W[\ell];\w[\bfx]_m)$. 
The method then minimizes  the weighted empirical risk  $\hat{f}_{\mathcal{D},\w[\mathcal{D}]}(\bfW)$
of both labeled and unlabeled data through stochastic gradient descent, where
\begin{equation}\label{eqn: empirical_risk_function}
\vspace{-1mm}
    \hat{f}_{\mathcal{D},\w[\mathcal{D}]}(\bfW) = \frac{\lambda}{2N}\sum_{n=1}^N\big(y_n - g(\bfW;\bfx_n) \big)^2 + \frac{\w[\lambda]}{2M}\sum_{m=1}^M \big(\w[y]_m - g(\bfW;\w[\bfx]_m) \big)^2, 
\end{equation}
and $\lambda+\w[\lambda]= 1$. The learned student model $g(\W[\ell+1])$   is used as the teacher model in the next iteration. The initial model $g(\W[0])$ is learned from labeled data. 
The formal algorithm is summarized as in Table 1.  

    
    
    

\textbf{Model and assumptions.} 
This paper considers regression\footnote{\Revise{The results can be extended to binary classification with a cross-entropy loss function. Please see Appendix-I.}}, where $g$ is a one-hidden-layer fully connected neural network equipped with $K$ neurons. Namely, given the input $\bfx\in\mathbb{R}^d$ and weights $\bfW = [\bfw_1, \bfw_2, \cdots, \bfw_K]\in \mathbb{R}^{d\times K}$, we have
\vspace{-2mm}
\begin{equation}\label{eqn: teacher_model}
\vspace{-2mm}
    g(\bfW;\bfx) : = \frac{1}{K}\sum_{j=1}^K \phi(\bfw_j^{T}\bfx),
\end{equation}
where $\phi$ is the ReLU activation function\footnote{\Revise{Because  ReLU  is non-linear and non-smooth,  \eqref{eqn: empirical_risk_function} is  non-convex  and non-smooth, which poses analytical challenges. The results can be easily extended to smooth functions with bounded gradients, e.g., Sigmoid.}},  
and $\phi(z) = \max\{ z, 0 \}$ for any input $z\in \R$. \Revise{Here, we fix the top layer weights as $1$ for simplicity, and   
the equivalence  of such a simplification 
is discussed in Appendix \ref{sec: two_layer}.}

Moreover, we assume an unknown ground-truth model with weights $\bfW^*$ that maps all the features to the corresponding labels drawn from $P_l$, i.e., $y = g(\bfW^*;\bfx)$, where $(\bfx, y)\sim P_l$. 
The generalization function (GF) with respect to $g(\bfW)$  is defined as 
\vspace{-1mm}
\begin{equation}\label{eqn: GF}
\vspace{-1mm}
    I\big(g(\bfW)\big) = \mathbb{E}_{(\bfx, y)\sim P_l }\big(y - g(\bfW;\bfx)\big)^2 = \mathbb{E}_{(\bfx, y)\sim P_l }\big( g(\bfW^*;\bfx) - g(\bfW;\bfx)\big)^2.
\end{equation}
By definition $I\big(g(\bfW^*)\big)$ is zero. 
Clearly, $\bfW^*$ is not unique because any column permutation of $\bfW^*$, which corresponds to permuting neurons, represents the same function as $\bfW^*$ and minimizes GF
in \eqref{eqn: GF} too. To simplify the representation, we follow the convention and abuse the notation that 
the distance from $\bfW$ to $\bfW^*$, denoted by  $\|\bfW-\bfW^*\|_F$, means the smallest distance from $\bfW$ to any permutation of $\bfW^*$.  Additionally, some important notations are summarized in Table \ref{table:problem_formulation}.

We assume the inputs of both the labeled and unlabeled data belong to the zero mean Gaussian distribution,  
i.e., $\bfx \sim\mathcal{N}(0, \delta^2\bfI_{d})$, and $\w[\bfx]\sim \mathcal{N}(0, \de^2\bfI_d)$.  
{The Gaussian assumption is motivated by the data whitening \citep{LBOM12} and batch normalization techniques \citep{IS15} that are commonly used in practice to improve learning performance. 
Moreover, training one-hidden-layer neural network with multiple neurons is NP-Complete \citep{BR92} without any  assumption.}
 
\textbf{The focus of this paper.} 
This paper will analyze three aspects about self-training:  
(1) the generalization performance of $\W[L]$, the returned model by self-training after $L$ iterations, measured by $\|\W[L]-\bfW^*\|_F$\footnote{We use this metric because  $I\big(g(\bfW)\big)$ is shown to be linear in $\|\W[L]-\bfW^*\|_F$ numerically when $\W[L]$ is close to  $\bfW^*$, see Figure \ref{fig:GF_1M}.}; (2) \Revise{the influence of parameter $\lambda$ in \eqref{eqn: empirical_risk_function} on the training performance;} and (3) the impact of unlabeled data on the training and generalization performance.
\AtBeginEnvironment{tabular}{\small}
\setcounter{table}{1}
\begin{table}[h]
    \centering
    \vspace{-2mm}
    \caption{Some Important Notations}
    \vspace{-2mm}
    \begin{tabular}{c|p{11cm}}
    \hline
    \hline
    $\mathcal{D} =\{ \bfx_n, \bfy_n \}_{n=1}^N$ & Labeled dataset with $N$  number of samples;\\
    \hline
    $\widetilde{\mathcal{D}} =\{ \widetilde{\bfx}_m \}_{m=1}^M$ & Unlabeled dataset with $M$  number of samples;\\
    \hline
    $d$ & Dimension of the input $\bfx$ or $\widetilde{\bfx};$
    \\
    \hline
    $K$ & Number of neurons in the hidden layer;
    \\
    \hline
    $\kappa$ & Conditional number (the ratio of the largest and smallest singular values) of $\bfW^*$;
    \\
    \hline
    $\W[\ell]$ & Model returned by self-training after $\ell$ iterations; $\W[0]$ is the initial model;\\
    \hline
    $\bfW^*$ & Weights of the ground truth model;\\
    \hline
    $\WL$ & $\WL = \hat{\lambda} \bfW^* + (1-\hat{\lambda})\W[0]$;\\
    \hline
    \hline
    \end{tabular}
    \label{table:problem_formulation}
    \vspace{-4mm}
\end{table}

\section{Theoretical results}\label{sec: theoretical}


\paragraph{{Beyond supervised learning: Challenge of self-training.}}
The existing theoretical works such as  \citep{ZSJB17,ZWLC20_1,ZWLC20_2,ZWLC20_3} verify that for one-hidden-layer neural networks, if only labeled data are available, and $\bfx$ are drawn from the standard Gaussian distribution, then supervised learning by minimizing \eqref{eqn: empirical_risk_function} with $\lambda=1$ 
 can return a model with ground-truth weights $\bfW^*$ (up to column permutation), as long as  the number of labeled data $N$ is at least  \Revise{$N^*$, which  depends on $\kappa, K$ and $d$}. In contrast, this paper focuses 
on the \textbf{low labeled-data regime} when $N$ is less than $N^*$. 
Specifically,
\begin{wrapfigure}{r}{0.47\linewidth}
    \centering
    \vspace{-1mm}
    \includegraphics[width=0.80
    \linewidth]{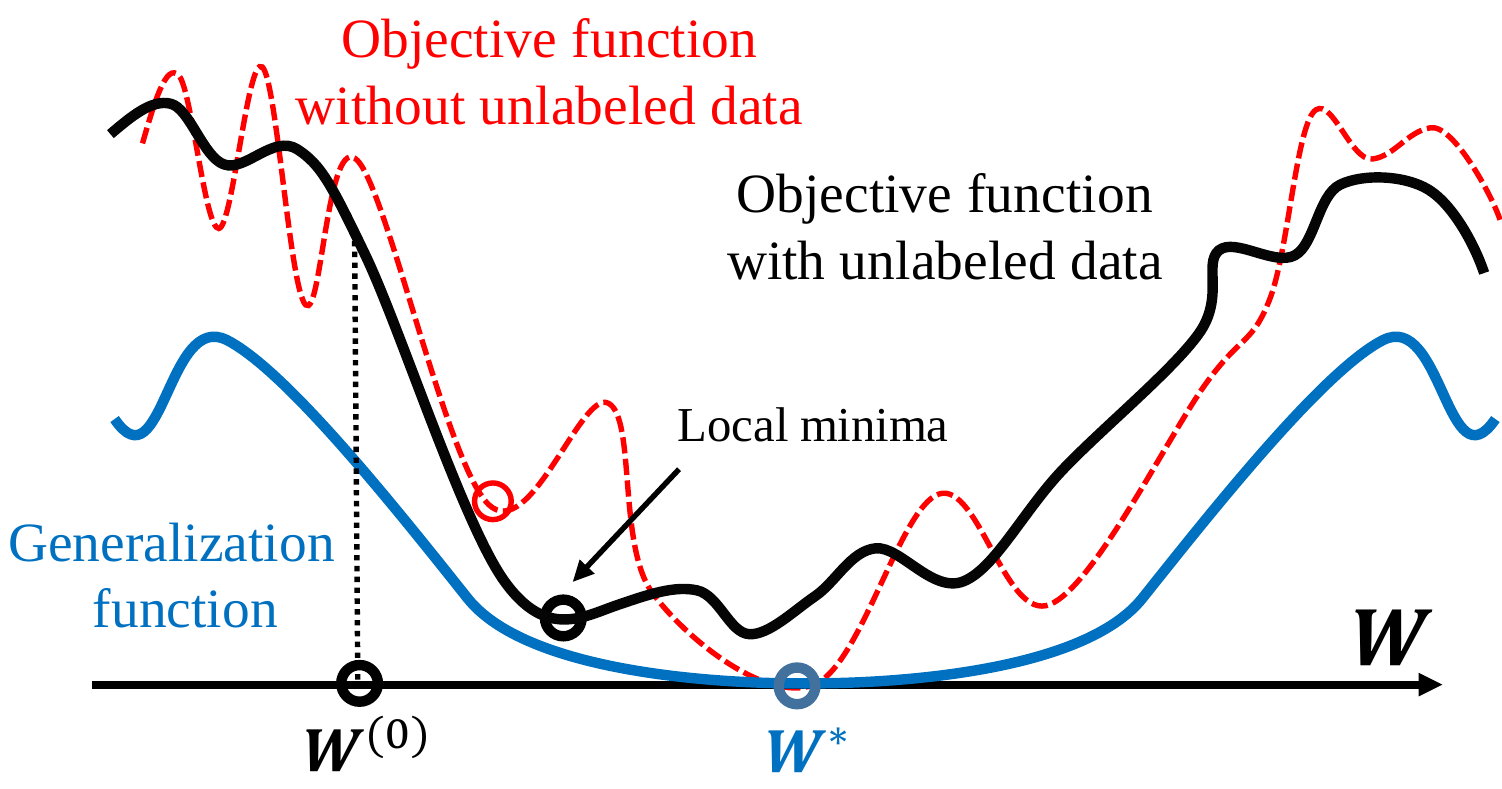}
    \vspace{-1mm}
     \caption{Adding unlabeled data in the empirical risk function drives its local minimum closer to $\bfW^*$, which minimizes the generalization function.}
    \label{fig: landscape_1}
\end{wrapfigure}
\begin{equation}\label{eqn: definition_N*}
  N^* /4< N \leq N^*. 
\end{equation}%
 
Intuitively, if $N<N^*$, the landscape of the empirical risk of the labeled data becomes highly nonconvex, even in a neighborhood   of $\bfW^*$, thus, the existing analyses for supervised learning do not hold in this region.  With additional unlabeled data, the landscape of the weighted empirical risk becomes smoother near  $\bfW^*$. Moreover, as $M$ increases, and starting from a nearby initialization, the returned model $\W[L]$ by iterative self-training  can converge to a local minimum that is closer to $\bfW^*$ (see  illustration in Figure \ref{fig: landscape_1}). 


Compared with supervised learning, the formal analyses of self-training need to handle new technical challenges from two aspects. First, the existing analyses of supervised learning exploit the fact that   the GF and the empirical risk have the same minimizer, i.e., $\bfW^*$. This property does not hold for self-training as $\bfW^*$ no longer minimizes the weighted empirical risk in (\ref{eqn: empirical_risk_function}). Second, the iterative manner of self-training complicates the analyses. Specifically, the empirical risk   in each iteration is different and depends on the model trained in the previous iteration through the pseudo labels.  

{In what follows, we   provide theoretical insights and the formal theorems. 
Some important quantities $\hat{\lambda}$ and $\mu$  are defined below 
\vspace{-1mm}
\begin{equation}\label{eqn: definition_lambda}
    \hat{\lambda} : =\frac{\lambda\delta^2}{\lambda\delta^2+\w[\lambda]\de^2},
    \text{\quad and \quad}
    \mu
    = \mu( \delta, \de) 
    := \sqrt{\frac{\lambda\delta^2+\w[\lambda]\de^2}{\lambda\rho(\delta)+\w[\lambda]\rho(\de)}},
    \vspace{-1mm}
\end{equation}
 where $\rho$ is a positive function defined in \eqref{eqn: rho}.
 $\hat{\lambda}$ is an increasing function of $\lambda$. Also,
 from  Lemma \ref{lemma: rho_order} (in Appendix),   $\rho(\delta)$ is in the order of $\delta^2$ when $\delta\leq 1$   for ReLU activation functions. Thus,   $\mu$ is a fixed constant, denoted by $\mu^*$, for all $\delta, \de \leq 1$. 
 When $\delta$ and $\de$ are large,   $\mu$ increases as they increase.} \Revise{The formal definition of $N^*$ in \eqref{eqn: definition_N*} is $c(\kappa)\mu^{*2}K^3d\log q$, where $c(\kappa)$ is some polynomial function of $\kappa$ and can be viewed as constant.}


\subsection{Informal key theoretical findings}

 \begin{wrapfigure}{r}{0.49\textwidth}
    \centering
    \vspace{-10mm}
    \includegraphics[width=1.00\linewidth]{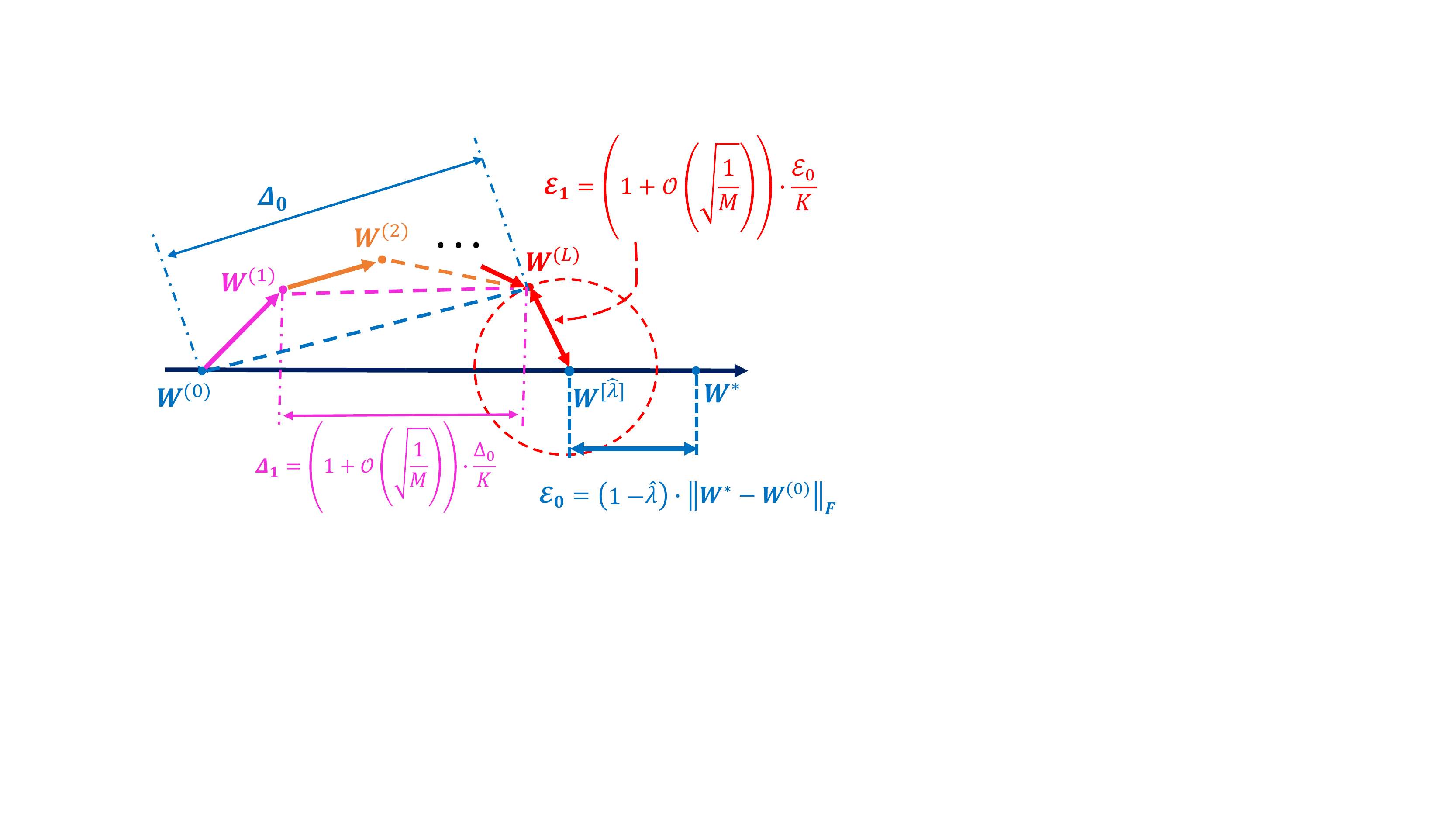}
    \vspace{-2mm}
    \caption{Illustration of the (1) ground truth $\bfW^*$, (2) iterations $\{ \W[\ell] \}_{\ell=0}^L$, (3) convergent point $\W[L]$, and (4) $\WL = \hat{\lambda}\bfW^*+(1-\hat{\lambda})\W[0]$.}
    \label{fig: Thm2_region}
    \vspace{-5mm}
\end{wrapfigure}
To the best of our knowledge,   Theorems \ref{Thm: p_convergence} and \ref{Thm: sufficient_number} provide the first  theoretical characterization of iterative self-training on nonlinear neural networks. Before formally presenting them, we summarize the highlights as follows. 

\textbf{1. Linear convergence of the learned models.} 
The learned models 
converge linearly to a model close to $\bfW^*$. 
Thus, the iterative approach returns a model with better generalization than that by the single-shot method. Moreover,    the convergence rate is a constant term plus a term in the order of $1/\sqrt{M}$ (see $\Delta_1$ in Figure \ref{fig: Thm2_region}), indicating a faster convergence with more unlabeled data. 

\textbf{2. Returning a model with guaranteed generalization   in the low labeled-data regime.}  Even when the number of labeled data is much less than the required sample complexity to obtain $\bfW^*$ in supervised learning, we prove that with the help of unlabeled data, 
 the iterative self-training can return a model in 
 the neighborhood 
 of $\WL$, where $\WL$ is in the line segment of  $\W[0]$ ($\hat{\lambda}=0$) and ground truth $\bfW^*$ ($\hat{\lambda}=1$). 
Moreover, 
$\hat{\lambda}$ is upper bounded by $\sqrt{N/N^*}$. Thus $\W[L]$ moves closer to $\bfW^*$ as $N$ increases ($\mathcal{E}_0$ in Figure \ref{fig: Thm2_region}), indicating a better generalization performance with more labeled data. 



\textbf{3. Guaranteed generalization improvement by unlabeled data.} 
The distance between $\W[L]$ and $\WL$ ($\mathcal{E}_1$ in Figure \ref{fig: Thm2_region})
scales in the order of $1/\sqrt{M}$. With a larger number of unlabeled data $M$, $\W[L]$ moves closer to $\WL$ and thus $\bfW^*$,  indicating an improved generalization performance 
(Theorem \ref{Thm: p_convergence}).
When $N$ is close to $N^*$ but still smaller as defined in (\ref{eqn: main_sample}), 
 both $\W[L]$ and $\WL$ converge to $\bfW^*$, and thus the learned model achieves zero generalization error (Theorem \ref{Thm: sufficient_number}).  

\subsection{Formal theory in low labeled-data regime}


{\textit{Takeaways of Theorem\,1:}
Theorem 1 characterizes the convergence rate of the proposed algorithm and the accuracy of the learned model $\W[L]$ 
in a low labeled-data regime. Specifically, the iterates converge linearly, and the learned model is close to  $\WL$ and guaranteed to outperform the initial model $\bfW^{(0)}$.} 


\begin{theorem}\label{Thm: p_convergence}
\Revise{Suppose the initialization  $\bfW^{(0)}$ and the number of labeled data  satisfy 
    \begin{equation}\label{eqn: definition_p}
        \| \bfW^{(0)} -\bfW^* \|_F \le  p^{-1}\cdot \frac{\|\bfW^*\|_F}{c(\kappa) \mu^2 K^{3/2}} \quad \text{with} \quad p\in \big(\frac{1}{2}, 1\big],
    \end{equation}
    \begin{equation} \label{eqn:Nbound}
\hspace{-1in} \textrm{ and }\quad         \max\Big\{ \frac{1}{K}, p - \frac{2p-1}{\mu\sqrt{K}}\Big\}^2 \cdot N^* \le N \le N^*. 
    \end{equation}
If the value of $\hat{\lambda}$ in \eqref{eqn: definition_lambda} and unlabeled data amount $M$ satisfy
    \begin{equation}\label{eqn: thm1_condition1}
        \max\Big\{ \frac{1}{K}, p - \frac{2p-1}{\mu\sqrt{K}}\Big\}\le\hat{\lambda} \le \min \Big\{\sqrt{\frac{N}{N^*}}, p+ \frac{2p-1}{\mu\sqrt{K}}\Big\},
    \end{equation}
    \begin{equation}\label{eqn: thm1_condition2}
   \hspace{-1in}     \text{ and } \quad  M\ge (2p-1)^{-2} c(\kappa) \mu^2 \big( 1-  \hat{\lambda}\big)^2 K^3 d\log q.
    \end{equation}
    Then, when the number $T$ of SGD iterations  is large enough in each loop $\ell$, with probability at least $1-q^{-d}$,
     the iterates $\{ \bfW^{(\ell)} \}_{\ell=0}^{L}$ converge to $\WL$ as 
    \begin{equation}\label{eqn: p_convergence}
 \resizebox{.96\hsize}{!}{$\| \W[L]-\WL \|_F 
            \le \bigg(\Big(1+\Theta\big(\frac{\mu(1-\hat{\lambda})}{\sqrt{M}}\big)\Big)\cdot \hat{\lambda} \bigg)^L \cdot \| \bfW^{(0)}-\WL \|_2  +\Big(1+\Theta\big(\frac{\mu(1-\hat{\lambda})}{\sqrt{M}}\big)\Big) \cdot \| \bfW^* - \WL \|_F,$}
    \end{equation}
    where $\WL = \hat{\lambda}\bfW^* + (1-\hat{\lambda})\W[0]$. Typically, when the iteration number $L$ is sufficient large, we have
    \begin{equation}\label{eqn: p_convergence_*}
         \begin{split}
        \vspace{-1mm}
             \| \W[L]-\bfW^* \|_F 
            \le \Big(1+\Theta\big(\frac{\mu(1-\hat{\lambda})}{\sqrt{M}}\big)\Big) \cdot 2(1-\hat{\lambda})\cdot \| \bfW^* - \bfW^{(0)} \|_F.
         \end{split}
    \end{equation}}
    \vspace{-1mm}
\end{theorem}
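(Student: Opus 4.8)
The plan is to reduce the whole trajectory to the analysis of a single self-training loop and then unroll a scalar recursion. Fix a loop index $\ell$ and condition on the teacher $\bfW^{(\ell)}$. Because the pseudo-labels freeze the unlabeled target at $\bfW^{(\ell)}$, the risk in (\ref{eqn: empirical_risk_function}) is, in expectation, a weighted sum of two regression losses whose individual minimizers are $\bfW^*$ and $\bfW^{(\ell)}$. The first tool I would use is the positive homogeneity of ReLU, $g(\bfW;\delta\bfz)=\delta\,g(\bfW;\bfz)$, which lets me rewrite both population terms as a single standard-Gaussian expectation carrying the scalars $\lambda\delta^2$ and $\widetilde{\lambda}\de^2$. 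Minimizing the resulting pointwise quadratic $\lambda\delta^2(v-g(\bfW^*;\bfz))^2+\widetilde{\lambda}\de^2(v-g(\bfW^{(\ell)};\bfz))^2$ over the network output $v$ shows that the per-step population minimizer is, to leading order, $\hat\lambda\bfW^*+(1-\hat\lambda)\bfW^{(\ell)}$, with $\hat\lambda$ exactly the mixing weight in (\ref{eqn: definition_lambda}). This is precisely why the fixed reference $\WL=\hat\lambda\bfW^*+(1-\hat\lambda)\bfW^{(0)}$ (the single-shot target) governs the whole analysis.

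Next I would upgrade this to an exact statement about the empirical minimizer $\bfW^{(\ell+1)}$ returned by the inner loop, by showing the per-step risk is locally strongly convex on a neighborhood of $\WL$. I would compute the weighted population Hessian $\lambda\delta^2\bfH_{\mathrm{lab}}+\widetilde{\lambda}\de^2\bfH_{\mathrm{unl}}$ and lower-bound its smallest eigenvalue using the second-moment identities for Gaussian ReLU networks, where the effective conditioning is controlled by $\kappa$ and by $\mu$ from (\ref{eqn: definition_lambda}). The delicate point is the low-labeled-data regime (\ref{eqn:Nbound}): there $\bfH_{\mathrm{lab}}$ alone is indefinite, so positive-definiteness can come only from the unlabeled block, and the lower bounds on $\hat\lambda$ and on $M$ in (\ref{eqn: thm1_condition1})--(\ref{eqn: thm1_condition2}) are exactly what force the weighted Hessian to stay uniformly positive-definite. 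I would then transfer strong convexity and the gradient from population to empirical risk by a covering-net plus matrix-Bernstein argument (treating the ReLU kinks on a measure-zero set), incurring deviations of order $1/\sqrt M$ (unlabeled) and $1/\sqrt N$ (labeled); the threshold $M\ge(2p-1)^{-2}c(\kappa)\mu^2(1-\hat\lambda)^2K^3 d\log q$ is what keeps the empirical Hessian positive-definite over the whole neighborhood with probability $1-q^{-d}$. Local strong convexity together with restricted smoothness then makes the inner $T$-step mini-batch gradient descent converge linearly to the unique local minimizer, so for $T$ large its optimization error is negligible and $\bfW^{(\ell+1)}=\hat\lambda\bfW^*+(1-\hat\lambda)\bfW^{(\ell)}+\boldsymbol{\epsilon}_\ell$ with $\|\boldsymbol{\epsilon}_\ell\|=\Theta\big(\tfrac{\mu(1-\hat\lambda)}{\sqrt M}\big)\cdot\|\bfW^{(\ell)}-\bfW^*\|_F$.

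With the one-step estimate in hand, I would subtract the fixed reference $\WL$ and pass to norms. Writing $u_\ell=\|\bfW^{(\ell)}-\WL\|_F$ and applying the triangle inequality through $\bfW^*$ converts the one-step estimate into a scalar recursion of the form $u_{\ell+1}\le\gamma\,u_\ell+c\,\|\bfW^*-\WL\|_F$, where the constraints (\ref{eqn: thm1_condition1}) control the geometric factor $\gamma$ and $c=1+\Theta(\tfrac{\mu(1-\hat\lambda)}{\sqrt M})$. Unrolling this recursion over the $L$ loops reproduces exactly the two terms of (\ref{eqn: p_convergence}): the spectral-norm initial term arising from the operator norm of the linearized per-step map, and the non-decaying floor arising from $\|\bfW^*-\WL\|_F$. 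Letting $L\to\infty$ so the first term vanishes, substituting $\|\bfW^*-\WL\|_F=(1-\hat\lambda)\|\bfW^*-\bfW^{(0)}\|_F$, and using one more triangle inequality $\|\bfW^{(L)}-\bfW^*\|_F\le\|\bfW^{(L)}-\WL\|_F+\|\WL-\bfW^*\|_F$ then yields (\ref{eqn: p_convergence_*}).

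The hard part will be the second step, carried out \emph{simultaneously across all $L$ loops}. Because the pseudo-label target $\bfW^{(\ell)}$ moves at every iteration, the objective, its minimizer, and the very neighborhood on which strong convexity holds all drift; I must therefore prove an invariance statement --- that the iterates never leave the region where (\ref{eqn:Nbound})--(\ref{eqn: thm1_condition2}) guarantee a benign landscape --- and control that the independent $1/\sqrt M$ fluctuations from the disjoint unlabeled subsets do not accumulate over loops. Combined with the ReLU nonsmoothness, which rules out a naive Hessian argument and forces the covering/concentration route, this coupling between a moving target and a data-dependent region of strong convexity is the genuine obstacle that supervised-learning landscape analyses never have to confront.
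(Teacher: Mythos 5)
Your overall architecture --- characterizing the per-loop population minimizer as $\hat{\lambda}\bfW^*+(1-\hat{\lambda})\bfW^{(\ell)}$, establishing local strong convexity, transferring to the empirical risk by concentration, and unrolling an outer recursion --- parallels the paper's proof (which instead anchors a population risk $f(\cdot\,;p)$ at the \emph{fixed} point $\Wp=p\bfW^*+(1-p)\bfW^{(0)}$ and applies the mean value theorem there), and your identification of the $\hat{\lambda}$-weighted combination is exactly the content of the paper's computation $\mathbb{E}_{\bfx}[(\phi(\bfw_j^{*T}\bfx)-\phi(\bfw_j^{[p]T}\bfx))\bfx]=\tfrac{\delta^2}{2}(\bfw_j^*-\bfw_j^{[p]})$ inside Lemma \ref{Lemma: p_first_order_derivative}. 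The gap is in your one-step error estimate. You assert $\bfW^{(\ell+1)}=\hat{\lambda}\bfW^*+(1-\hat{\lambda})\bfW^{(\ell)}+\boldsymbol{\epsilon}_\ell$ with $\|\boldsymbol{\epsilon}_\ell\|=\Theta\big(\tfrac{\mu(1-\hat{\lambda})}{\sqrt M}\big)\cdot\|\bfW^{(\ell)}-\bfW^*\|_F$, so that only the unlabeled fluctuation survives. But the labeled empirical gradient deviates from its population counterpart by $\Theta\big(\tfrac{\lambda\delta^2}{K}\sqrt{d\log q/N}\big)\cdot\|\bfW-\bfW^*\|$ (the first term in Lemma \ref{Lemma: p_first_order_derivative}), and in the regime \eqref{eqn: definition_N*} with $N\le N^*$ this deviation is of the \emph{same order} as the strong-convexity modulus: it displaces the empirical minimizer by a constant fraction of $\|\bfW^{(\ell)}-\bfW^*\|$, not by $o(1)$. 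Unlike $M$, the quantity $N$ cannot be taken large, so this term never becomes negligible; the constraint $\hat{\lambda}\le\sqrt{N/N^*}$ in \eqref{eqn: thm1_condition1}, which your argument never invokes, is precisely what caps its contribution.

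The omission is not cosmetic, because it changes the conclusion. With your error term, subtracting $\WL$ from the one-step map gives $u_{\ell+1}\le(1-\hat{\lambda})u_\ell+(1-\hat{\lambda})\hat{\lambda}\|\bfW^*-\bfW^{(0)}\|_F+\|\boldsymbol{\epsilon}_\ell\|$, i.e.\ a geometric rate $1-\hat{\lambda}$, and the same map applied to $\|\bfW^{(\ell)}-\bfW^*\|_F$ contracts to zero: you would be proving exact recovery of $\bfW^*$ (the conclusion of Theorem \ref{Thm: sufficient_number}) under the weaker hypotheses of Theorem \ref{Thm: p_convergence}, with rate $1-\hat{\lambda}$ where \eqref{eqn: p_convergence} has rate $\hat{\lambda}$. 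Both the $\hat{\lambda}$ in the paper's contraction factor and the non-removable floor $\|\bfW^*-\WL\|_F$ originate from this non-concentrating labeled term, whose size after normalizing by the Hessian lower bound is roughly $\hat{\lambda}\sqrt{N^*/N}=\Theta(1)$. Your closing discussion locates the main difficulty in the drifting pseudo-label target and the invariance of the benign region; that is a real issue (the paper sidesteps it by fixing the anchor $\Wp$ for all loops), but it is secondary to carrying the $1/\sqrt{N}$ labeled fluctuation through the recursion. Reinstating that term in $\boldsymbol{\epsilon}_\ell$ and controlling it via $\hat{\lambda}\le\sqrt{N/N^*}$ would bring your recursion back in line with the paper's \eqref{eqn:p_outer_loop}--\eqref{eqn:p_outer_loop_2}.
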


The accuracy of the learned model $\W[L]$ with respect to $\bfW^*$ is characterized as
\eqref{eqn: p_convergence}, and the learning model is better than initial model as in
\eqref{eqn: p_convergence_*} if the following conditions hold.
First, the weights $\lambda$ in \eqref{eqn: empirical_risk_function} are properly chosen as in \eqref{eqn: thm1_condition1}. Second, the number of unlabeled data is sufficiently large as in \eqref{eqn: thm1_condition2}. 

\Revise{
 \textbf{Selection of $\lambda$ in self-training algorithms}.   When $\hat{\lambda}$ increases, the required number of unlabeled data  is reduced from  \eqref{eqn: thm1_condition2}, and  the convergence point $\bfW^{(L)}$ becomes closer to $\bfW^*$ from \eqref{eqn: p_convergence_*}, which indicates a smaller generalization error. Thus, a large  $\hat{\lambda}$ within its feasible range \eqref{eqn: thm1_condition1} is desirable.  When the initial model $\bfW^{(0)}$  is closer to  $\bfW^*$ (corresponding to a larger $p$), and the number of labeled data $N$ increases, the upper bound in \eqref{eqn: thm1_condition1} increases, and thus, one can select a larger $\hat{\lambda}$.}

\Revise{
\textbf{The initial model $\bfW^{(0)}$}.  The 
tensor initialization from \citep{ZSJB17} can return  a  $\bfW^{(0)}$ that satisfies \eqref{eqn: definition_p} when the number of labeled data is $N = p^2N^*$ (see Lemma \ref{Thm: p_initialization} in Appendix). Combining with the requirement in \eqref{eqn:Nbound}, Theorem 1 applies to the case that $N$ is at least $N^*/4$.}

\subsection{Formal theory of achieving  zero generalization error}
\textit{Takeaways of Theorem\,2:} 
\Revise{Theorem \ref{Thm: sufficient_number} indicates the model returned by the proposed algorithm converges linearly to the ground truth $\bfW^*$. Thus the distance between the learned model and the ground truth can be arbitrarily small with the ability to achieve zero generalization error. The required sample complexity is reduced by a constant factor compared with supervised learning.}

\begin{theorem}\label{Thm: sufficient_number}
Consider the number of unlabeled data satisfies
   \begin{equation}\label{eqn: main_sample}
    \begin{split}
    \big(1 - {1}/{(\mu\sqrt{K})} \big)^2\cdot N^* \leq    N \leq    N^*,
    \end{split}
    \end{equation}
    we choose $\hat{\lambda}$ such that
    \begin{equation}\label{eqn: lambda_hat_s}
   1 - {1}/{(\mu\sqrt{K})} \leq     \hat{\lambda} \leq \sqrt{{N}/{N^*}}. 
    \end{equation}
Suppose the initial model $\bfW^{(0)}$  and the number of unlabeled data $M$ satisfy
\begin{equation}\label{eqn: Thm2_condition}
    \| \bfW^{(0)} -\bfW^* \|_F  \le \frac{\|\bfW^*\|_F}{c(\kappa)\mu^{2}K^{3/2}} \quad \text{and}\quad  M \geq c(\kappa)\mu^{2}(1-\hat{\lambda})^2 K^3 d\log q,
\end{equation}
    the iterates $\{ \bfW^{(\ell)} \}_{\ell=0}^{L}$ converge to the ground truth $\bfW^*$  as  follows,
    \begin{equation}\label{eqn: convergence}
    \vspace{-1mm}
         \| \W[L] -\bfW^* \|_F \le \Big[ \big(1+\frac{c(\kappa)\hat{\lambda}}{\sqrt{N}}+\frac{c(\kappa)(1-\hat{\lambda})}{\sqrt{M}}\big)\cdot\mu\sqrt{K}(1-\hat{\lambda}) \Big]^L 
        \cdot \| \W[0]-\bfW^* \|_F.
    \vspace{-1mm}
    \end{equation}
\end{theorem}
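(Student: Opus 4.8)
The plan is to reduce the convergence of the whole iterative scheme to a single outer-loop contraction and then iterate it $L$ times. Concretely, I would prove by induction on $\ell$ that every iterate stays inside the basin $\| \bfW^{(\ell)} - \bfW^* \|_F \le \|\bfW^*\|_F/(c(\kappa)\mu^2 K^{3/2})$ guaranteed for $\ell=0$ by \eqref{eqn: Thm2_condition}, and that one pass of steps (S2)--(S3) of Table~1 yields
\[
\| \bfW^{(\ell+1)} - \bfW^* \|_F \le \Big( 1 + \tfrac{c(\kappa)\hat{\lambda}}{\sqrt{N}} + \tfrac{c(\kappa)(1-\hat{\lambda})}{\sqrt{M}} \Big)\, \mu\sqrt{K}\,(1-\hat{\lambda})\,\| \bfW^{(\ell)} - \bfW^* \|_F .
\]
Composing this identical bound over $\ell = 0,\dots,L-1$ gives exactly \eqref{eqn: convergence}. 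The lower bound on $\hat{\lambda}$ in \eqref{eqn: lambda_hat_s} forces $1-\hat{\lambda}\le 1/(\mu\sqrt{K})$, so the leading factor $\mu\sqrt{K}(1-\hat{\lambda})\le 1$; together with $N,M$ large as in \eqref{eqn: main_sample} and \eqref{eqn: Thm2_condition} the whole per-step factor is strictly below one. This both drives the iterates to $\bfW^*$ (hence zero generalization error) and keeps them inside the basin, closing the induction.

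The heart of the argument is the one-iteration estimate, which I would obtain by first analyzing the population version of \eqref{eqn: empirical_risk_function} with the pseudo-labels frozen at $\bfW^{(\ell)}$. Using ReLU positive-homogeneity to pull the variances $\delta,\de$ out of the expectations, the population risk becomes $\tfrac12\mathbb{E}_{\bfz\sim\mathcal{N}(0,\bfI)}\big[\lambda\delta^2\big(g(\bfW^*;\bfz)-g(\bfW;\bfz)\big)^2 + \widetilde{\lambda}\de^2\big(g(\bfW^{(\ell)};\bfz)-g(\bfW;\bfz)\big)^2\big]$, whose normalized labeled weight is precisely $\hat{\lambda}$ of \eqref{eqn: definition_lambda}. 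Writing $R_{\bfV}(\bfW)=\tfrac12\mathbb{E}_{\bfz}(g(\bfV;\bfz)-g(\bfW;\bfz))^2$ and imposing first-order optimality at the local minimizer $\bfW^{\dagger}$, I would expand both gradient terms, using $\nabla R_{\bfW^*}(\bfW^*)=0$ and $\nabla R_{\bfW^{(\ell)}}(\bfW^{(\ell)})=0$, to reach $\big[\hat{\lambda}H_{\bfW^*}+(1-\hat{\lambda})H_{\bfW^{(\ell)}}\big](\bfW^{\dagger}-\bfW^*) = (1-\hat{\lambda})H_{\bfW^{(\ell)}}(\bfW^{(\ell)}-\bfW^*)$, where $H_{\bfV}$ is the Hessian-type operator of the ReLU--Gaussian risk near $\bfV$. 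Inverting the left operator and bounding $\big\|[\hat{\lambda}H_{\bfW^*}+(1-\hat{\lambda})H_{\bfW^{(\ell)}}]^{-1}H_{\bfW^{(\ell)}}\big\|$ by the spectral conditioning of these Hessians — governed by the closed-form second-moment formulas for ReLU activations, which is exactly where the activation conditioning $\mu$ and the neuron count $K$ enter, with $\kappa$-dependence absorbed into $c(\kappa)$ — produces $\|\bfW^{\dagger}-\bfW^*\|_F \lesssim \mu\sqrt{K}(1-\hat{\lambda})\|\bfW^{(\ell)}-\bfW^*\|_F$. The linearization remainder is higher order in $\|\bfW^{(\ell)}-\bfW^*\|_F$ and is swallowed by the constants.

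To pass from this population claim to the finite-sample per-step bound I would then (i) establish uniform concentration of the empirical gradient and Hessian of \eqref{eqn: empirical_risk_function} to their population counterparts over the current ball, which simultaneously yields the sample thresholds on $N$ and $M$ in \eqref{eqn: main_sample}, \eqref{eqn: Thm2_condition} and the additive deviations of order $\hat{\lambda}/\sqrt{N}$ (labeled) and $(1-\hat{\lambda})/\sqrt{M}$ (unlabeled); (ii) combine the resulting restricted strong convexity and smoothness to place the empirical minimizer within these deviations of $\bfW^{\dagger}$; and (iii) invoke the inner-loop SGD guarantee so that, for $T$ large, $\bfW^{(\ell+1)}$ is within negligible distance of the empirical minimizer, the disjointness of $\{\mathcal{D}_t\},\{\widetilde{\mathcal{D}}_t\}$ across the $T$ steps keeping the stochastic gradients independent for a clean martingale-type bound. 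These three deviations are what inflate the clean factor $\mu\sqrt{K}(1-\hat{\lambda})$ into the bracketed per-step factor above.

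The main obstacle I anticipate is the interplay of steps (i)--(ii) with the minimizer-displacement estimate. The local strong convexity of the \emph{weighted} empirical risk must be certified uniformly over a ball whose radius shrinks geometrically along the iteration, and the amplification of the pseudo-label error $\bfW^{(\ell)}-\bfW^*$ through the two distinct Hessians $H_{\bfW^*}$ and $H_{\bfW^{(\ell)}}$ must be tracked sharply enough to give the factor $\mu\sqrt{K}$ rather than a looser $\mu^2K$. The non-smoothness of ReLU, which makes $H_{\bfV}$ only meaningful in an expected sense and forces the gradient/Hessian concentration to contend with the discontinuity set of the activation, is precisely what makes this delicate; this is also where the reduction benefits from reusing the landscape machinery underlying Theorem \ref{Thm: p_convergence}, specialized to the regime $\hat{\lambda}\ge 1-1/(\mu\sqrt{K})$.
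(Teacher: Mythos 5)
Your top-level plan---a per-outer-iteration contraction with factor $\big(1+\tfrac{c(\kappa)\hat{\lambda}}{\sqrt{N}}+\tfrac{c(\kappa)(1-\hat{\lambda})}{\sqrt{M}}\big)\mu\sqrt{K}(1-\hat{\lambda})$, composed over $\ell=0,\dots,L-1$ while an induction keeps every iterate in the initialization ball---matches the structure of the paper's proof in Appendix~\ref{sec: sufficient_number}. The gap is in the mechanism you propose for the one-iteration estimate, and it sits exactly at the point you flag as delicate. You center the population analysis at the minimizer $\bfW^{\dagger}$ of the frozen-pseudo-label risk and bound its displacement by inverting $\hat{\lambda}H_{\bfW^*}+(1-\hat{\lambda})H_{\bfW^{(\ell)}}$ against $(1-\hat{\lambda})H_{\bfW^{(\ell)}}$. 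The operator-norm bound this yields is $(1-\hat{\lambda})\,\lambda_{\max}/\lambda_{\min}$, and with the paper's spectral estimates (Lemma~\ref{Lemma: p_second_order_derivative bound}: $\lambda_{\min}\gtrsim (\lambda\rho(\delta)+\widetilde{\lambda}\rho(\de))/(\kappa^2\gamma K^2)$ and $\lambda_{\max}\lesssim(\lambda\delta^2+\widetilde{\lambda}\de^2)/K$) that condition number is $\Theta(\mu^2\kappa^2\gamma K)$---precisely the ``looser $\mu^2K$'' you mention. Under that factor the hypothesis $\hat{\lambda}\ge 1-1/(\mu\sqrt{K})$ in \eqref{eqn: lambda_hat_s} no longer forces the per-step factor below one, so the induction does not close and \eqref{eqn: convergence} is not reached. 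You correctly identify that $\mu\sqrt{K}$ is needed, but you supply no mechanism for it. The paper's mechanism is not a sharper displacement bound: it never introduces $\bfW^{\dagger}$ at all. It defines the population risk \eqref{eqn: population_risk_function} with the unlabeled part also labeled by $\bfW^*$, so that its minimizer is $\bfW^*$ itself; the pseudo-label mismatch then appears as a persistent gradient perturbation of size roughly $\tfrac{\widetilde{\lambda}\de^2}{2K}\|\bfW^{(\ell)}-\bfW^*\|_F$ (Lemma~\ref{Lemma: first_order_distance}), which is accumulated through the \emph{momentum} recursion of Algorithm~\ref{Algorithm: Alg1}. Because the heavy-ball inner loop contracts at rate $1-\Theta(\sqrt{\lambda_{\min}/\lambda_{\max}})$, the geometric accumulation divides the perturbation by $\sqrt{\lambda_{\min}\lambda_{\max}}$ rather than by $\lambda_{\min}$, and this square root of the condition number is exactly where $\mu\sqrt{K}$ (instead of $\mu^2 K$) comes from. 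Either you adopt this route, or you must prove a genuinely sharper bound on $\|[\hat{\lambda}H_{\bfW^*}+(1-\hat{\lambda})H_{\bfW^{(\ell)}}]^{-1}H_{\bfW^{(\ell)}}\|$ than the raw condition number; neither is in your proposal.

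Two secondary deviations are worth noting. First, your steps (i)--(ii) require concentration of the empirical Hessian and restricted strong convexity of the \emph{empirical} risk over the current ball; the paper deliberately avoids both, since in the low labeled-data regime the empirical landscape near $\bfW^*$ is exactly what cannot be controlled---it uses only gradient concentration (Lemma~\ref{Lemma: first_order_distance}) together with the mean value theorem applied to the smooth \emph{population} risk (Lemma~\ref{Lemma: MVT}), which sidesteps the ReLU non-smoothness you worry about. Second, routing the argument through ``the empirical minimizer'' is both unnecessary and ill-posed for the nonconvex, nonsmooth objective \eqref{eqn: empirical_risk_function}; the paper tracks the iterates $\bfW^{(\ell,t)}$ directly and takes $T\to\infty$ inside the one-step recursion, so no existence or uniqueness of an empirical minimizer is ever invoked.
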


The models $\W[\ell]$'s  converge linearly  to the ground truth $\bfW^*$ as \eqref{eqn: convergence} when the number of labeled data 
satisfies \eqref{eqn: main_sample}. In contrast, supervised learning requires at least $N^*$ labeled samples to estimate $\bfW^*$ accurately without unlabeled data, which suggests self-training at least saves a constant fraction of labeled data.

\subsection{The main proof idea}
Our proof  builds upon and extends one recent line of works on supervised learning such as \citep{ZSJB17,ZWLC20_2,ZWLC20_3,ZWLC21}. The standard framework of these works is first to show that the generalization function $I(g(\bfW))$ in \eqref{eqn: GF} is locally convex near $\bfW^*$, which is its global minimizer. Then, when $M=0$ and $N$ is sufficiently large, the empirical risk function using labeled data only can approximate  $I(g(\bfW))$ well  in the neighborhood of $\bfW^*$. Thus, if initialized in this local convex region, the iterations, returned by applying gradient descent approach on the empirical risk function, converge to $\bfW^*$ linearly.

The technical challenge here is that in self-training, when unlabeled data are paired with pseudo labels, $\bfW^*$ is no longer a global minimizer of the empirical risk $\hat{f}_{\mathcal{D},\widetilde{\mathcal{D}}}$ in \eqref{eqn: empirical_risk_function}, and $\hat{f}_{\mathcal{D},\widetilde{\mathcal{D}}}$ does not approach $I(g(\bfW))$ even when $M$ and $N$ increase to infinity. Our new idea is to design a population risk function $f(\bfW; \hat{\lambda})$ in \eqref{eqn: p_population_risk_function} (see appendix), which is a lower bound of $\hat{f}_{\mathcal{D},\widetilde{\mathcal{D}}}$ when $M$ and $N$ are infinity. $f(\bfW; \hat{\lambda})$ is locally convex around its minimizer $\WL$, and $\WL$ approaches $\bfW^*$ as $\hat{\lambda}$ increases. Then we show the iterates generated by $\hat{f}_{\mathcal{D},\widetilde{\mathcal{D}}}$ stay close to $f(\bfW; \hat{\lambda})$, and the returned model $\W[L]$ is close to $\WL$. New technical tools are developed to bound the distance between the functions $\hat{f}_{\mathcal{D},\widetilde{\mathcal{D}}}$ and $f(\bfW; \hat{\lambda})$.

\section{Empirical results}\label{sec: empirical_results}
\subsection{Synthetic data experiments}
\vspace{-1mm}
We generate a ground-truth   neural network with the width $K=10$.  Each entry of $\bfW^*$ is uniformly selected from $[-2.5, 2.5]$. 
The input of    labeled  data $\bfx_n$ 
are generated from Gaussian distribution $\mathcal{N}(0, \bfI_{d})$ independently, and the corresponding label $y_n$ is generated through \eqref{eqn: teacher_model} using $\bfW^*$. The unlabeled data 
$\w[\bfx]_m$ 
are generated from $\mathcal{N}(0, \widetilde{\delta}^2 \bfI_{d})$ independently with $\widetilde{\delta}=1$ except in Figure \ref{fig: delta}. $d$ is set as $50$ except in Figure \ref{fig: M_pt}. 
The value of $\lambda$ is selected as $\sqrt{N/(2Kd)}$ except in Figure \ref{fig: lambda}.
We consider one-hidden-layer except in Figure \ref{fig:GF_1M}.
The initial teacher model $\bfW^{(0)}$ in self-training is randomly selected from $\{\bfW|\|\bfW-\bfW^* \|_F/\|\bfW^*\|_F \le 0.5  \}$ to reduce the computation. In each iteration, the  maximum number of SGD steps $T$ is   $10$. Self-training terminates if $\|\bfW^{(\ell+1)}-\bfW^{(\ell)}\|_F/\|\bfW^{(\ell)}\|_F\le 10^{-4}$ or reaching $1000$  iterations. In Figures \ref{fig: M_N} to \ref{fig: lambda}, all the points on the curves are averaged over $1000$ independent trials, and the regions in lower transparency indicate  the corresponding  one-standard-deviation error bars. Our \textbf{empirical observations} are summarized below.

\textbf{(a) GF (testing performance) proportional to $\|\bfW -\bfW^*\|_F$.}
Figure \ref{fig:GF_1M} illustrates the GF in \eqref{eqn: GF} against the distance to the ground truth $\bfW^*$. To visualize results for different networks together, 
GF is normalized  in $[0, 1]$, divided by its   largest value for each network architecture.  
All the results are averaged over $100$ independent choice of $\bfW$. 
One can see that for one-hidden-layer neural networks,  in a 
large region near  $\bfW^*$,   GF is almost linear in $\|\bfW -\bfW^*\|_F$. When the number of hidden layers increases, this region decreases, but the linear dependence still holds locally. This is an empirical justification of using $\|\bfW -\bfW^*\|_F$  to evaluate the GF and, thus, the testing error in Theorems 1 and 2. 

\textbf{(b) $\|\W[L]-\bfW^*\|_F$ as a linear function of $1/\sqrt{M}$.}
 Figure \ref{fig: M_N} shows the relative error $\|\W[L]-\bfW^*\|_F/\| \bfW^* \|_F$ when the number of unlabeled data and labeled data changes. 
 One can see that the relative error decreases when either $M$ or $N$ increases. Additionally, the dash-dotted lines represent the best fitting of the linear functions of $1/\sqrt{M}$ using the least square method. Therefore, the relative error is indeed 
 a linear function of $1/\sqrt{M}$, as predicted by our results in \eqref{eqn: p_convergence_*} and \eqref{eqn: convergence}. 
\begin{figure}[h]
\vspace{-2.5mm}
     \begin{minipage}{0.32\linewidth}
         \centering
         \includegraphics[width=0.9\textwidth]{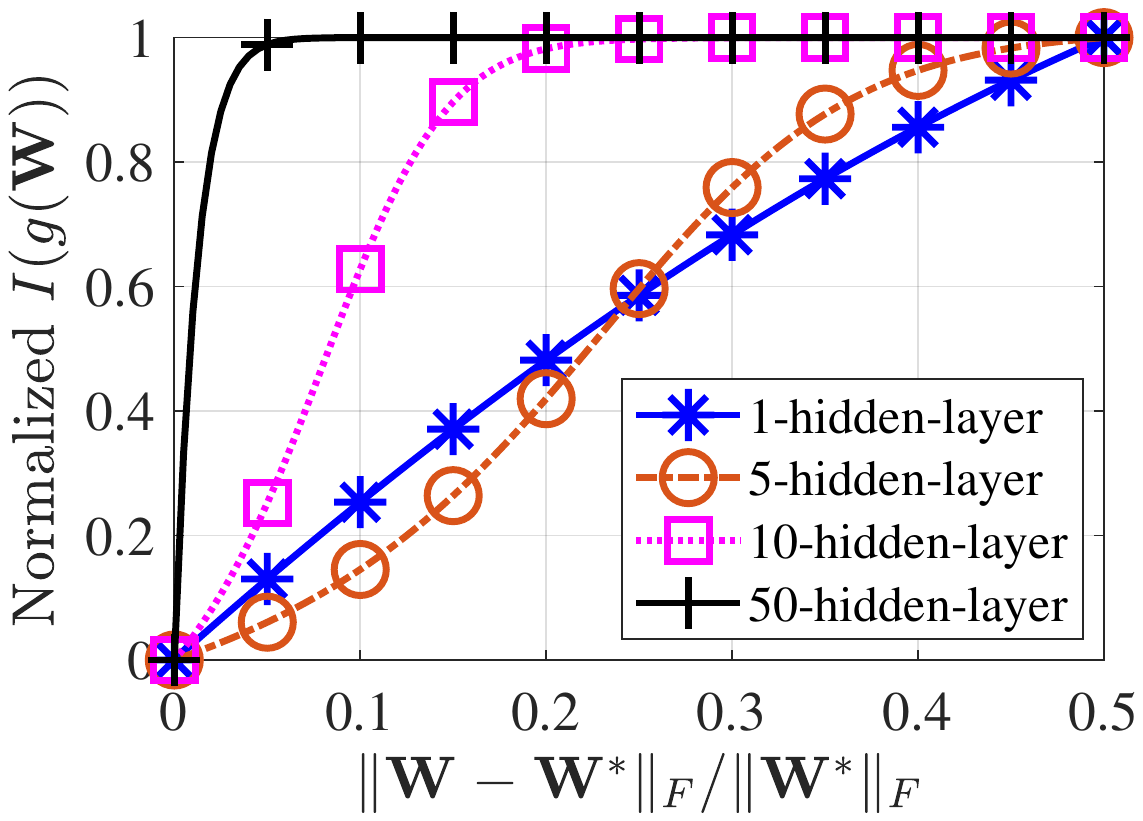}
         \vspace{-2mm}
         \caption{The generalization function against the distance to the ground truth neural network}
        \label{fig:GF_1M}
    \end{minipage}
    ~
    \begin{minipage}{0.32\linewidth}
  	    \centering
  	    \includegraphics[width=0.95\linewidth]{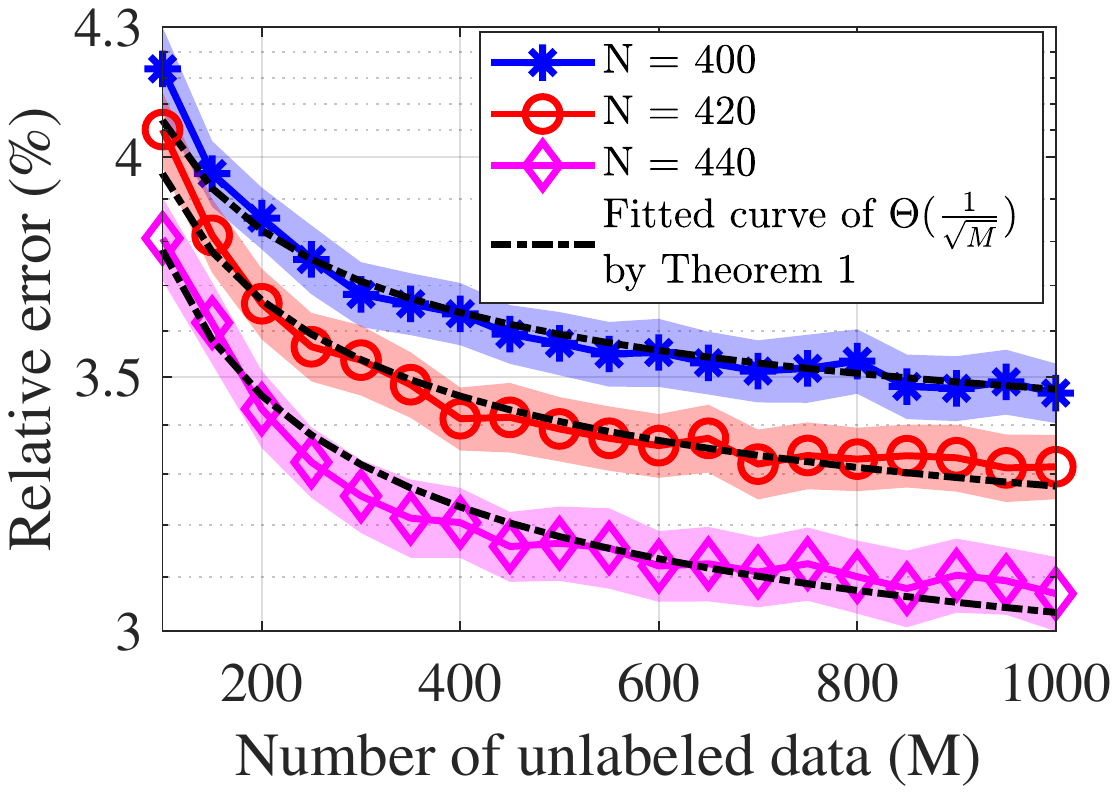}
  	    \vspace{-2mm}
  	    \caption{The relative error against the number of unlabeled data.}
  	    \label{fig: M_N}
  	\end{minipage}
  	~
  	 \begin{minipage}{0.32\linewidth}
        \centering
  	    \includegraphics[width=0.95\linewidth]{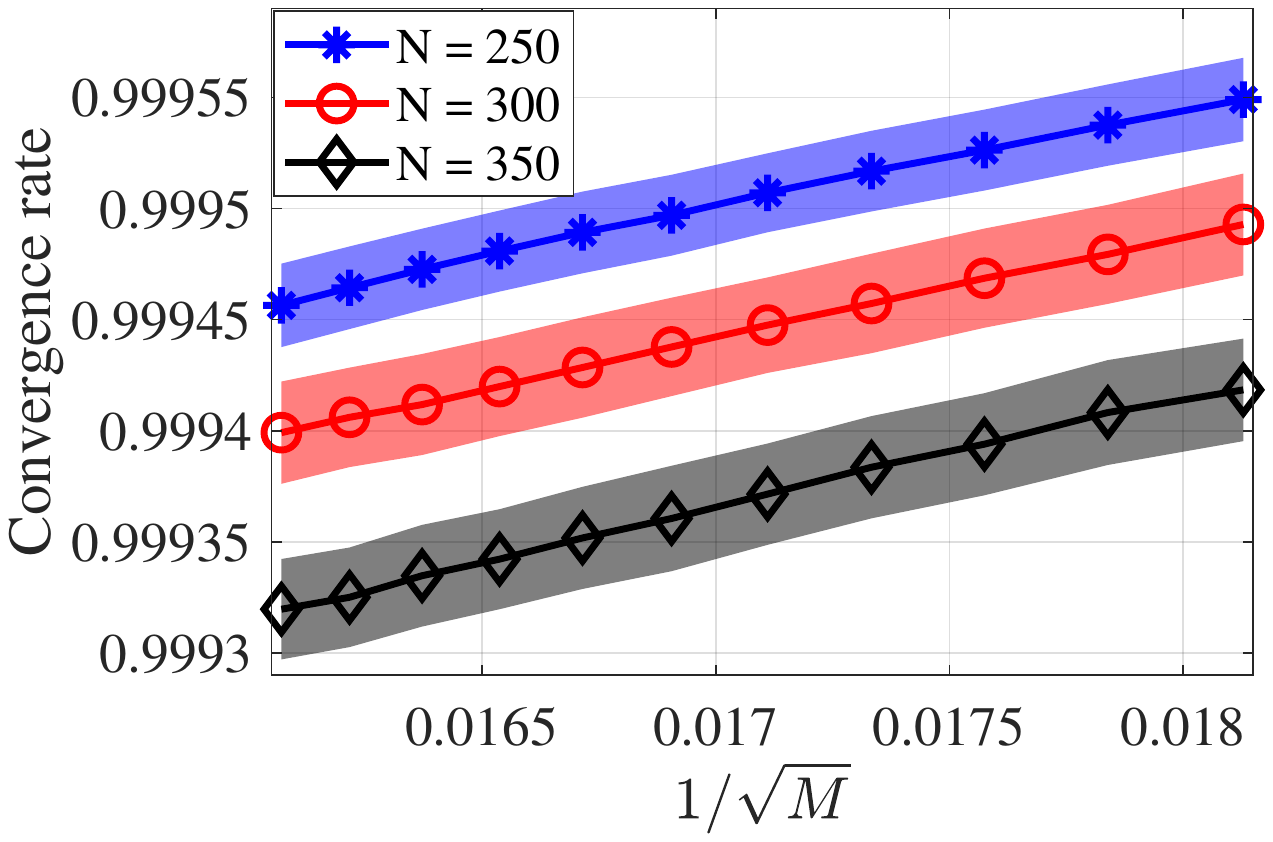}
        \vspace{-2mm}
        \caption{The convergence rate with different $M$ when $N<N^*$.}
        \label{fig:M_ite}
    \end{minipage}
  	\vspace{-4mm}
\end{figure}
\vspace{-2.5mm}

\textbf{(c) Convergence rate as a linear function of $1/\sqrt{M}$.}
Figure \ref{fig:M_ite} illustrates the convergence rate when $M$ and $N$ change. 
We can see that the convergence rate is a linear function of ${1}/{\sqrt{M}}$, as predicted by our results \eqref{eqn: p_convergence_*} and \eqref{eqn: convergence}. When $M$ increases, {the convergence rate is improved, and the method converges faster.} 

\textbf{(d) Increase of $\widetilde{\delta}$ slows down convergence.}  Figure \ref{fig: delta} shows that the convergence rate {becomes worse} when the variance of the unlabeled data $\widetilde{\delta}$ increases from $1$. When $\widetilde{\delta}$ is less than $1$, the convergence rate almost remains the same, which is consistent with our characterization in \eqref{eqn: p_convergence} that the convergence rate is linear in $\mu$. From the discussion after \eqref{eqn: definition_lambda}, $\mu$ increases as $\widetilde{\delta}$ increases from $1$ and stays constant when $\widetilde{\delta}$ is less than $1$.

\Revise{\textbf{(e) $\| \W[L] - \bfW^* \|_F/\|\bfW^* \|_F$ is improved as a linear function of $\hat{\lambda}$.}   Figure \ref{fig: lambda} shows that the relative errors of $\W[L]$ with respect to   $\bfW^*$ decrease almost linearly when $\hat{\lambda}$ increases, 
which is consistent with the theoretical result in  \eqref{eqn: p_convergence_*}. Moreover, when $\lambda$ exceeds a certain threshold positively correlated with $N$, the relative error increases rather than decreases. That is consistent with the analysis in  \eqref{eqn: thm1_condition1} that $\hat{\lambda}$  has an upper limit, and such a limit increases as $N$ increases.}


\textbf{(f) Unlabeled data reduce the sample complexity to learn $\bfW^*$.} Figure \ref{fig: M_pt} depicts 
the phase transition of {returning $\W[L]$}. 
For every pair of $d$ and $N$, we construct $100$ independent trials, and each trial is said to be successful if $\|\W[L]-\bfW^*\|_F/\| \bfW^* \|_F\le 10^{-2}$. \Revise{The white blocks correspond to the successful trials, while the block in black indicates all failures.} When $d$ increases, the required number of labeled data to learn $\bfW^*$ is linear in $d$. Thus, the sample complexity bound in \eqref{eqn: main_sample} is order-wise optimal for $d$. Moreover,  the phase transition line when $M=1000$ is below the one when $M=0$. Therefore, with unlabeled data, the required sample complexity of $N$ is reduced.    

\begin{figure}[h]
\vspace{-2mm}
    \begin{minipage}{0.29\linewidth}
        \centering
        \includegraphics[width=1.0\linewidth]{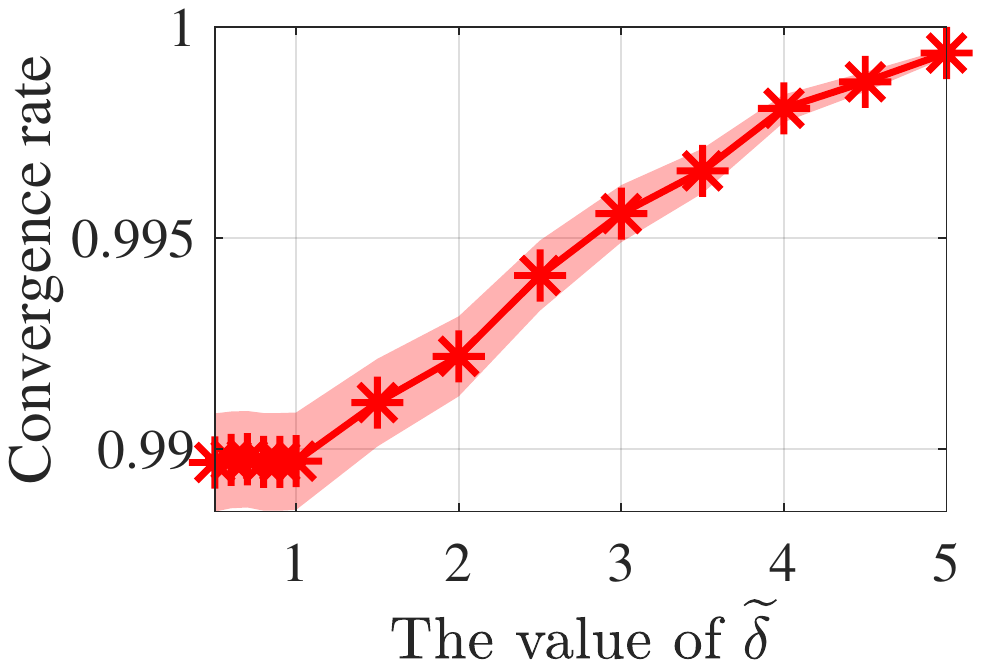}
  	    \vspace{-4mm}
  	    \caption{Convergence rate with different $\hat{\delta}$.}
  	    \label{fig: delta}
    \end{minipage}
    ~
    \begin{minipage}{0.29\linewidth}
        \centering
        \includegraphics[width=1.0\linewidth]{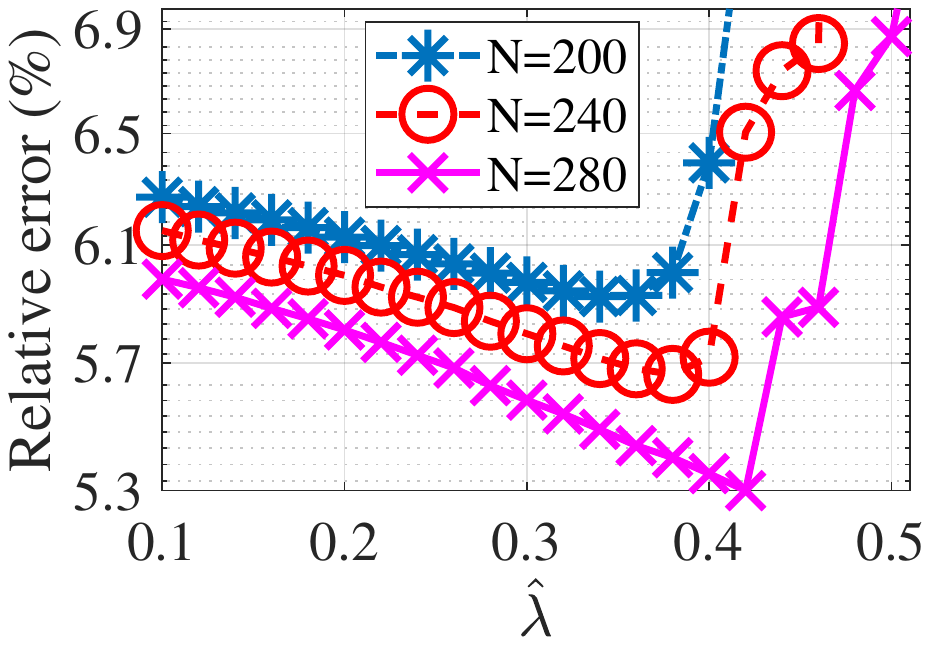}
  	    \vspace{-4mm}
  	    \caption{\Revise{$\frac{\|\W[L] - \bfW^* \|_F}{\|\bfW^* \|_F}$ when $\hat{\lambda}$ and   $N$ change.}}
  	    \label{fig: lambda}
    \end{minipage}
    ~
    \begin{minipage}{0.40\linewidth}
        \centering
  	    \includegraphics[width=1.0\linewidth]{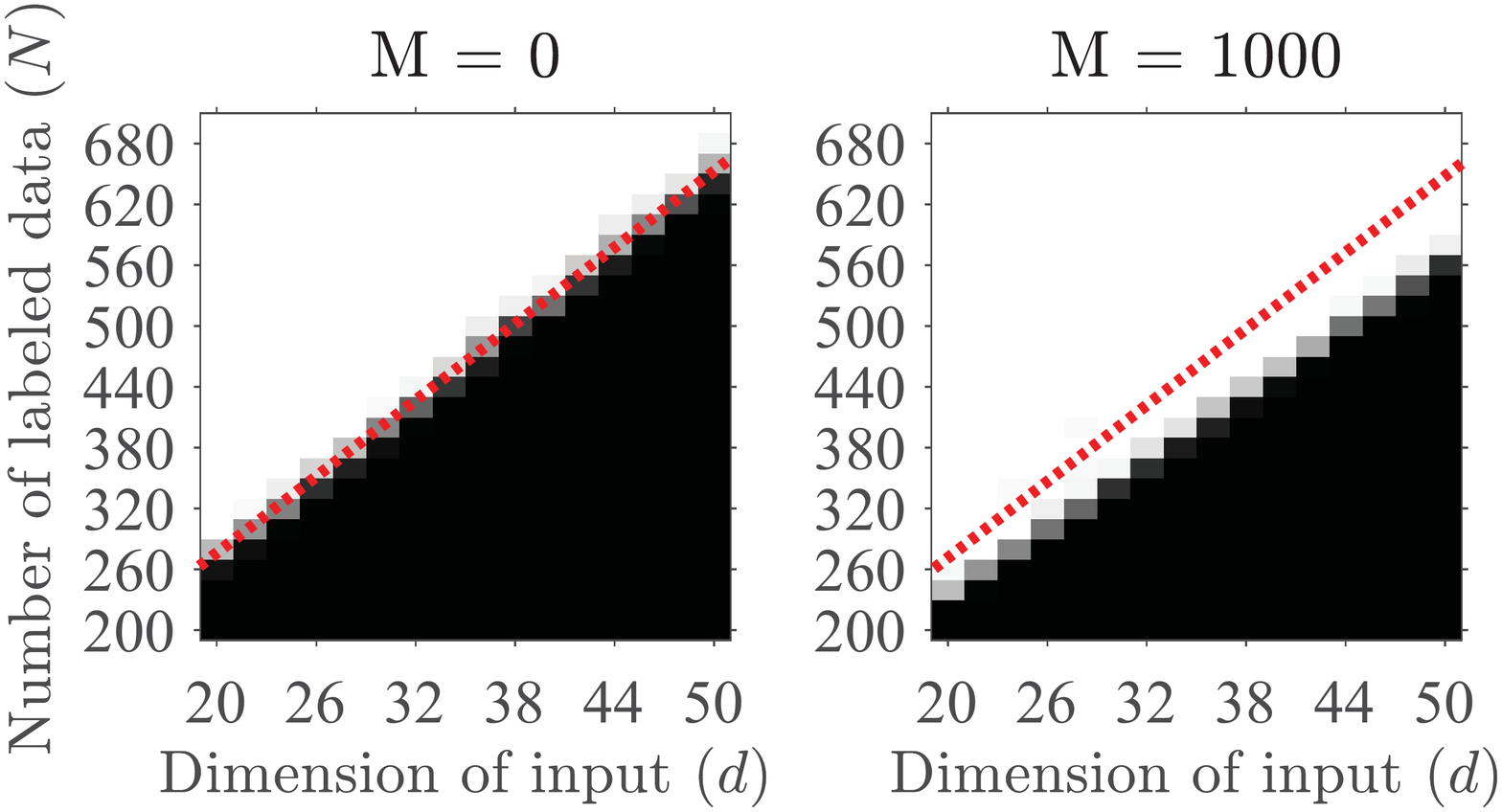}
  	    \vspace{-4mm}
  	    \caption{Empirical phase transition of the curves with (a) $M = 0$ and (b) $M = 1000$.}
  	    \label{fig: M_pt}  
    \end{minipage}
    \vspace{-3mm}
\end{figure}

\vspace{-2mm}

\subsection{Image classification on augmented CIFAR-10 dataset}
\label{sec: result_practice}
\vspace{-2mm}
We evaluate  self-training  on the augmented CIFAR-10 dataset, which has 50K labeled data. The unlabeled data are mined from 80 Million Tiny Images following the setup in \citep{CRSD19}\footnote{The codes are downloaded from https://github.com/yaircarmon/semisup-adv}, and additional 50K images are selected for each class, which is a total of 500K images, to form the unlabeled data. The self-training method is the same implementation as that in \citep{CRSD19}. $\lambda$ and $\widetilde{\lambda}$ is selected as ${N}/{(M+N)}$ and ${M}/{(N+M)}$, respectively, and the algorithm stops after $200$ epochs.
In Figure \ref{fig: real_accuracy}, the dash lines stand for the best fitting of the linear functions of $1/\sqrt{M}$ via the least square method. One can see that the test accuracy is improved by up to $7\%$ using unlabeled data, and the empirical evaluations match the theoretical predictions. Figure \ref{fig: real_rate} shows the convergence rate calculated based on the first $50$ epochs, and the convergence rate is almost a linear function of $1/\sqrt{M}$, as predicted by \eqref{eqn: p_convergence}.

\begin{figure}[h]
\vspace{-3mm}
\centering
     \begin{minipage}{0.45\linewidth}
                \centering
                \includegraphics[width=0.9\linewidth]{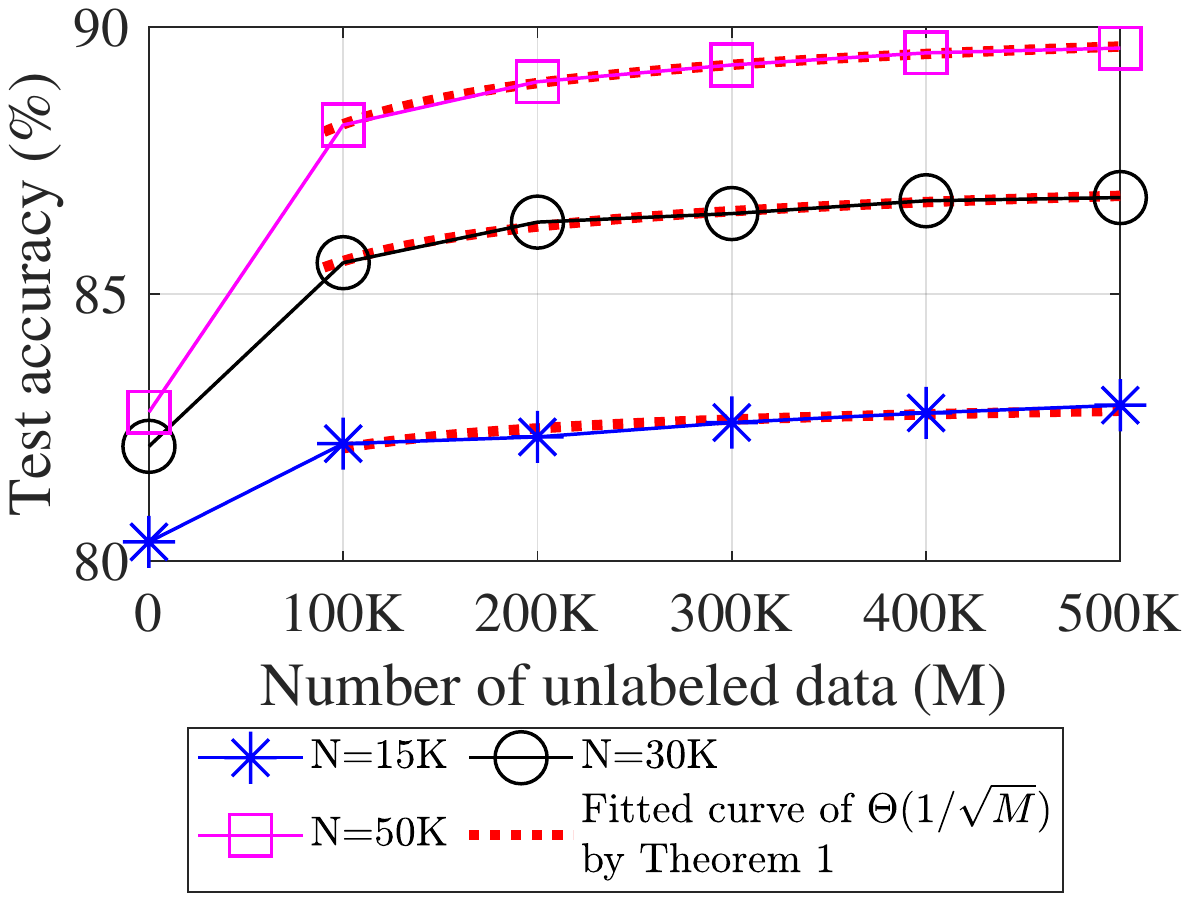}
                \vspace{-2mm}
                \caption{The test accuracy against the number of unlabeled data}
                \label{fig: real_accuracy}
    \end{minipage}
    ~
    \begin{minipage}{0.45\linewidth}
                \centering
                \includegraphics[width=0.9\linewidth]{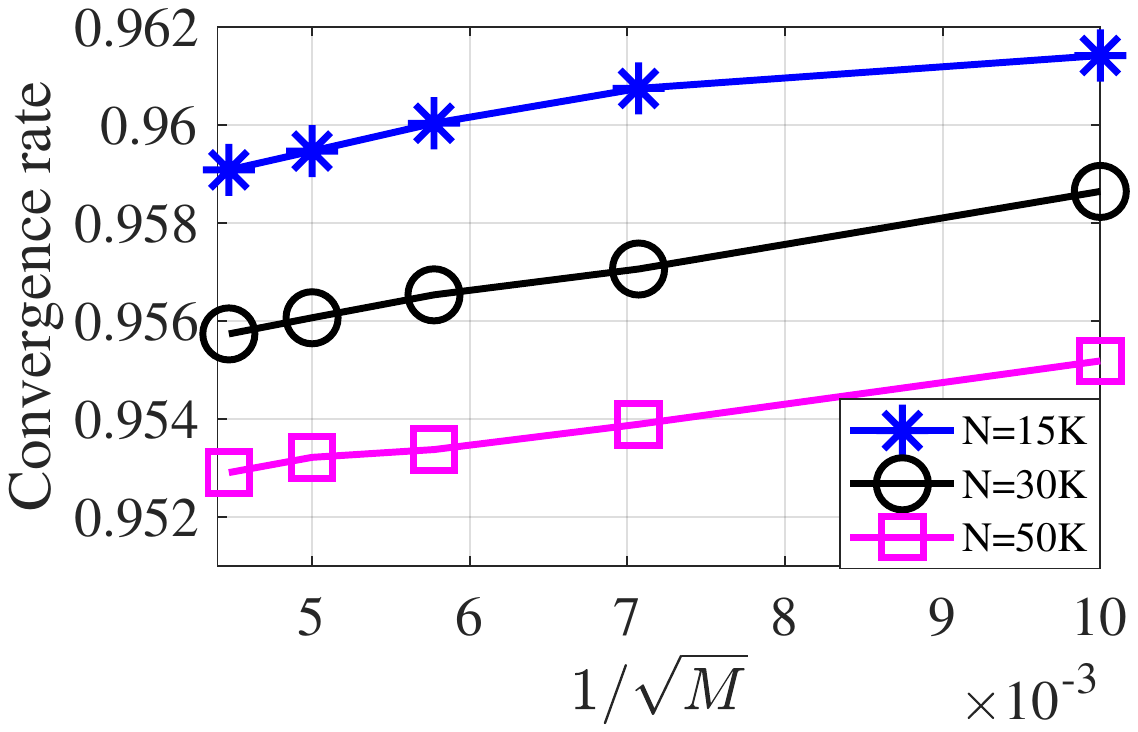}
                \caption{The convergence rate against the number of unlabeled data}
                \label{fig: real_rate}
    \end{minipage}
    \vspace{-2mm}
\end{figure}
\vspace{-4mm}

\section{Conclusion}\label{sec: conclusion}
This paper provides new theoretical insights into understanding the influence of unlabeled data in  the iterative self-training algorithm. We show that
the improved generalization error and convergence rate is a linear function of $1/\sqrt{M}$, where $M$ is the number of unlabeled data. Moreover, compared with  supervised learning, using unlabeled data reduces the required sample complexity of labeled data for achieving zero generalization error. Future directions include generalizing the analysis to multi-layer neural networks and other semi-supervised learning problems such as domain adaptation.

\section*{Acknowledgement}
 This work was supported by AFOSR FA9550-20-1-0122, ARO W911NF-21-1-0255, NSF 1932196 and the Rensselaer-IBM AI Research Collaboration (http://airc.rpi.edu), part of the IBM AI Horizons Network (http://ibm.biz/AIHorizons).

\newpage

\appendix 
\Revise{\textbf{\Large Appendix}}

\Revise{\section{Overview of the proof techniques}}

We first provide an overview of the techniques used in proving Theorems 
\ref{Thm: p_convergence} and \ref{Thm: sufficient_number}. 

\textbf{1. Characterization of a proper  population risk function.} 
To characterize the performance of the iterative self-training algorithm via the stochastic gradient descent method, we need first to define a population risk function such that the following two properties hold. First, the landscape of the population risk function should be analyzable near $\{\W[\ell]\}_{\ell=0}^L$. Second, the distance between the empirical risk function in \eqref{eqn: empirical_risk_function} and the population risk function should be bounded near $\{\W[\ell]\}_{\ell=0}^L$. The generalization function defined in \eqref{eqn: GF}, which is widely used in the supervised learning problem with a sufficient number of samples, failed the second requirement. To this end, we turn to find a new population risk function defined in \eqref{eqn: p_population_risk_function}, and the illustrations of the population risk function and objection function are included in Figure \ref{fig: landscape}. 

\textbf{2. Local convex region of the population risk function.} 
The purpose is to characterize the iterations via the stochastic gradient descent method in the population risk function. 
To obtain the local convex region of the population risk function, we first bound the Hessian of the population risk function at its global optimal. Then, \Revise{we utilize Lemma \ref{Lemma: p_Second_order_distance} in Appendix \ref{sec: Proof: second_order_derivative} to obtain the Hessian of the population risk function near the global optimal. The local convex region of the population risk function is summarized in Lemma \ref{Lemma: p_second_order_derivative bound}}, and the proof of Lemma \ref{Lemma: p_second_order_derivative bound} is included in Appendix \ref{sec: Proof: second_order_derivative}.

\textbf{3. Bound between the population risk and empirical risk functions.} After the characterization of  the iterations via the stochastic gradient descent method in the population risk function, we need to bound the distance between the population risk function and empirical risk function. Therefore, the behaviors of the iterations via the stochastic gradient descent method in the empirical risk function can be described by the ones in the population risk function and the distance between these two. The key lemma is summarized in Lemma \ref{Lemma: p_first_order_derivative} (see Appendix \ref{sec: first_order_derivative}), and the proof is included in Appendix \ref{sec: first_order_derivative}.

\begin{figure}[h]
     \centering
     \includegraphics[width=0.6\linewidth]{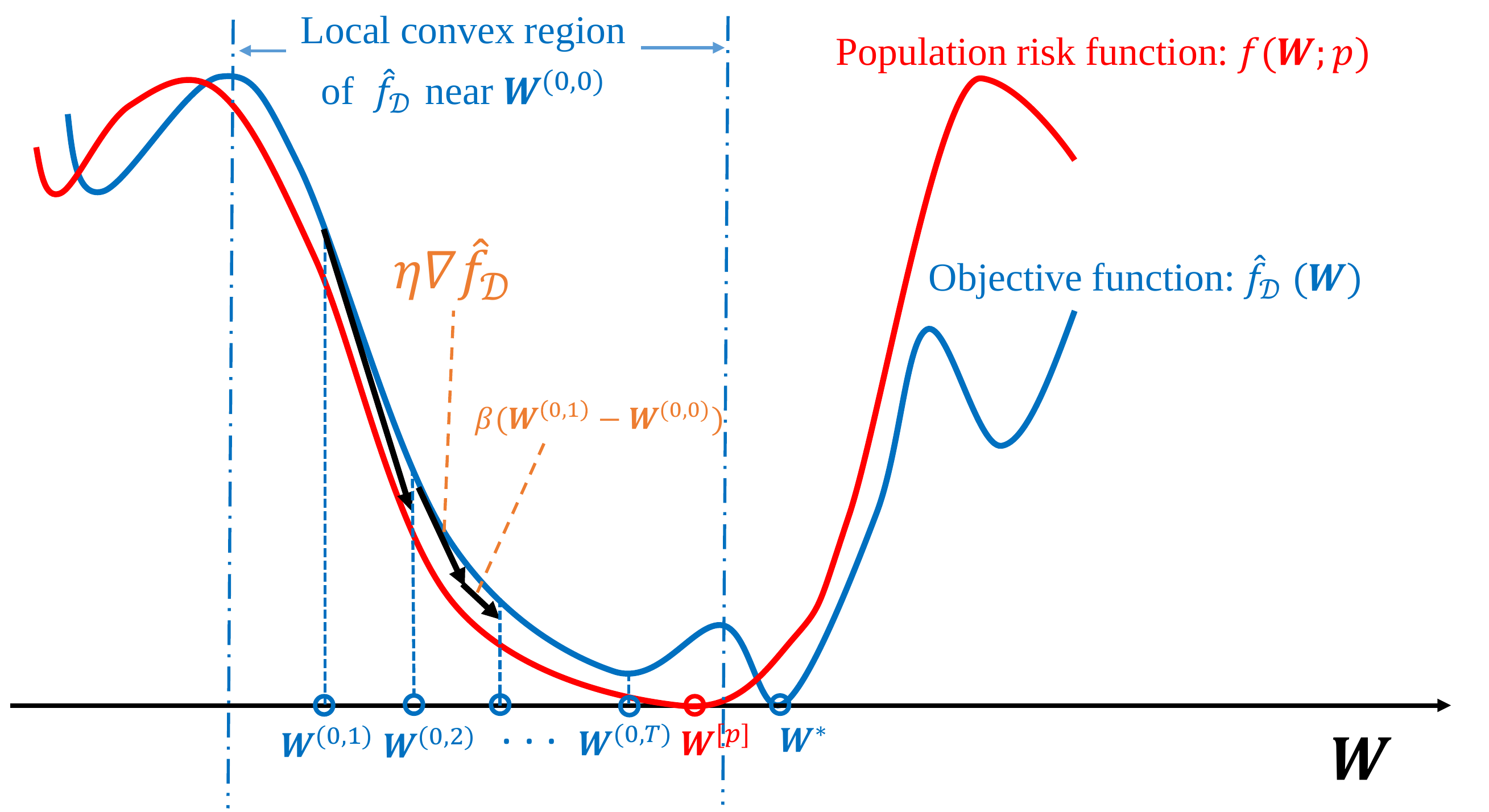}
     \caption{The landscapes of the objection function and population risk function.}
     \label{fig: landscape}
\end{figure}

In the following contexts, the details of the iterative self-training algorithm are included in Appendix \ref{sec: algorithm}.  We then first provide the proof of Theorem \ref{Thm: sufficient_number} in Appendix \ref{sec: sufficient_number}, which can be viewed as a special case of Theorem \ref{Thm: p_convergence}. Then, with the preliminary knowledge from proving Theorem \ref{Thm: sufficient_number}, we turn to present the full proof of a more general statement summarized in Theorem \ref{Thm: p_main_thm} (see Appendix \ref{sec: insuff}), which is related to Theorem \ref{Thm: p_convergence}. The definition and relative proofs of $\mu$ and $\rho$ are all included in Appendix \ref{sec: rho}.  The proofs of preliminary lemmas are included in Appendix \ref{sec: proof of preliminary lemmas}.

\section{Iterative self-training algorithm}
\label{sec: algorithm}
In this section, we implement the details of the mini-batch stochastic gradient descent used in each stage of the  iterative self-training algorithm. After $t$ number of iterations via mini-batch stochastic gradient descent at $\ell$-th stage of self-training algorithm, the learned model is denoted as $\bfW^{(\ell, t)}$. One can easily check that $\W[\ell]$ in the main context is denoted as $\W[\ell, 0]$ in this section and the following proofs. 
Last,
the pseudo-code of the iterative self-training algorithm is summarized in Algorithm
\ref{Algorithm: Alg1}.
\begin{algorithm}[h]
	\caption{Iterative Self-Training Algorithm}\label{Algorithm: Alg1}
	\begin{algorithmic}
		\State \textbf{Input:} labeled $\mathcal{D}=\{(\bfx_n, y_n) \}_{n=1}^{N}$,  
		 unlabeled data $\w[\mathcal{D}]=\{\widetilde{\bfx}_m \}_{m=1}^{M}$,
		and gradient step size $\eta$;\\
		\State \textbf{Initialization:} preliminary teacher model with weights $\bfW^{(0,0)}$;\\
		\State \textbf{Partition:} randomly and independently pick data from $\mathcal{D}$ and $\widetilde{\mathcal{D}}$ to form $T$ subsets $\{ \mathcal{D}_t \}_{t=0}^{T-1}$ and $\{\widetilde{\mathcal{D}}_t\}_{t = 0}^{T-1}$, respectively; 
		\\
		\For{$\ell = 0 , 1, \cdots, L-1$}\\
		    \State $y_m = g(\bfW^{(\ell,0)};\w[\bfx]_m)$ for $m =1, 2, \cdots, M$\\
		    \For{$t = 0, 1,\cdots, T-1$}\\
		    \State $\W[\ell, t+1] = \W[\ell, t] - \eta\cdot \nabla \hat{f}_{\mathcal{D}_t, \widetilde{\mathcal{D}}_t}(\W[\ell, t]) + \beta \cdot \big( \W[\ell, t] - \W[\ell, t-1] \big)$ \\
		    \EndFor\\
		     \State $\bfW^{(\ell+1,0)} = \bfW^{(\ell, T)}$\\
		 \EndFor
	\end{algorithmic}
\end{algorithm}

\section{Notations}
In this section, we first introduce some important notations that will be used in the following proofs, and the notations are summarized  in Table 1.

As shown in Algorithm \ref{Algorithm: Alg1},  $\bfW^{(\ell, t)}$ denotes the learned model after $t$ number of iterations via mini-batch stochastic gradient descent at $\ell$-th stage of the iterative self-training algorithm. Given a student model $\widetilde{\bfW}$, the pseudo label for $\widetilde{\bfx} \in \widetilde{\mathcal{D}}$ is generated as
\begin{equation}\label{eqn: psedu_label}
    \tilde{y} = g(\widetilde{\bfW};\widetilde{\bfx}).
\end{equation}
Further, let $\Wp = p\bfW^* + (1-p)\W[0,0]$,
we then define the \textit{population risk function} as 
\begin{equation}\label{eqn: p_population_risk_function}
    \Revise{f(\bfW;p) = \frac{\lambda}{2} \mathbb{E}_{\bfx}
    \Big(y^*(p)-g(\bfW;\bfx)\Big)^2 
    + \frac{\w[\lambda]}{2} \mathbb{E}_{\w[\bfx]} \Big(\w[y]^*(p) - g(\bfW;\w[\bfx])\Big)^2,}
\end{equation}
where $y^*(p)=g(\Wp;\bfx)$ with $\bfx\sim \mathcal{N}(0,\delta^2\bfI)$ and $\w[y]^*(p) = g(\Wp;\w[\bfx])$ with $\w[\bfx]\sim \mathcal{N}(0,\de^2\bfI)$. When $p =1$, we have $\Wp=\bfW^*$ and $y^*(p) = y$ for data in $\mathcal{D}$. 

Moreover, we use $\sigma_i$ to denote the $i$-th largest singular value of ${\bfW}^*$. Then, $\kappa$ is defined as ${\sigma_1}/{\sigma_K}$, and  $\gamma = \prod_{i=1}^K \sigma_i/\sigma_K$. Additionally,  to avoid high dimensional tensors, the first order derivative of the empirical risk function is defined in the form of vectorized $\bfW$ as 
\begin{equation}\label{eqn: vec_W}
    \nabla \hat{f}(\bfW) = \Big[\frac{\partial f}{\partial \bfw_1}^T, \frac{\partial f}{\partial \bfw_2}^T, \cdots , \frac{\partial f}{\partial \bfw_K}^T \Big]^T \in \mathbb{R}^{dK} 
\end{equation}
with $\bfW =[\bfw_1, \bfw_2, \cdots, \bfw_K]\in\mathbb{R}^{d\times K}$. 
Therefore, the second order derivative of the empirical risk function is in $\mathbb{R}^{dk\times dk}$.
Similar to \eqref{eqn: vec_W}, the high order derivatives of the population risk functions are defined based on vectorized $\bfW$ as well. 
\Revise{In addition, without special descriptions, ${\boldsymbol{\alpha}}=[{{\boldsymbol{\alpha}}}_1^T, {{\boldsymbol{\alpha}}}_2^T, \cdots, {{\boldsymbol{\alpha}}}_K^T ]^T$  stands for any unit vector that in $\mathbb{R}^{dK}$ with ${{\boldsymbol{\alpha}}}_j\in \mathbb{R}^d$.  Therefore, we have 
\begin{equation}\label{eqn: alpha_definition}
    \| \nabla^2 \hat{f}\|_2 = \max_{\boldsymbol{\alpha}}\|\boldsymbol{\alpha}^T \nabla^2 \hat{f} \boldsymbol{\alpha} \|_2
    = 
    \max_{\boldsymbol{\alpha}}
    \Big(
    \sum_{j=1}^{K} \boldsymbol{\alpha}_j^T\frac{\partial \hat{f} }{\partial \bfw_j}
    \Big)^2.
\end{equation}}

Finally, since we focus on order-wise analysis, some constant numbers will be ignored in the majority of the steps. In particular, we use $h_1(z) \gtrsim(\text{or }\lesssim, \eqsim ) h_2(z)$ to denote there exists some positive constant $C$ such that $h_1(z)\ge(\text{or } \le, = ) C\cdot h_2(z)$ when $z\in\mathbb{R}$ is sufficiently large.
\renewcommand{\arraystretch}{1.5}
\begin{table}[h]
    \caption{Some Important Notations} 
    \centering
    \begin{tabular}{c|p{11cm}}
    \hline
    \hline
    $\mathcal{D} =\{ \bfx_n, y_n \}_{n=1}^N$ & Labeled dataset with $N$  number of samples;\\
    \hline
    $\widetilde{\mathcal{D}} =\{ \widetilde{\bfx}_m \}_{m=1}^M$ & Unlabeled dataset with $M$  number of samples;\\
    \hline
    $\mathcal{D}_t =\{ \bfx_n, y_n \}_{n=1}^{N_t}$ & a subset of $\mathcal{D}$ with $N_t$  number of labeled data;\\
    \hline
    $\widetilde{\mathcal{D}}_t =\{ \widetilde{\bfx}_m \}_{m=1}^{M_t}$ & a subset of $\widetilde{\mathcal{D}}$ with $M_t$  number of unlabeled data;\\
    \hline
    $d$ & Dimension of the input $\bfx$ or $\widetilde{\bfx};$\\
    \hline
    $K$ & Number of neurons in the hidden layer;
    \\
    \hline
    $\bfW^*$ & Weights of the ground truth model;\\
    \hline
    $\Wp$ & $\Wp = p \bfW^* + (1-p)\W[0,0]$;\\
    \hline
    $\W[\ell,t]$ & Model returned by iterative self-training after $t$ step mini-batch stochastic gradient descent at stage $\ell$; $\W[0,0]$ is the initial model;\\
    \hline
    $\hat{f}_{ \mathcal{D}, \widetilde{\mathcal{D}} } (\text{ or } \hat{f})$  & The empirical risk function defined in \eqref{eqn: empirical_risk_function};\\
    \hline
    $f(\bfW;p)$  & The population risk function defined in \eqref{eqn: p_population_risk_function};\\
    \hline
    $\hat{\lambda}$  & The value of $\lambda \delta^2/(\lambda \delta^2 + \widetilde{\lambda}\de^2 )$;\\
    \hline
    $\mu$  & The value of $\frac{\lambda\delta^2 + \widetilde{\lambda}\de^2}{\lambda \rho(\delta) + \widetilde{\lambda}\rho(\de)}$;\\
    \hline
    $\sigma_i$  & The $i$-th largest singular value of $\bfW^*$;\\
    \hline
    $\kappa$  & The value of $\sigma_1/\sigma_{K}$;\\
    \hline
    $\gamma$  & The value of $\prod_{i=1}^K\sigma_i/\sigma_{K}$;\\
    \hline
    $q$  & Some large constant in $\mathbb{R}^+$;\\
    \hline
    \hline
    \end{tabular}
    \label{table:appendix}
\end{table}

\section{Preliminary Lemmas}
We will first start with some preliminary lemmas. As outlined at the beginning of the supplementary material, Lemma \ref{Lemma: p_second_order_derivative bound} illustrates the local convex region of the population risk function, and Lemma \ref{Lemma: p_first_order_derivative} explains the error bound between the population risk and empirical risk functions. Then, Lemma \ref{Thm: p_initialization} describes the returned initial model $\bfW^{(0,0)}$ via tensor initialization method \citep{ZSJB17} purely using labeled data. Next, Lemma \ref{Lemma: weyl} is the well known Weyl's inequality in the matrix setting. Moreover, Lemma \ref{prob} is the concentration theorem for independent random matrices. The definitions of the sub-Gaussian and sub-exponential variables are summarized in Definitions \ref{Def: sub-gaussian} and \ref{Def: sub-exponential}.  Lemmas \ref{Lemma: covering_set} and \ref{Lemma: spectral_norm_on_net} serve as the technical tools in bounding matrix norms under the framework of the confidence interval.
\begin{lemma}\label{Lemma: p_second_order_derivative bound}
   Given any $\bfW\in \mathbb{R}^{d\times K}$, let $p$ satisfy 
      \begin{equation}\label{eqn: p_initial}
        p  \lesssim \frac{\sigma_K}{\mu^{2}K\cdot \| \bfW -\bfW^* \|_F}.
    \end{equation}
    Then, we have 
    \begin{equation}
        \frac{\lambda\rho(\delta)+\w[\lambda]\rho(\de)}{\Revise{12}\kappa^2\gamma K^2} \preceq \nabla^2 {f}(\bfW;p) \preceq \frac{7(\lambda\delta^2+\w[\lambda]\de^2)}{K}.
    \end{equation}
\end{lemma}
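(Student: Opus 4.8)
The plan is to follow the landscape-analysis template of \citep{ZSJB17}: evaluate the population Hessian at the minimizer $\Wp$ of $f(\cdot;p)$, where the nonsmooth ReLU contribution drops out, bound its extreme eigenvalues there, and then transport those bounds to the nearby point $\bfW$ via a second-order perturbation estimate. For the form of the Hessian, writing $g(\bfW;\bfx)=\frac1K\sum_j\phi(\bfw_j^T\bfx)$ as in \eqref{eqn: teacher_model}, each of the two summands in \eqref{eqn: p_population_risk_function} contributes, along a test direction $\boldsymbol\alpha$, a quadratic form that splits into a Gram part $\frac{1}{K^2}\mathbb{E}\big[(\sum_j \mathds{1}[\bfw_j^T\bfx\ge 0]\,\boldsymbol\alpha_j^T\bfx)^2\big]$ and a residual part proportional to the prediction error $g(\Wp;\bfx)-g(\bfW;\bfx)$ weighted against $\phi''$. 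Since $\phi''$ is a Dirac mass on the hyperplanes $\{\bfw_j^T\bfx=0\}$, the residual reduces to surface integrals, and it vanishes identically when $\bfW=\Wp$ because the prediction error is then zero pointwise; this is exactly why $\Wp$ plays the role that $\bfW^*$ plays in supervised learning.

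At $\bfW=\Wp$ only the Gram part survives, so $\nabla^2 f(\Wp;p)$ is the $\lambda,\widetilde{\lambda}$-weighted sum of two ReLU Gram matrices, one per input variance. For the upper bound I apply Cauchy--Schwarz, $(\sum_j\mathds{1}[\cdot]\,\boldsymbol\alpha_j^T\bfx)^2\le K\sum_j(\boldsymbol\alpha_j^T\bfx)^2$, together with $\mathbb{E}(\boldsymbol\alpha_j^T\bfx)^2=\delta^2\|\boldsymbol\alpha_j\|^2$ and $\|\boldsymbol\alpha\|=1$, giving an $O\big((\lambda\delta^2+\widetilde{\lambda}\de^2)/K\big)$ ceiling. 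The lower bound is the delicate part: I must show the smallest eigenvalue of each ReLU Gram matrix is bounded below, which, after normalizing the input variance, is precisely the quantity captured by $\rho$ in \eqref{eqn: rho}, with the conditioning of $\Wp$ entering through $\kappa$ and $\gamma$. Because $\Wp$ is close to $\bfW^*$ its singular values are comparable to those of $\bfW^*$, so $\kappa,\gamma$ may be taken as those of $\bfW^*$, and summing the two weighted contributions yields the floor $(\lambda\rho(\delta)+\widetilde{\lambda}\rho(\de))/(\kappa^2\gamma K^2)$ up to a universal constant.

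Finally I invoke Lemma \ref{Lemma: p_Second_order_distance} to bound $\|\nabla^2 f(\bfW;p)-\nabla^2 f(\Wp;p)\|_2$ by a term proportional to $\|\bfW-\Wp\|_F$ times spectral factors and $\delta^2,\de^2$. The hypothesis $p\lesssim \sigma_K/(\mu^2 K\|\bfW-\bfW^*\|_F)$ is exactly what keeps $\|\bfW-\Wp\|_F$ small enough that this perturbation is at most a constant fraction of the Step~2 floor; Weyl's inequality (Lemma \ref{Lemma: weyl}) then preserves positive definiteness and only inflates the two constants, producing the stated $12$ and $7$.

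The main obstacle is the lower bound in the second step: establishing a strictly positive floor for the minimum eigenvalue of the ReLU Gram matrix despite the nonsmooth indicator masks. This requires the geometric eigenvalue argument of \citep{ZSJB17} adapted to a mixture over two Gaussian variances, together with care that $\Wp$ inherits the conditioning $\kappa,\gamma$ of $\bfW^*$, so that the two variance-specific contributions combine correctly into $\lambda\rho(\delta)+\widetilde{\lambda}\rho(\de)$ rather than degenerating.
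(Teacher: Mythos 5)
Your proposal matches the paper's own proof essentially step for step: evaluate the Hessian at $\Wp$, obtain the lower bound from the ZSJB17 eigenvalue lemma (Lemma \ref{Lemma: Zhong}) and the upper bound from a Cauchy--Schwarz argument, then transport both to $\bfW$ via Lemma \ref{Lemma: p_Second_order_distance} and Weyl's inequality, using the hypothesis on $p$ to keep the perturbation a constant fraction of the spectral floor. The approach and the key lemmas invoked are the same, so no further comparison is needed.
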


\begin{lemma}\label{Lemma: p_first_order_derivative}
   Let $f$ and $\hat{f}$ be the functions defined in \eqref{eqn: p_population_risk_function} and \eqref{eqn: empirical_risk_function}, respectively. Suppose the pseudo label is generated through \eqref{eqn: psedu_label} with weights $\widetilde{\bfW}$. Then, we have
   \begin{equation}
        \begin{split}
        \|\nabla f(\bfW) -\nabla \hat{f}(\bfW) \|_2
       \lesssim&
       \frac{\lambda\delta^2}{K} \sqrt{\frac{d\log q}{N}} \cdot \| \bfW - \bfW^*\| + \frac{\w[\lambda]\de^2}{K} \sqrt{\frac{d\log q}{M}} \cdot \| \bfW -\w[\bfW] \|_2\\
       & + \frac{\big\| \lambda\delta^2 \cdot \big(\w[\bfW]-\Wp\big) +\w[\lambda]\de^2 \cdot \big(\bfW^*-\Wp\big)  \big\|_2}{\Revise{2}K}
       \end{split}
    \end{equation}
    with probability at least $1-q^{-d}$.
\end{lemma}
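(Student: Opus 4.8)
The plan is to interpolate between $\hat{f}$ and $f$ through an auxiliary population function that carries the \emph{empirical} labels. Define
\[\bar{f}(\bfW) = \frac{\lambda}{2}\,\mathbb{E}_{\bfx}\big(g(\bfW^*;\bfx)-g(\bfW;\bfx)\big)^2 + \frac{\widetilde{\lambda}}{2}\,\mathbb{E}_{\widetilde{\bfx}}\big(g(\widetilde{\bfW};\widetilde{\bfx})-g(\bfW;\widetilde{\bfx})\big)^2,\]
i.e.\ the expectation of $\hat{f}$ in \eqref{eqn: empirical_risk_function} with the labeled targets still generated by $\bfW^*$ and the pseudo targets still generated by $\widetilde{\bfW}$. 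The triangle inequality then splits the quantity of interest as
\[\|\nabla f(\bfW)-\nabla\hat{f}(\bfW)\|_2 \le \underbrace{\|\nabla\bar{f}(\bfW)-\nabla\hat{f}(\bfW)\|_2}_{\text{statistical concentration}} + \underbrace{\|\nabla f(\bfW)-\nabla\bar{f}(\bfW)\|_2}_{\text{deterministic label bias}}.\]
The first piece is a pure empirical-versus-population concentration with \emph{matched} labels, and it will produce the two sampling terms carrying $\sqrt{d\log q/N}$ and $\sqrt{d\log q/M}$; the second piece is deterministic and comes solely from replacing the target weights $\bfW^*,\widetilde{\bfW}$ by the common $\Wp$, producing the last term of the statement.

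For the bias piece I would evaluate the Gaussian expectations directly. Using the vectorized convention \eqref{eqn: vec_W}, the $k$-th block of $\nabla\bar f-\nabla f$ equals
\[-\frac{\lambda}{K}\mathbb{E}_{\bfx}\big[(g(\bfW^*;\bfx)-g(\Wp;\bfx))\,\phi'(\bfw_k^{T}\bfx)\,\bfx\big] -\frac{\widetilde{\lambda}}{K}\mathbb{E}_{\widetilde{\bfx}}\big[(g(\widetilde{\bfW};\widetilde{\bfx})-g(\Wp;\widetilde{\bfx}))\,\phi'(\bfw_k^{T}\widetilde{\bfx})\,\widetilde{\bfx}\big],\]
where the $-g(\bfW;\cdot)$ contributions cancel. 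The homogeneity of ReLU lets me substitute $\bfx=\delta\bfz$ (resp.\ $\widetilde{\bfx}=\de\bfz$) with $\bfz$ standard Gaussian, which pulls a clean factor $\delta^2$ (resp.\ $\de^2$) out of each expectation; the remaining standard-Gaussian integral is Lipschitz in the target weights, so it is controlled by a constant multiple of $\|\bfW^*-\Wp\|$ (resp.\ $\|\widetilde{\bfW}-\Wp\|$). Summing the $K$ blocks and collecting the two scales reproduces, up to the constant, the deterministic third term $\frac{\|\lambda\delta^2(\widetilde{\bfW}-\Wp)+\widetilde{\lambda}\de^2(\bfW^*-\Wp)\|_2}{2K}$ in the statement.

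For the concentration piece I would bound the two sums separately using the matrix concentration inequality of Lemma \ref{prob}. Writing the spectral norm through the unit-vector representation \eqref{eqn: alpha_definition}, for a fixed direction $\bfal$ the labeled contribution is a sum of $N$ independent, mean-zero terms whose summand vanishes at $\bfW=\bfW^*$ (there the residual $g(\bfW^*;\bfx)-g(\bfW;\bfx)$ is identically zero). A first-order expansion shows each summand is sub-exponential with parameter of order $\delta^2\|\bfW-\bfW^*\|/K$ (Definitions \ref{Def: sub-gaussian} and \ref{Def: sub-exponential}), so a Bernstein-type bound yields a deviation of order $\tfrac{\lambda\delta^2}{K}\sqrt{d\log q/N}\,\|\bfW-\bfW^*\|$; the identical argument with $\de^2$, $M$ and $\widetilde{\bfW}$ handles the unlabeled sum and the factor $\|\bfW-\widetilde{\bfW}\|$. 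Passing from a fixed $(\bfal,\bfW)$ to the supremum is done with the $\varepsilon$-net machinery of Lemmas \ref{Lemma: covering_set} and \ref{Lemma: spectral_norm_on_net}, whose cardinality $e^{O(dK)}$ is absorbed into the exponent by choosing the failure probability $q^{-d}$; a union bound over the two nets gives the claimed $1-q^{-d}$.

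The main obstacle is the non-smoothness of the ReLU: since $\phi'$ is a jump indicator, the summands in the concentration step are not globally Lipschitz in $\bfW$, so the net argument cannot be closed by naive Lipschitz continuity. The remedy I would follow is to isolate the event that $\bfw_k^{T}\bfx$ lies within an $O(\varepsilon)$ band around the origin, bound the (small Gaussian) measure of this band to control the contribution of flipped activation patterns, and establish Lipschitz continuity only off this band. Once this delicate region is tamed, combining the concentration and bias bounds by the triangle inequality delivers the three-term estimate of Lemma \ref{Lemma: p_first_order_derivative}.
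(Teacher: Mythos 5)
Your overall decomposition is exactly the paper's: your $\bar f$ groups what the paper calls $\bfI_1+\bfI_3$ (empirical-vs-population concentration with matched labels) and your ``label bias'' piece is the paper's $\bfI_2+\bfI_4$, and your treatment of the concentration piece (sub-exponential summands, Bernstein/Chernoff, then a net over directions $\boldsymbol{\alpha}$ via Lemmas \ref{Lemma: covering_set}--\ref{Lemma: spectral_norm_on_net}) is the argument the paper actually runs. Note only that the lemma is stated at a fixed $\bfW$, so the paper's net is over the direction vector $\boldsymbol{\alpha}$ only; the uniformity over $\bfW$ and the activation-flip band you worry about are not needed to close this particular statement.

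The genuine gap is in the bias term. You propose to bound each of the two expectations separately by Lipschitz continuity in the target weights, obtaining $C\frac{\lambda\delta^2}{K}\|\bfW^*-\Wp\| + C\frac{\w[\lambda]\de^2}{K}\|\w[\bfW]-\Wp\|$, and assert that ``collecting the two scales reproduces, up to the constant,'' the stated third term. It does not: the statement requires the norm of the \emph{weighted sum} $\lambda\delta^2(\cdot-\Wp)+\w[\lambda]\de^2(\cdot-\Wp)$, and a sum of norms only upper-bounds a norm of a sum in the wrong direction --- it destroys the cancellation that the single combined vector carries. This cancellation is not cosmetic: since $\bfW^*-\Wp$ and $\w[\bfW]-\Wp$ are anti-parallel (both proportional to $\bfW^*-\w[\bfW]$), the combined vector vanishes exactly when $p=\hat{\lambda}$, which is precisely the fact that makes $\WL$ a stationary point of the population risk and drives the convergence to $\WL$ in Theorems \ref{Thm: p_convergence} and \ref{Thm: sufficient_number}; a sum-of-norms bound leaves a residual of order $\|\bfW^*-\w[\bfW]\|$ that never goes away. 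What is needed, and what the paper proves via the quadrant decomposition of the plane spanned by the two weight vectors, is the \emph{exact} Gaussian identity
\begin{equation*}
\mathbb{E}_{\bfx\sim\mathcal{N}(0,\delta^2\bfI)}\big[\big(\phi(\bfu^T\bfx)-\phi(\bfv^T\bfx)\big)\bfx\big]=\tfrac{\delta^2}{2}(\bfu-\bfv),
\end{equation*}
which for ReLU follows from $\phi(z)=\tfrac{1}{2}(z+|z|)$ and the sign symmetry of $\bfx$ (the $|\cdot|$ part integrates to zero). Plugging this identity into both bias expectations is what produces the single vector $\lambda\delta^2(\cdot-\Wp)+\w[\lambda]\de^2(\cdot-\Wp)$ with the explicit constant $\tfrac{1}{2K}$; your Lipschitz surrogate cannot recover it.
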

\begin{lemma}[Initialization, \citep{ZSJB17}]
\label{Thm: p_initialization}
     Assuming the number of labeled data satisfies
     \begin{equation}\label{eqn: p_initial_sample}
         N \ge  p^2 N^* 
     \end{equation}
     for some large constant $q$ and $p\in[\frac{1}{K},1]$, the tensor initialization method, \Revise{which is summarized in Appendix \ref{sec: initialization},}
    outputs  ${\bfW}^{(0,0)}$  such that
    \begin{equation}\label{eqn: p_initialization}
        \| \bfW^{(0,0)} -\bfW^* \|_F  \le \frac{\sigma_K}{p\cdot c(\kappa)\mu^{2}K}
    \end{equation}
    with probability at least $1-q^{-d}$.
\end{lemma}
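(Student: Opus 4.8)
The plan is to reduce the statement to the tensor-initialization guarantee of \citet{ZSJB17}, adapting their analysis from the standard Gaussian input to the general variance $\delta$ that enters through $\mu$. Since the initialization uses the labeled data only, throughout I treat $y=g(\bfW^*;\bfx)$ as the regression target with $\bfx\sim\mathcal{N}(0,\delta^2\bfI_d)$. The first step is to build moment estimators: using Hermite polynomials of the scaled Gaussian, define (empirical versions of) the first moment $M_1=\mathbb{E}[y\,\bfx]$, a second-order matrix $M_2=\mathbb{E}[y\,(\bfx\otimes\bfx-\delta^2\bfI_d)]$, and a third-order tensor $M_3$ built from the third Hermite tensor of $\bfx$. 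Stein's identity together with the homogeneity of the ReLU $\phi$ yields the population decompositions $M_2\eqsim\sum_{j=1}^K c_j(\delta)\,\bar{\bfw}_j^*(\bar{\bfw}_j^*)^T$ and $M_3\eqsim\sum_{j=1}^K c_j'(\delta)\,(\bar{\bfw}_j^*)^{\otimes 3}$, where $\bar{\bfw}_j^*=\bfw_j^*/\|\bfw_j^*\|$ and the coefficients $c_j,c_j'$ are fixed multiples of activation moments. Because $\phi$ is ReLU these coefficients are bounded away from zero, and the dependence of the relevant normalizations on $\delta$ is precisely what is captured by $\rho(\delta)$ and hence by $\mu$.

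The second step recovers $\bfW^*$ from these moments in three stages. Taking the top-$K$ left singular subspace $\widehat{\bfU}$ of the empirical $M_2$ estimates $\mathrm{span}\{\bar{\bfw}_j^*\}$; projecting $M_3$ onto $\widehat{\bfU}$ produces a low-dimensional tensor whose robust power decomposition \citep{KCL15} returns unit-direction estimates $\widehat{\bfw}_j$; finally $M_1$ is used to solve a linear system recovering the magnitudes $\|\bfw_j^*\|$ and to fix the signs left ambiguous by the decomposition. Assembling directions and magnitudes gives $\bfW^{(0,0)}$. The accuracy of each stage is controlled by a perturbation bound: Weyl's inequality (Lemma \ref{Lemma: weyl}) together with a Davis--Kahan argument bounds the subspace error by $\|\widehat{M}_2-M_2\|_2/\mathrm{gap}$, the tensor power method is stable up to the operator-norm perturbation of $\widehat{M}_3-M_3$, and the linear solve contributes $\|\widehat{M}_1-M_1\|_2$; the spectral gap and the stability constants are in turn governed by the conditioning $\kappa$ of $\bfW^*$. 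Each of the three perturbations is a deviation of an empirical average of sub-exponential quantities from its mean, so the matrix concentration bound (Lemma \ref{prob}) combined with the covering-net tools (Lemmas \ref{Lemma: covering_set}--\ref{Lemma: spectral_norm_on_net}) gives, with probability at least $1-q^{-d}$, a uniform deviation of order $\sqrt{d\log q/N}$ times a polynomial in $\kappa,K$ and $\sigma_1$.

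The final step specializes the sample size. Tracking the polynomial dependence on $\kappa$ and $K$ through the three perturbation bounds shows that the total error obeys $\|\bfW^{(0,0)}-\bfW^*\|_F\lesssim\Phi(\kappa,K,\mu)\,\sigma_1\sqrt{d\log q/N}$ for an explicit factor $\Phi$, i.e.\ it scales as $1/\sqrt{N}$. Substituting $N\ge p^2 N^*=p^2\,c(\kappa)\mu^{*2}K^3 d\log q$ converts $\sqrt{d\log q/N}$ into $1/(p\sqrt{c(\kappa)}\mu^* K^{3/2})$, and absorbing $\Phi$, the ratio $\sigma_1/\sigma_K=\kappa$, and the remaining $K$-powers into a sufficiently large polynomial $c(\kappa)$ produces exactly $\|\bfW^{(0,0)}-\bfW^*\|_F\le\sigma_K/(p\,c(\kappa)\mu^2 K)$. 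I expect the main obstacle to lie in the concentration step rather than the algebra: since $\phi$ is ReLU, the summands defining $\widehat{M}_2$ and $\widehat{M}_3$ are only sub-exponential (not bounded), so the $\sqrt{d\log q/N}$ deviation must be established uniformly via the truncation/covering argument of Lemmas \ref{Lemma: covering_set}--\ref{Lemma: spectral_norm_on_net}, while simultaneously tracking how the input variance $\delta$ propagates into the $\rho$-normalizations and hence into $\mu$. Keeping every intermediate constant inside $c(\kappa)$ while preserving the clean $1/p$ dependence is the delicate part.
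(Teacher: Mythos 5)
Your proposal is correct and takes essentially the same route as the paper: the lemma is imported from \citet{ZSJB17}, and the paper's own treatment (Appendix on tensor initialization) only restates the algorithm you describe --- the moments $M_1,M_2,M_3$, the subspace projection, the decomposition of \citet{KCL15}, and the sign/magnitude recovery --- leaving the perturbation-plus-concentration error analysis to the cited work, which your sketch reconstructs faithfully, including the $1/\sqrt{N}$ scaling and the absorption of constants into $c(\kappa)$ after substituting $N\ge p^2N^*$.
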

\begin{lemma}[Weyl's inequality, \citep{B97}] \label{Lemma: weyl}
\Revise{Let $\bfB =\bfA + \bfE$ be a matrix with dimension $m\times m$.} Let $\lambda_i(\bfB)$ and $\lambda_i(\bfA)$ be the $i$-th largest eigenvalues of $\bfB$ and $\bfA$, respectively.  Then, we have 
\begin{equation}
    |\lambda_i(\bfB) - \lambda_i(\bfA)| \le \|\bfE \|_2, \quad \forall \quad i\in [m].
\end{equation}
\end{lemma}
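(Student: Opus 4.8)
The plan is to derive this classical Weyl inequality from the Courant--Fischer variational characterization of the eigenvalues of symmetric matrices. Note first that in the context where this lemma is invoked, $\bfA$ and $\bfB$ arise as Hessians and are therefore symmetric, so all eigenvalues are real and admit the ordering $\lambda_1 \ge \cdots \ge \lambda_m$; this is the implicit assumption under which ``$i$-th largest eigenvalue'' is well defined. I would begin by recalling the max--min form: for any symmetric $\bfA \in \mathbb{R}^{m\times m}$,
$$\lambda_i(\bfA) = \max_{\dim(S)=i} \; \min_{\bfx \in S,\, \|\bfx\|_2 = 1} \bfx^T \bfA \bfx,$$
where the outer maximum ranges over all $i$-dimensional subspaces $S \subseteq \mathbb{R}^m$.

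The key reduction is the elementary observation that, for any unit vector $\bfx$, writing $\bfB = \bfA + \bfE$ gives $\bfx^T \bfB \bfx = \bfx^T \bfA \bfx + \bfx^T \bfE \bfx$, and by the definition of the spectral norm one has $|\bfx^T \bfE \bfx| \le \|\bfE\|_2$ whenever $\|\bfx\|_2 = 1$. Next I would let $S^\star$ be an $i$-dimensional subspace attaining the maximum for $\bfA$, so that $\lambda_i(\bfA) = \min_{\bfx \in S^\star,\, \|\bfx\|_2=1} \bfx^T \bfA \bfx$. Restricting the outer maximum in the characterization of $\lambda_i(\bfB)$ to this particular $S^\star$ and applying the bound $\bfx^T \bfE \bfx \ge -\|\bfE\|_2$ inside the inner minimum, I would obtain
$$\lambda_i(\bfB) \ge \min_{\bfx \in S^\star,\, \|\bfx\|_2=1} \bfx^T \bfB \bfx \ge \lambda_i(\bfA) - \|\bfE\|_2,$$
which gives the lower bound $\lambda_i(\bfB) - \lambda_i(\bfA) \ge -\|\bfE\|_2$.

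The matching upper bound follows by symmetry: since $\bfA = \bfB + (-\bfE)$ and $\|-\bfE\|_2 = \|\bfE\|_2$, running the identical argument with the roles of $\bfA$ and $\bfB$ interchanged yields $\lambda_i(\bfA) - \lambda_i(\bfB) \ge -\|\bfE\|_2$, i.e. $\lambda_i(\bfB) - \lambda_i(\bfA) \le \|\bfE\|_2$. Combining the two one-sided estimates produces $|\lambda_i(\bfB) - \lambda_i(\bfA)| \le \|\bfE\|_2$ for every $i \in [m]$, as claimed. Since this is a standard result there is no genuine obstacle; the only point requiring care is selecting the correct (max--min rather than min--max) variational form so that restricting to the optimizing subspace $S^\star$ produces an inequality in the intended direction, together with the preliminary verification that the ordered real eigenvalues are well defined under the symmetry of $\bfA$ and $\bfB$.
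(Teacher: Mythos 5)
Your proof is correct. The paper itself gives no proof of this lemma --- it is stated as a known result and attributed to the reference \citep{B97} --- so there is nothing internal to compare against; your Courant--Fischer max--min argument is the standard derivation one would find in that reference. You are also right to flag that the lemma as written omits the hypothesis that $\bfA$ and $\bfB$ (equivalently $\bfE$) are symmetric, without which the ordering of eigenvalues and the inequality itself need not make sense; in the paper's application the matrices are Hessians of the population risk, so symmetry holds, and your explicit acknowledgment of this implicit assumption is the only substantive point of care in an otherwise routine argument.
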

\begin{lemma}[\citep{T12}, Theorem 1.6]\label{prob}
	Consider a finite sequence $\{{\bfZ}_k\}$ of independent, random matrices with dimensions $d_1\times d_2$. Assume that such random matrix satisfies
	\begin{equation*}
	\vspace{-2mm}
	\mathbb{E}({\bfZ_k})=0\quad \textrm{and} \quad \left\|{\bfZ_k}\right\|\le R \quad \textrm{almost surely}.	
	\end{equation*}
	Define
	\begin{equation*}
	\vspace{-2mm}
	\delta^2:=\max\Big\{\Big\|\sum_{k}\mathbb{E}({\bfZ}_k{\bfZ}_k^*)\Big\|,\Big\|\displaystyle\sum_{k}\mathbb{E}({\bfZ}_k^*{\bfZ}_k)\Big\|\Big\}.
	\end{equation*}
	Then for all $t\ge0$, we have
	\begin{equation*}
	\text{Prob}\Bigg\{ \left\|\displaystyle\sum_{k}{\bfZ}_k\right\|\ge t \Bigg\}\le(d_1+d_2)\exp\Big(\frac{-t^2/2}{\delta^2+Rt/3}\Big).
	\end{equation*}
\end{lemma}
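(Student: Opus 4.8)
The plan is to prove this matrix Bernstein inequality via the matrix Laplace transform method: first reduce the rectangular problem to a Hermitian one by dilation, then control the matrix moment generating function and optimize an exponential Markov bound. \textbf{First I would pass to the Hermitian setting.} For each $\bfZ_k$, introduce its Hermitian dilation $\mathcal{H}(\bfZ_k)$, i.e. the $(d_1+d_2)\times(d_1+d_2)$ self-adjoint matrix whose off-diagonal blocks are $\bfZ_k$ and $\bfZ_k^*$ and whose diagonal blocks vanish; set $\bfX_k:=\mathcal{H}(\bfZ_k)$. The dilation is linear and satisfies $\lambda_{\max}(\mathcal{H}(\bfZ))=\|\bfZ\|$ together with $\mathcal{H}(\bfZ)^2=\mathrm{diag}(\bfZ\bfZ^*,\,\bfZ^*\bfZ)$. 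Hence $\|\sum_k\bfZ_k\|=\lambda_{\max}(\sum_k\bfX_k)$, the $\bfX_k$ are independent, zero-mean, with $\|\bfX_k\|\le R$ almost surely, and $\|\sum_k\mathbb{E}\bfX_k^2\|=\max\{\|\sum_k\mathbb{E}\bfZ_k\bfZ_k^*\|,\|\sum_k\mathbb{E}\bfZ_k^*\bfZ_k\|\}=\delta^2$. It therefore suffices to bound the upper tail of $\lambda_{\max}$ for a sum of bounded, independent, centered Hermitian matrices of dimension $d_1+d_2$.

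\textbf{Next I would apply the trace-exponential Markov bound.} For any $\theta>0$, since $\exp(\theta\sum_k\bfX_k)$ is positive semidefinite we have $\lambda_{\max}(e^{\theta Y})\le\mathrm{tr}\,e^{\theta Y}$, so $\Pr\{\lambda_{\max}(\sum_k\bfX_k)\ge t\}\le e^{-\theta t}\,\mathbb{E}\,\mathrm{tr}\exp(\theta\sum_k\bfX_k)$. \textbf{The step I expect to be the main obstacle is bounding this matrix MGF by per-term contributions:} because $\exp(\bfA+\bfB)\neq\exp(\bfA)\exp(\bfB)$ for noncommuting matrices, the naive scalar factorization is unavailable. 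The correct tool is Lieb's concavity theorem (the map $\bfA\mapsto\mathrm{tr}\exp(\bfH+\log\bfA)$ is concave on the positive-definite cone), which, combined with Jensen's inequality applied one summand at a time to exploit independence, yields the subadditivity of matrix cumulants $\mathbb{E}\,\mathrm{tr}\exp(\theta\sum_k\bfX_k)\le\mathrm{tr}\exp(\sum_k\log\mathbb{E}\,e^{\theta\bfX_k})$.

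\textbf{Then I would control each term.} Using the scalar inequality $e^{\theta x}\le 1+\theta x+g(\theta)x^2$ valid for $|x|\le R$, with $g(\theta)=(e^{\theta R}-1-\theta R)/R^2$, and transferring it to the spectrum of $\bfX_k$, taking expectations gives $\mathbb{E}\,e^{\theta\bfX_k}\preceq\bfI+g(\theta)\mathbb{E}\bfX_k^2\preceq\exp(g(\theta)\mathbb{E}\bfX_k^2)$, where I use $\mathbb{E}\bfX_k=\bfzero$ and $\bfI+\bfA\preceq e^{\bfA}$. By operator monotonicity of the logarithm and monotonicity of the trace exponential, $\mathrm{tr}\exp(\sum_k\log\mathbb{E}\,e^{\theta\bfX_k})\le\mathrm{tr}\exp(g(\theta)\sum_k\mathbb{E}\bfX_k^2)\le(d_1+d_2)\exp(g(\theta)\delta^2)$, bounding the trace by the dimension times the top eigenvalue. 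Assembling, $\Pr\{\lambda_{\max}(\sum_k\bfX_k)\ge t\}\le(d_1+d_2)\exp(-\theta t+g(\theta)\delta^2)$.

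\textbf{Finally I would optimize over $\theta$.} Comparing the Taylor coefficients of $e^{\theta R}$ term-by-term against a geometric series gives the Bernstein estimate $g(\theta)\le\frac{\theta^2/2}{1-\theta R/3}$ for $0\le\theta<3/R$. Substituting and choosing $\theta=t/(\delta^2+Rt/3)$ (which keeps $\theta R<3$) makes $1-\theta R/3=\delta^2/(\delta^2+Rt/3)$, and a direct computation collapses the exponent $-\theta t+\frac{\theta^2\delta^2/2}{1-\theta R/3}$ to $-\frac{t^2/2}{\delta^2+Rt/3}$. This yields exactly the claimed bound $\Pr\{\|\sum_k\bfZ_k\|\ge t\}\le(d_1+d_2)\exp\!\big(\frac{-t^2/2}{\delta^2+Rt/3}\big)$. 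Among these steps, the dilation is a routine reduction, the Markov argument and the scalar-to-matrix transfer are elementary, and the optimization is mechanical; the one genuinely deep ingredient is the Lieb-theorem-based cumulant subadditivity in the second step.
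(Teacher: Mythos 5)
The paper does not prove this lemma at all: it is imported verbatim as Theorem 1.6 of \citep{T12} and used as a black-box concentration tool. Your outline correctly reproduces the standard argument from that source (Hermitian dilation, the trace-exponential Markov bound, Lieb-based subadditivity of matrix cumulant generating functions, the bound $\mathbb{E}\,e^{\theta\bfX_k}\preceq\exp(g(\theta)\,\mathbb{E}\bfX_k^2)$, and the Bernstein-style optimization of $\theta$), and the final algebra checks out, so there is nothing to compare against within the paper itself.
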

\begin{definition}[Definition 5.7, \citep{V2010}]\label{Def: sub-gaussian}
	A random variable $X$ is called a sub-Gaussian random variable if it satisfies 
	\begin{equation}
		(\mathbb{E}|X|^{p})^{1/p}\le c_1 \sqrt{p}
	\end{equation} for all $p\ge 1$ and some constant $c_1>0$. In addition, we have 
	\begin{equation}
		\mathbb{E}e^{s(X-\mathbb{E}X)}\le e^{c_2\|X \|_{\psi_2}^2 s^2}
	\end{equation} 
	for all $s\in \mathbb{R}$ and some constant $c_2>0$, where $\|X \|_{\phi_2}$ is the sub-Gaussian norm of $X$ defined as $\|X \|_{\psi_2}=\sup_{p\ge 1}p^{-1/2}(\mathbb{E}|X|^{p})^{1/p}$.

	Moreover, a random vector $\bfX\in \mathbb{R}^d$ belongs to the sub-Gaussian distribution if one-dimensional marginal $\boldsymbol{\alpha}^T\bfX$ is sub-Gaussian for any $\boldsymbol{\alpha}\in \mathbb{R}^d$, and the sub-Gaussian norm of $\bfX$ is defined as $\|\bfX \|_{\psi_2}= \sup_{\|\boldsymbol{\alpha} \|_2=1}\|\boldsymbol{\alpha}^T\bfX \|_{\psi_2}$.
\end{definition}

\begin{definition}[Definition 5.13, \citep{V2010}]\label{Def: sub-exponential}
	
	A random variable $X$ is called a sub-exponential random variable if it satisfies 
	\begin{equation}
	(\mathbb{E}|X|^{p})^{1/p}\le c_3 {p}
	\end{equation} for all $p\ge 1$ and some constant $c_3>0$. In addition, we have 
	\begin{equation}
	\mathbb{E}e^{s(X-\mathbb{E}X)}\le e^{c_4\|X \|_{\psi_1}^2 s^2}
	\end{equation} 
	for $s\le 1/\|X \|_{\psi_1}$ and some constant $c_4>0$, where $\|X \|_{\psi_1}$ is the sub-exponential norm of $X$ defined as $\|X \|_{\psi_1}=\sup_{p\ge 1}p^{-1}(\mathbb{E}|X|^{p})^{1/p}$.
\end{definition}
\begin{lemma}[Lemma 5.2, \citep{V2010}]\label{Lemma: covering_set}
	Let $\mathcal{B}(0, 1)\in\{ \boldsymbol{\alpha} \big| \|\boldsymbol{\alpha} \|_2=1, \boldsymbol{\alpha}\in \mathbb{R}^d  \}$ denote a unit ball in $\mathbb{R}^{d}$. Then, a subset $\mathcal{S}_\xi$ is called a $\xi$-net of $\mathcal{B}(0, 1)$ if every point $\bfz\in \mathcal{B}(0, 1)$ can be approximated to within $\xi$ by some point $\boldsymbol{\alpha}\in \mathcal{B}(0, 1)$, i.e., $\|\bfz -\boldsymbol{\alpha} \|_2\le \xi$. Then the minimal cardinality of a   $\xi$-net $\mathcal{S}_\xi$ satisfies
	\begin{equation}
	|\mathcal{S}_{\xi}|\le (1+2/\xi)^d.
	\end{equation}
\end{lemma}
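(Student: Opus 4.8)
The plan is to prove this via the standard volumetric packing argument. First I would construct a $\xi$-net by greedy maximal packing: consider a maximal subset $\mathcal{S}_\xi = \{\boldsymbol{\alpha}_i\} \subset \mathcal{B}(0,1)$ whose points are pairwise more than $\xi$ apart, i.e. $\|\boldsymbol{\alpha}_i - \boldsymbol{\alpha}_j\|_2 > \xi$ for all $i \neq j$. Such a maximal set exists, since one may keep adding $\xi$-separated points until no more can be added. The key observation is that maximality forces $\mathcal{S}_\xi$ to be a $\xi$-net: if some $\bfz \in \mathcal{B}(0,1)$ had distance greater than $\xi$ to every point of $\mathcal{S}_\xi$, then $\bfz$ itself could be appended to the set while preserving separation, contradicting maximality. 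Hence bounding the minimal cardinality of a $\xi$-net reduces to bounding the cardinality of this maximal packing.

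Next I would run the volume comparison. Because the centers are pairwise more than $\xi$ apart, the balls $\mathcal{B}(\boldsymbol{\alpha}_i, \xi/2)$ of radius $\xi/2$ are mutually disjoint by the triangle inequality. Each such ball lies inside the slightly enlarged ball $\mathcal{B}(0, 1 + \xi/2)$, since its center has norm at most $1$. Writing $V_d(r)$ for the volume of a Euclidean ball of radius $r$ in $\R^d$ and summing the disjoint volumes, I would obtain $|\mathcal{S}_\xi| \cdot V_d(\xi/2) \le V_d(1 + \xi/2)$. The crucial scaling fact is that $V_d(r) = r^d V_d(1)$, so the common constant $V_d(1)$ cancels and leaves
\[
   |\mathcal{S}_\xi| \le \frac{(1+\xi/2)^d}{(\xi/2)^d} = \Big(1 + \frac{2}{\xi}\Big)^d,
\]
which is exactly the claimed bound.

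This argument is essentially routine and I do not anticipate a serious obstacle. The only points requiring care are (i) verifying that a maximal $\xi$-separated set is automatically a $\xi$-net, which is the contradiction argument above, and (ii) the containment $\mathcal{B}(\boldsymbol{\alpha}_i, \xi/2) \subset \mathcal{B}(0, 1+\xi/2)$ together with the disjointness of the half-radius balls, both of which follow from the triangle inequality. The invariance of the volume ratio under the scaling $V_d(r) = r^d V_d(1)$ is what makes the dimension dependence emerge cleanly as a $d$-th power and lets the proof sidestep computing the explicit constant $V_d(1)$.
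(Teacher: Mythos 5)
Your proof is correct and is exactly the standard volumetric argument; the paper does not prove this lemma itself but cites it from Vershynin's notes, whose own proof of Lemma 5.2 is the same maximal-separated-set-plus-volume-comparison argument you give. The only cosmetic remark is that the paper's statement writes the set as $\{\boldsymbol{\alpha} : \|\boldsymbol{\alpha}\|_2 = 1\}$ (a sphere) while calling it a ball; your argument covers either case, since a maximal $\xi$-separated subset of the sphere still yields disjoint half-radius balls inside $\mathcal{B}(0,1+\xi/2)$.
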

\begin{lemma}[Lemma 5.3, \citep{V2010}]\label{Lemma: spectral_norm_on_net}
	Let $\bfA$ be an $d_1\times d_2$ matrix, and let $\mathcal{S}_{\xi}(d)$ be a $\xi$-net of $\mathcal{B}(0, 1)$ in $\mathbb{R}^d$ for some $\xi\in (0, 1)$. Then
	\begin{equation}
	\|\bfA\|_2 \le (1-\xi)^{-1}\max_{\boldsymbol{\alpha}_1\in \mathcal{S}_{\xi}(d_1), \boldsymbol{\alpha}_2\in \mathcal{S}_{\xi}(d_2)} |\boldsymbol{\alpha}_1^T\bfA\boldsymbol{\alpha}_2|.
	\end{equation} 
\end{lemma}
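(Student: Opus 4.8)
The plan is to start from the variational characterization of the spectral norm as a bilinear form over the two unit spheres, namely
\[
\|\bfA\|_2 \;=\; \sup_{\bfu,\bfv}\, \bfu^T\bfA\bfv,
\]
where the supremum runs over unit vectors $\bfu\in\mathbb{R}^{d_1}$ and $\bfv\in\mathbb{R}^{d_2}$. Since each unit sphere is compact and the bilinear form $\bfu^T\bfA\bfv$ is continuous, the supremum is attained at some maximizing pair $(\bfu,\bfv)$; fixing such a pair is the first step.

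Next I would invoke the net property from Lemma \ref{Lemma: covering_set}: there exist net points $\bfal_1\in\mathcal{S}_\xi(d_1)$ and $\bfal_2\in\mathcal{S}_\xi(d_2)$ with $\|\bfu-\bfal_1\|_2\le\xi$ and $\|\bfv-\bfal_2\|_2\le\xi$. I would then split the optimal value through the telescoping identity
\[
\bfu^T\bfA\bfv \;=\; \bfal_1^T\bfA\bfal_2 \;+\; (\bfu-\bfal_1)^T\bfA\bfv \;+\; \bfal_1^T\bfA(\bfv-\bfal_2),
\]
and control each residual by Cauchy--Schwarz together with the operator-norm inequality $|\bfp^T\bfA\bfq|\le\|\bfp\|_2\,\|\bfA\|_2\,\|\bfq\|_2$. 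Because $\bfu,\bfv,\bfal_1,\bfal_2$ are unit vectors and the net displacements have norm at most $\xi$, each residual is bounded by a multiple of $\xi\|\bfA\|_2$, while the leading term never exceeds $M:=\max_{\bfal_1\in\mathcal{S}_\xi(d_1),\,\bfal_2\in\mathcal{S}_\xi(d_2)}|\bfal_1^T\bfA\bfal_2|$.

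Collecting the pieces yields an inequality of the form $\|\bfA\|_2 \le M + (c\,\xi)\,\|\bfA\|_2$ with an absolute constant $c$; since $\|\bfA\|_2$ is finite, I would move the $\|\bfA\|_2$ term to the left and divide by the positive factor $1-c\,\xi$ to conclude the stated bound. To land on the clean prefactor $(1-\xi)^{-1}$, the neatest route is to discretize the two spheres sequentially rather than simultaneously: first write $\|\bfA\|_2=\sup_{\bfv}\|\bfA\bfv\|_2$ and reduce the supremum over $\bfv$ to its net, which by the single residual $\|\bfA(\bfv-\bfal_2)\|_2\le\xi\|\bfA\|_2$ gives exactly $\|\bfA\|_2\le(1-\xi)^{-1}\max_{\bfal_2}\|\bfA\bfal_2\|_2$; one then represents $\|\bfA\bfal_2\|_2=\sup_{\bfu}\bfu^T\bfA\bfal_2$ and handles the remaining sphere the same way.

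The one step that requires genuine care is bookkeeping the constant. A careless simultaneous approximation of both vectors generates cross terms that inflate the prefactor, so the order in which the nets are introduced—and the reuse of the operator-norm bound on the side that has already been discretized—must be arranged so that the residuals collapse into a single geometric factor $(1-\xi)^{-1}$. Everything else is a routine application of the triangle inequality and Cauchy--Schwarz; the essential, dimension-free content is simply that passing to the $\xi$-net costs only an $O(1)$ multiplicative factor.
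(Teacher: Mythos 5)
The paper does not actually prove this lemma---it is quoted directly from \citep{V2010} as a preliminary tool---so the only question is whether your argument is sound on its own. The overall strategy (fix an extremal pair, approximate by net points, absorb the residuals) is the standard textbook route, and your telescoping identity together with Cauchy--Schwarz is fine. The genuine gap is in the one step you yourself flag as delicate: neither of your two routes produces the stated prefactor $(1-\xi)^{-1}$. Writing $M:=\max_{\bfal_1\in\mathcal{S}_\xi(d_1),\,\bfal_2\in\mathcal{S}_\xi(d_2)}|\bfal_1^T\bfA\bfal_2|$, the simultaneous decomposition gives two residuals each bounded by $\xi\|\bfA\|_2$, hence $\|\bfA\|_2\le M+2\xi\|\bfA\|_2$ and the constant $(1-2\xi)^{-1}$, which moreover is only meaningful for $\xi<1/2$ while the lemma allows $\xi\in(0,1)$. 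The sequential route does not repair this: the first discretization indeed gives $\|\bfA\|_2\le(1-\xi)^{-1}\max_{\bfal_2}\|\bfA\bfal_2\|_2$, but ``handling the remaining sphere the same way'' costs a second factor, since $\|\bfA\bfal_2\|_2\le\max_{\bfal_1}|\bfal_1^T\bfA\bfal_2|+\xi\|\bfA\bfal_2\|_2$ yields $\|\bfA\bfal_2\|_2\le(1-\xi)^{-1}M$, and the two factors compound to $(1-\xi)^{-2}$. (If you instead bound the second residual by $\xi\|\bfA\|_2$ and feed it back into the first inequality, you land on $(1-2\xi)^{-1}$ again.) So as written your argument establishes the lemma with constant $(1-2\xi)^{-1}$ or $(1-\xi)^{-2}$, not $(1-\xi)^{-1}$.

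Two remarks on closing this. First, the constant in the paper's statement is itself nonstandard: Vershynin's two-net lemma carries $(1-2\xi)^{-1}$, and the clean $(1-\xi)^{-1}$ belongs to the one-net version $\|\bfA\|_2\le(1-\xi)^{-1}\max_{\bfal_2}\|\bfA\bfal_2\|_2$; since the paper only invokes this lemma order-wise with a fixed $\xi$, your $(1-2\xi)^{-1}$ version is fully adequate for every application in the appendix. Second, if you genuinely want a constant at least as good as $(1-\xi)^{-1}$ for $\xi\le 1/2$, the fix is not to reorder the discretization but to take $\bfu,\bfv$ to be the top singular vector pair, so that $\bfA\bfv=\|\bfA\|_2\,\bfu$ and $\bfA^T\bfu=\|\bfA\|_2\,\bfv$. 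Then the linear residuals become quadratic: $(\bfal_1-\bfu)^T\bfA\bfv=\|\bfA\|_2(\bfal_1^T\bfu-1)\ge-\tfrac{1}{2}\xi^2\|\bfA\|_2$ and likewise for $\bfu^T\bfA(\bfal_2-\bfv)$, while the remaining cross term $(\bfal_1-\bfu)^T\bfA(\bfal_2-\bfv)$ is at least $-\xi^2\|\bfA\|_2$, giving $\|\bfA\|_2\le(1-2\xi^2)^{-1}M\le(1-\xi)^{-1}M$ whenever $\xi\le 1/2$. Your Cauchy--Schwarz bounds treat these residuals as order $\xi$ rather than $\xi^2$, which is precisely why the factor of two (or the square) cannot be avoided by bookkeeping alone.
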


\Revise{
\begin{lemma}[Mean Value Theorem]\label{Lemma: MVT}
Let $\bfU \subset \mathbb{R}^{n_1}$ be open and $\bff : \bfU \longrightarrow \mathbb{R}^{n_2}$ be continuously differentiable, and $\bfx \in \bfU$, $\bfh \in \mathbb{R}^{n_1}$ vectors such that the line segment $\bfx + t\bfh$, $0 \le t \le 1$ remains in $\bfU$. Then we have:
\begin{equation*}
    \bff(\bfx+\bfh)-\bff(\bfx)=\left(\int_{0}^{1}\nabla\bff(\bfx+t\bfh)dt\right)\cdot \bfh,
\end{equation*}
where $\nabla\bff$ denotes the Jacobian matrix of $\bff$.
\end{lemma}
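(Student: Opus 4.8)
The plan is to reduce this vector-valued identity to the ordinary one-dimensional fundamental theorem of calculus by restricting $\bff$ to the segment joining $\bfx$ and $\bfx+\bfh$. First I would introduce the auxiliary path $\boldsymbol{\varphi}:[0,1]\to\mathbb{R}^{n_2}$ given by $\boldsymbol{\varphi}(t)=\bff(\bfx+t\bfh)$, which is well defined since the segment $\{\bfx+t\bfh:0\le t\le 1\}$ lies in the open set $\bfU$ by hypothesis (and extends to a slightly larger interval by openness). Because $\bff$ is continuously differentiable, $\boldsymbol{\varphi}$ is $C^1$ on $[0,1]$, and the chain rule gives $\boldsymbol{\varphi}'(t)=\nabla\bff(\bfx+t\bfh)\,\bfh$, namely the Jacobian at $\bfx+t\bfh$ applied to the fixed direction $\bfh$.

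Next I would integrate coordinate by coordinate. For each index $i$ the scalar function $\varphi_i$ is $C^1$ on $[0,1]$, so the scalar fundamental theorem of calculus yields $\varphi_i(1)-\varphi_i(0)=\int_0^1\varphi_i'(t)\,dt$. Stacking the $n_2$ coordinates and using $\boldsymbol{\varphi}(1)=\bff(\bfx+\bfh)$ and $\boldsymbol{\varphi}(0)=\bff(\bfx)$ gives $\bff(\bfx+\bfh)-\bff(\bfx)=\int_0^1\nabla\bff(\bfx+t\bfh)\,\bfh\,dt$. Finally, since $\bfh$ does not depend on $t$, right-multiplication by $\bfh$ commutes with the entrywise integral of the matrix-valued map $t\mapsto\nabla\bff(\bfx+t\bfh)$, so $\bfh$ factors out of the integral and I obtain $\big(\int_0^1\nabla\bff(\bfx+t\bfh)\,dt\big)\bfh$, which is the asserted identity.

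There is no deep obstacle in this lemma; the only real content is getting the formulation right. The step I would flag is the use of the integral form itself: the naive pointwise analogue $\bff(\bfx+\bfh)-\bff(\bfx)=\nabla\bff(\bfx+t^\ast\bfh)\,\bfh$ for a single intermediate $t^\ast\in(0,1)$ is \emph{false} as soon as $n_2>1$, because distinct coordinates would in general require distinct intermediate points; the averaged Jacobian $\int_0^1\nabla\bff(\bfx+t\bfh)\,dt$ is exactly the device that repairs this, which is why the statement must be phrased with an integral. The remaining technical points are routine: continuity of $\nabla\bff$ on the compact segment makes $t\mapsto\nabla\bff(\bfx+t\bfh)$ continuous and hence Riemann-integrable, so the matrix integral is well defined, and factoring $\bfh$ out uses only linearity of the integral applied entrywise.
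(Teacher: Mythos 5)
Your proof is correct: the reduction to the scalar fundamental theorem of calculus via $\boldsymbol{\varphi}(t)=\bff(\bfx+t\bfh)$, the coordinate-wise integration, and factoring the constant vector $\bfh$ out of the entrywise matrix integral is exactly the standard argument for the integral form of the mean value theorem, and your remark that the pointwise form fails for $n_2>1$ correctly identifies why the integral formulation is the right one. The paper itself states this lemma as a known textbook fact without proof, so there is nothing to compare against beyond confirming that your argument is complete and sound.
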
}

\section{Proof of Theorem \ref{Thm: sufficient_number}}
\label{sec: sufficient_number}
With $p=1$ in \eqref{eqn: p_population_risk_function}, the \textit{population risk function} is reduced as 
\begin{equation}\label{eqn: population_risk_function}
    f(\bfW) =  \frac{\lambda}{2} \mathbb{E}_{\bfx} (y - g(\bfW;\bfx)) + \frac{\w[\lambda]}{2} \mathbb{E}_{\w[\bfx]} (\w[y]^* - g(\bfW;\w[\bfx])),
\end{equation}
where $y=g(\bfW^*;\bfx)$ with $\bfx\sim \mathcal{N}(0,\delta^2\bfI)$ and $\w[y]^* = g(\bfW^*;\w[\bfx])$ with $\w[\bfx]\sim \mathcal{N}(0,\de^2\bfI)$. In fact, \eqref{eqn: population_risk_function} can be viewed as the expectation of the \textit{empirical risk function} in \eqref{eqn: empirical_risk_function} given $\w[y]_m = g(\bfW^*;\w[\bfx]_m)$. Moreover, the ground-truth model $\bfW^*$ is the global optimal to \eqref{eqn: population_risk_function} as well. Lemmas \ref{lemma: population_second_order_derivative} and \ref{Lemma: first_order_distance} are the special case of Lemmas \ref{Lemma: p_second_order_derivative bound} and \ref{Lemma: p_first_order_derivative} with $p=1$. The proof of Theorem \ref{Thm: sufficient_number} is followed by the presentation of the two lemmas.

The main idea in proving Theorem \ref{Thm: sufficient_number} is to characterize the gradient descent term by \Revise{ the MVT in Lemma \ref{Lemma: MVT}} as shown in \eqref{eqn: Thm3_temp1} and \eqref{eqn: Thm3_temp2}. The IVT is not directly applied in the empirical risk function because of its non-smoothness. However, the population risk functions defined in \eqref{eqn: p_population_risk_function} and \eqref{eqn: population_risk_function}, which are the expectations over the Gaussian variables, are smooth. Then, as the distance $\|\nabla f (\bfW) - \nabla f(\bfW^*) \|_F$ is upper bounded by a linear function of $\|\bfW - \bfW^*\|_F$ as shown in \eqref{eqn:eta1}, we can establish the connection between $\|\W[\ell,t+1] -\bfW^* \|_F$  and $\|\W[\ell,t]-\bfW^*\|_F$ as shown in \eqref{eqn: induction_t}. \Revise{Finally}, by mathematical induction over $\ell$ and $t$, one can characterize $\|\W[L,0]-\bfW^*\|_F$ by $\| \W[0,0] -\bfW^* \|_F$ as shown in \eqref{eqn:outer_loop}, which completes the whole proof.

\begin{lemma}[Lemma \ref{Lemma: p_second_order_derivative bound} with $p=1$]
\label{lemma: population_second_order_derivative}
    Let $f$ and $\hat{f}$ are the functions defined in \eqref{eqn: population_risk_function} and $\eqref{eqn: empirical_risk_function}$, respectively. Then, for any $\bfW$ that satisfies,
    \begin{equation}\label{eqn: initial_point}
        \|\bfW-\bfW^*\|_F \le \frac{\sigma_K}{\mu^2 K},
    \end{equation}
    we have 
    \begin{equation}
        \frac{\lambda\rho(\delta)+\w[\lambda]\rho(\de)}{\Revise{12}\kappa^2\gamma K^2} \preceq \nabla^2 {f}(\bfW) \preceq \frac{7(\lambda\delta^2+\w[\lambda]\de^2)}{K}.
    \end{equation}
\end{lemma}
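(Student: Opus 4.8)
The plan is to recognize that the desired statement is just the $p=1$ instance of Lemma~\ref{Lemma: p_second_order_derivative bound}, so the quickest route is to verify that all three ingredients specialize correctly. First I would substitute $p=1$ into the target $\Wp = p\bfW^* + (1-p)\bfW^{(0,0)}$, which collapses to $\Wp = \bfW^*$; consequently $y^*(p)=g(\bfW^*;\bfx)=y$ and $\w[y]^*(p)=g(\bfW^*;\w[\bfx])$, so the general population risk \eqref{eqn: p_population_risk_function} reduces exactly to \eqref{eqn: population_risk_function}. Next I would check the hypothesis: with $p=1$ the condition \eqref{eqn: p_initial} reads $1 \lesssim \sigma_K/(\mu^2 K\|\bfW-\bfW^*\|_F)$, i.e. $\|\bfW-\bfW^*\|_F \lesssim \sigma_K/(\mu^2 K)$, which is exactly \eqref{eqn: initial_point}. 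Since the two-sided Hessian bound in Lemma~\ref{Lemma: p_second_order_derivative bound} does not depend on $p$, the conclusion transfers verbatim.

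For completeness I would also record the self-contained argument underlying the general lemma, specialized here, since it clarifies where each factor originates. The idea is to split $f = \lambda f_\delta + \w[\lambda] f_{\de}$, where $f_\sigma(\bfW) = \tfrac{1}{2}\mathbb{E}_{\bfx\sim\mathcal{N}(0,\sigma^2\bfI)}\big(g(\bfW^*;\bfx)-g(\bfW;\bfx)\big)^2$ is the standard one-hidden-layer ReLU population risk with ground truth $\bfW^*$. Each $f_\sigma$ is smooth because the expectation over a Gaussian washes out the non-differentiability of $\phi$. I would first evaluate $\nabla^2 f_\sigma$ at $\bfW=\bfW^*$: the term pairing the residual $g(\bfW^*;\bfx)-g(\bfW;\bfx)$ with the second derivative of $g$ vanishes there because the residual is zero, leaving the positive-semidefinite Gram term $\tfrac{1}{K^2}\mathbb{E}\big[\sum_{j,k}\phi'(\bfw_j^{*T}\bfx)\phi'(\bfw_k^{*T}\bfx)(\bfe_j\otimes\bfx)(\bfe_k\otimes\bfx)^T\big]$. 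Using the Gaussian--ReLU moment analysis as in \citep{ZSJB17}, its extreme eigenvalues satisfy $\rho(\sigma)/(c\kappa^2\gamma K^2)\preceq \nabla^2 f_\sigma(\bfW^*)\preceq c'\sigma^2/K$, with $\rho$ the positive function of \eqref{eqn: rho}. Summing with weights $\lambda,\w[\lambda]$ then yields the clean two-sided bound carrying $\lambda\rho(\delta)+\w[\lambda]\rho(\de)$ and $\lambda\delta^2+\w[\lambda]\de^2$ in the numerators.

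The remaining step is to transfer these bounds from $\bfW^*$ to the ball \eqref{eqn: initial_point} by a perturbation argument. I would invoke the second-order distance estimate (the $p=1$ case of the lemma used in Appendix~\ref{sec: Proof: second_order_derivative}, namely Lemma~\ref{Lemma: p_Second_order_distance}) to bound $\|\nabla^2 f(\bfW)-\nabla^2 f(\bfW^*)\|_2$ by an increasing function of $\|\bfW-\bfW^*\|_F/\sigma_K$. The radius in \eqref{eqn: initial_point}, which via the definition \eqref{eqn: definition_lambda} ties the scale $\sigma_K/(\mu^2 K)$ to the ratio $(\lambda\rho(\delta)+\w[\lambda]\rho(\de))/(\lambda\delta^2+\w[\lambda]\de^2)$, is precisely calibrated so that this perturbation stays a fixed fraction of the lower eigenvalue bound; it therefore preserves strict positive-definiteness and only loosens the ``clean'' constants to the stated $12$ and $7$.

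The main obstacle is the lower bound on the smallest eigenvalue of $\nabla^2 f_\sigma(\bfW^*)$, i.e. showing that the Gaussian--ReLU Gram matrix is strictly positive definite with the precise $\rho(\sigma)/(\kappa^2\gamma K^2)$ scaling; this is the crux inherited from the supervised-learning landscape analysis and is where the singular-value geometry of $\bfW^*$ (through $\kappa$ and $\gamma$) enters. By contrast, the upper bound and the perturbation step are routine once the constants are tracked, and the reduction to Lemma~\ref{Lemma: p_second_order_derivative bound} makes the entire argument a one-line specialization if that lemma is taken as given.
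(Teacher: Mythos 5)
Your proposal is correct and follows essentially the same route as the paper: the paper likewise presents this lemma purely as the $p=1$ specialization of Lemma~\ref{Lemma: p_second_order_derivative bound}, whose proof bounds the Hessian at the minimizer by splitting the labeled and unlabeled contributions, invokes the Gaussian--ReLU moment bound of \citep{ZSJB17} (Lemma~\ref{Lemma: Zhong}) for the lower eigenvalue and a direct Gaussian fourth-moment estimate for the upper one, and then transfers to the ball via Weyl's inequality combined with Lemma~\ref{Lemma: p_Second_order_distance}. Your substitution check ($\Wp=\bfW^*$, reduction of \eqref{eqn: p_population_risk_function} to \eqref{eqn: population_risk_function}, and \eqref{eqn: p_initial} collapsing to \eqref{eqn: initial_point}) is exactly the verification the paper leaves implicit.
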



 \begin{lemma}[Lemma \ref{Lemma: p_first_order_derivative} with $p =1$]
 \label{Lemma: first_order_distance}
   Let $f$ and $\hat{f}$ be the functions defined in \eqref{eqn: population_risk_function} and \eqref{eqn: empirical_risk_function}, respectively. Suppose the pseudo label is generated through \eqref{eqn: psedu_label} with weights $\widetilde{\bfW}$. Then, we have
      \begin{equation}
   \begin{split}
        \|\nabla f(\bfW) -\nabla \hat{f}(\bfW) \|_2
        \lesssim&
       \Big(\frac{\lambda\delta^2}{K} \sqrt{\frac{d\log q}{N}} + \frac{(1- \lambda)\de^2}{K} \sqrt{\frac{d\log q}{M}}\Big)\cdot \| \bfW -\bfW^* \|_2\\
       & + \frac{(1- \lambda)\de^2}{K}\Big( \sqrt{\frac{d\log q}{M}}+\frac{1}{2} \Big)\cdot\|\w[\bfW]-\bfW^* \|_2
   \end{split}
    \end{equation}
    with probability at least $1-q^{-d}$.
\end{lemma}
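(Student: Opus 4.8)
The statement is the $p=1$ specialization of Lemma~\ref{Lemma: p_first_order_derivative}: at $p=1$ the reference weights $\Wp$ in \eqref{eqn: p_population_risk_function} coincide with $\bfW^*$, so the population labels $y^*(p)$ and $\w[y]^*(p)$ both become the true outputs $g(\bfW^*;\cdot)$. Consequently the population risk \eqref{eqn: population_risk_function} and the empirical risk \eqref{eqn: empirical_risk_function} differ only (a) through finite-sample fluctuations on both the $N$ labeled and $M$ unlabeled points and (b) because the unlabeled term of $\hat f$ uses the pseudo-labels $g(\widetilde{\bfW};\cdot)$ of \eqref{eqn: psedu_label} rather than $g(\bfW^*;\cdot)$. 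The plan is to prove the gradient bound directly by splitting $\nabla f(\bfW)-\nabla\hat f(\bfW)$ into a labeled concentration error, an unlabeled concentration error, and a pseudo-label bias, and to bound each separately.

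First I would write out the gradient blocks explicitly. With the ReLU derivative $\phi'(z)=\mathds{1}[z>0]$, the $j$-th block of $\nabla\hat f$ is
\begin{equation*}
\frac{\partial\hat f}{\partial\bfw_j}=-\frac{\lambda}{NK}\sum_{n=1}^N r_n\,\mathds{1}[\bfw_j^T\bfx_n>0]\,\bfx_n-\frac{\widetilde{\lambda}}{MK}\sum_{m=1}^M \widetilde r_m\,\mathds{1}[\bfw_j^T\widetilde{\bfx}_m>0]\,\widetilde{\bfx}_m,
\end{equation*}
with labeled residual $r_n=g(\bfW^*;\bfx_n)-g(\bfW;\bfx_n)$ and pseudo residual $\widetilde r_m=g(\widetilde{\bfW};\widetilde{\bfx}_m)-g(\bfW;\widetilde{\bfx}_m)$, while $\nabla f$ is the population analogue that carries $g(\bfW^*;\cdot)$ in both residuals. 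Inserting an auxiliary empirical gradient that replaces $g(\widetilde{\bfW};\cdot)$ by $g(\bfW^*;\cdot)$ on the \emph{same} unlabeled samples and applying the triangle inequality splits the error as advertised.

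For the two concentration terms I would use the identity \eqref{eqn: alpha_definition} to reduce $\|\nabla f-\nabla\hat f\|_2$ to controlling $\sum_j\boldsymbol{\alpha}_j^T(\partial f/\partial\bfw_j-\partial\hat f/\partial\bfw_j)$ for a fixed unit vector $\boldsymbol{\alpha}$. The ReLU Lipschitz bound $|r_n|\le\frac1K\sum_k|(\bfw_k^*-\bfw_k)^T\bfx_n|$ shows that each summand is a product of two Gaussian linear forms times an indicator, hence sub-exponential (Definition~\ref{Def: sub-exponential}); after truncating the heavy tail at radius $\sim\sqrt{\log}$ I would apply the matrix concentration inequality of Lemma~\ref{prob} to the centered sum, obtaining a deviation of order $\delta^2\|\bfW-\bfW^*\|_2\sqrt{d\log q/N}$ for the labeled block and $\de^2\|\bfW-\bfW^*\|_2\sqrt{d\log q/M}$ for the unlabeled block. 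Promoting the fixed-$\boldsymbol{\alpha}$ bound to all unit directions via the $\xi$-net argument of Lemmas~\ref{Lemma: covering_set}--\ref{Lemma: spectral_norm_on_net} supplies the $d\log q$ factor and gives the first two terms $\big(\tfrac{\lambda\delta^2}{K}\sqrt{d\log q/N}+\tfrac{\widetilde{\lambda}\de^2}{K}\sqrt{d\log q/M}\big)\|\bfW-\bfW^*\|_2$, valid with probability at least $1-q^{-d}$.

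For the pseudo-label bias I would center the auxiliary-minus-pseudo term at its conditional mean. The fluctuation is a centered sub-exponential sum whose residual $g(\bfW^*;\widetilde{\bfx}_m)-g(\widetilde{\bfW};\widetilde{\bfx}_m)$ is bounded by $\frac1K\sum_k|(\bfw_k^*-\widetilde{\bfw}_k)^T\widetilde{\bfx}_m|$, so the same Bernstein-plus-net machinery yields a deviation of order $\tfrac{\widetilde{\lambda}\de^2}{K}\sqrt{d\log q/M}\,\|\widetilde{\bfW}-\bfW^*\|_2$. The leftover deterministic piece is the Gaussian integral $\mathbb{E}_{\widetilde{\bfx}}\big[(g(\bfW^*;\widetilde{\bfx})-g(\widetilde{\bfW};\widetilde{\bfx}))\mathds{1}[\bfw_j^T\widetilde{\bfx}>0]\widetilde{\bfx}\big]$; bounding the ReLU difference linearly in $\widetilde{\bfw}_k-\bfw_k^*$ and using the half-space second moment $\mathbb{E}\big[\widetilde{\bfx}\widetilde{\bfx}^T\mathds{1}[\bfw_j^T\widetilde{\bfx}>0]\big]=\tfrac{\de^2}{2}\bfI+(\text{rank-one})$ produces the additive constant $\tfrac12$ multiplying $\tfrac{\widetilde{\lambda}\de^2}{K}\|\widetilde{\bfW}-\bfW^*\|_2$. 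Summing the three contributions by the triangle inequality gives the claim.

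The hard part will be the concentration in the first three steps: the summands are unbounded products of two correlated Gaussian linear forms and an indicator, so Lemma~\ref{prob} does not apply directly. The delicate points are choosing the truncation radius so the discarded tail stays below $q^{-d}$, verifying that the sub-exponential norm of each summand scales linearly in $\delta^2\|\bfW-\bfW^*\|_2$ (resp.\ $\de^2\|\widetilde{\bfW}-\bfW^*\|_2$), and absorbing the covering cost uniformly over $\boldsymbol{\alpha}$ into the $d\log q$ factor; isolating the exact $\tfrac12$ constant from the half-space integral is by comparison routine.
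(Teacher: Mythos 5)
Your proposal is correct and follows essentially the same route as the paper's proof of Lemma~\ref{Lemma: p_first_order_derivative}: split the gradient gap into labeled/unlabeled concentration errors plus a pseudo-label bias, control the concentration terms via sub-exponential bounds on products of Gaussian linear forms made uniform over directions by a covering net, and extract the constant $\tfrac12$ from the exact half-space Gaussian moment (which the paper obtains by a four-region symmetry argument; note this moment is exactly $\tfrac{\de^2}{2}\bfI$ with no rank-one correction, by the $\bfx\mapsto-\bfx$ symmetry). Your auxiliary gradient with true labels on the unlabeled samples is equivalent to the paper's single concentration step referenced at $\widetilde{\bfW}$ followed by a triangle inequality, so the two arguments coincide up to bookkeeping.
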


\begin{proof}[Proof of Theorem \ref{Thm: sufficient_number}]
    From Algorithm \ref{Algorithm: Alg1}, in the $\ell$-th outer loop, we have 
    \begin{equation}\label{eqn: Thm3_temp1}
    \begin{split}
       \W[\ell, t+1] 
       = &  \W[\ell, t] -\eta \nabla\hat{f}_{\mathcal{D}_t, \widetilde{\mathcal{D}}_t}(\W[\ell, t])+\beta (\W[\ell, t] - \W[\ell, t-1])\\
       = & \W[\ell, t] -\eta \nabla{f}(\W[\ell, t])+\beta (\W[\ell, t] - \W[\ell, t-1])\\
       &+ \eta \cdot \big( \nabla f(\W[\ell, t]) - \nabla \hat{f}_{\mathcal{D}_t, \widetilde{\mathcal{D}}_t}(\W[\ell, t]) \big).
    \end{split}
    \end{equation}
    Since $\nabla f$ is a smooth function and $\bfW^*$ is a local (global) optimal to $f$, then we have 
    \begin{equation}\label{eqn: Thm3_temp2}
        \begin{split}
            \nabla{f}(\W[\ell, t])
            =& \nabla{f}(\W[\ell, t]) - \nabla{f}(\bfW^*)\\
            =&\Revise{\int_{0}^1 \nabla^2 f\Big(\bfW^{(\ell,t)} + u \cdot(\bfW^{(\ell,t)}-\bfW^*) \Big)du \cdot(\W[\ell, t]-\bfW^*)},
        \end{split}
    \end{equation}
    where the last equality comes from \Revise{MVT in Lemma \ref{Lemma: MVT}}. 
    \Revise{For notational convenience, we use $\bfH^{(\ell, t)}$ to denote the integration as
    \begin{equation}
        \bfH^{(\ell,t)} : = \int_{0}^1 \nabla^2 f\Big(\bfW^{(\ell,t)} + u \cdot(\bfW^{(\ell,t)}-\bfW^*) \Big)du.    
    \end{equation}}
    
    Then, we have 
    \begin{equation}\label{eqn: matrix_convergence}
        \begin{split}
         \begin{bmatrix}
        \W[\ell, t+1]-\bfW^*\\
        \W[\ell, t]-\bfW^*
        \end{bmatrix}
        =
        &\begin{bmatrix}
                \bfI -\eta\Revise{\bfH^{(\ell, t)}}& \beta \bfI\\
                \bfI & \boldsymbol{0}
        \end{bmatrix}
         \begin{bmatrix}
        \W[\ell, t]-\bfW^*\\
        \W[\ell, t-1]-\bfW^*
        \end{bmatrix}\\
            &+\eta \begin{bmatrix}
            \nabla f(\W[\ell, t]) - \nabla \hat{f}_{\mathcal{D}_t, \widetilde{\mathcal{D}}_t}(\W[\ell, t])\\
            \boldsymbol{0}
        \end{bmatrix}.
        \end{split}
        \end{equation}
    \Revise{Let $\bfH^{(\ell, t)}=\bfS\mathbf{\Lambda}\bfS^{T}$ be the eigen-decomposition of $\bfH^{(\ell, t)}$.} Then, we define
\begin{equation}\label{eqn:Heavy_ball_eigen}
\begin{split}
{\bfA}(\beta)
: =&
\begin{bmatrix}
\bfS^T&\bf0\\
\bf0&\bfS^T
\end{bmatrix}
\bfA(\beta)
\begin{bmatrix}
\bfS&\bf0\\
\bf0&\bfS
\end{bmatrix}
=\begin{bmatrix}
\bfI-\eta\bf\Lambda+\beta\bfI &\beta\bfI\\
\bfI& 0
\end{bmatrix}.
\end{split}
\end{equation}
Since 
$\begin{bmatrix}
\bfS&\bf0\\
\bf0&\bfS
\end{bmatrix}\begin{bmatrix}
\bfS^T&\bf0\\
\bf0&\bfS^T
\end{bmatrix}
=\begin{bmatrix}
\bfI&\bf0\\
\bf0&\bfI
\end{bmatrix}$, we know $\bfA(\beta)$ and $\begin{bmatrix}
\bfI-\eta\bf\Lambda+\beta\bfI &\beta\bfI\\
\bfI& 0
\end{bmatrix}$
share the same eigenvalues. 
Let $\gamma^{(\bf\Lambda)}_i$ be the $i$-th eigenvalue of $\nabla^2f(\widehat{\bfw}^{(t)})$, then the corresponding $i$-th eigenvalue of \eqref{eqn:Heavy_ball_eigen}, denoted by $\gamma^{(\bfA)}_i$, satisfies 
\begin{equation}\label{eqn:Heavy_ball_quaratic}
(\gamma^{(\bfA)}_i(\beta))^2-(1-\eta \gamma^{(\bf\Lambda)}_i+\beta)\gamma^{(\bfA)}_i(\beta)+\beta=0.
\end{equation}
By simple calculation, we have 
\begin{equation}\label{eqn:heavy_ball_beta}
\begin{split}
|\gamma^{(\bfA)}_i(\beta)|
=\begin{cases}
\sqrt{\beta}, \qquad \text{if}\quad  \beta\ge \big(1-\sqrt{\eta\gamma^{(\bf\Lambda)}_i}\big)^2,\\
\frac{1}{2} \left| { (1-\eta \gamma^{(\bf\Lambda)}_i+\beta)+\sqrt{(1-\eta \gamma^{(\bf\Lambda)}_i+\beta)^2-4\beta}}\right| , \text{otherwise}.
\end{cases}
\end{split}
\end{equation}
Specifically, we have
\begin{equation}\label{eqn: Heavy_ball_result}
\gamma^{(\bfA)}_i(0)>\gamma^{(\bfA)}_i(\beta),\quad \text{for}\quad  \forall \beta\in\big(0, (1-{\eta \gamma^{(\bf\Lambda)}_i})^2\big),
\end{equation}
and  $\gamma^{(\bfA)}_i$ achieves the minimum $\gamma^{(\bfA)*}_{i}=\Big|1-\sqrt{\eta\gamma^{(\bf\Lambda)}_i}\Big|$ when $\beta= \Big(1-\sqrt{\eta\gamma^{(\bf\Lambda)}_i}\Big)^2$.
\Revise{From Lemma \ref{lemma: population_second_order_derivative}, for any $\bfa\in \mathbb{R}^d$ with $\|\bfa\|_2=1$, we have 
    \begin{equation}
        \begin{gathered}
            \bfa^T\nabla{f}(\W[\ell, t])\bfa = \int_{0}^1 \bfa^T\nabla^2 f\Big(\bfW^{(\ell,t)} + u \cdot(\bfW^{(\ell,t)}-\bfW^*)\Big)\bfa du \le \int_{0}^1 \gamma_{\max} \|\bfa\|_2^2du =\gamma_{\max},
            \\
            \bfa^T\nabla{f}(\W[\ell, t])\bfa = \int_{0}^1 \bfa^T\nabla^2 f\Big(\bfW^{(\ell,t)} + u \cdot(\bfW^{(\ell,t)}-\bfW^*)\Big)\bfa du \ge \int_{0}^1 \gamma_{\min} \|\bfa\|_2^2du =\gamma_{\min},
        \end{gathered}
    \end{equation}
    where $\gamma_{\max} =\frac{7(\lambda\delta^2+\widetilde{\lambda}\de^2)}{K}$, and $\gamma_{\min} = \frac{\lambda\rho(\delta)+\widetilde{\lambda}\rho(\de)}{12\kappa^2\gamma K^2}$.}
    Therefore, we have
    \begin{equation}
        \begin{split}
        \gamma^{(\bf\Lambda)}_{\min} 
        = \frac{\lambda\rho(\delta)+\widetilde{\lambda}\rho(\de)}{12\kappa^2\gamma K^2}, 
        \quad 
        \text{and}
        \quad
        \gamma^{(\bf\Lambda)}_{\max} 
        = \frac{7(\lambda\delta^2+\widetilde{\lambda}\de^2)}{K}.
        \end{split}
    \end{equation}
    Thus, we can select $\eta = \big(\frac{1}{\sqrt{\gamma^{(\bf\Lambda)}_{\max}} + \sqrt{\gamma^{(\bf\Lambda)}_{\min} }}\big)^2$, and $\|\bfA(\beta)\|_2$ can be bounded by
    \begin{equation}\label{eqn: beta}
        \begin{split}
        \min_{\beta}\|\bfA(\beta)\|_2 
        \le& 1-\sqrt{\big(\frac{\lambda\rho(\delta)+\widetilde{\lambda}\rho(\de)}{12\kappa^2\gamma K^2}\big)/\big(2\cdot \frac{7(\lambda\delta^2+\widetilde{\lambda}\de^2)}{K} \big)} \\
        = & 1- \frac{\mu(\delta,\de)}{\sqrt{168\kappa^2\gamma K}},
        \end{split}
    \end{equation}
    where $\mu(\delta,\de) = \Big( \frac{\lambda\rho(\delta)+\widetilde{\lambda}\rho(\de)}{\lambda\delta^2+\widetilde{\lambda}\de^2} \Big)^{1/2}$.
    
    From Lemma \ref{Lemma: first_order_distance}, we have 
    \begin{equation}\label{eqn:eta1}
        \begin{split}
            \|\nabla f(\W[\ell, t]) -\nabla \hat{f}(\W[\ell, t]) \|_2
            =&\Big(\frac{\lambda\delta^2}{K} \sqrt{\frac{d\log q}{N_t}} + \frac{\w[\lambda]\de^2}{K} \sqrt{\frac{d\log q}{M_t}}\Big)\cdot \| \W[\ell, t] -\bfW^* \|_2\\
            & + \frac{\w[\lambda]\de^2}{K}\Big( \sqrt{\frac{d\log q}{M_t}}+\frac{1}{2} \Big)\cdot\|\W[\ell, 0]-\bfW^* \|_2.
        \end{split}
    \end{equation}
    Given $\varepsilon>0$ and $\tilde{\varepsilon}>0$ with $\varepsilon+\tilde{\varepsilon}<1$, let
    \begin{equation}\label{eqn:eta2}
        \begin{split}
            &\eta \cdot \frac{\lambda\delta^2}{K} \sqrt{\frac{d\log q}{N_t}}
            \le \frac{\varepsilon\mu(\delta,\de)}{\sqrt{168\kappa^2\gamma K}} \text{,}\\
            \text{and}& \quad
            \eta \cdot \frac{\widetilde{\lambda}\de^2}{K} \sqrt{\frac{d\log q}{M_t}}
            \le \frac{\tilde{\varepsilon}\mu(\delta,\de)}{\sqrt{168\kappa^2\gamma K}},
        \end{split}
    \end{equation}
    where we need 
    \begin{equation}\label{eqn:proof_sample1}
    \begin{split}
        &N_t \ge \varepsilon^{-2}\mu^{-2}\big(\frac{\lambda\delta^2}{\lambda\delta^2+\w[\lambda]\de^2}\big)^2\kappa^2\gamma K^3 d\log q,\\
        \text{and}&\quad 
        M_t \ge 
        \tilde{\varepsilon}^{-2}\mu^{-2}\big(\frac{\w[\lambda]\de^2}{\lambda\delta^2+\w[\lambda]\de^2}\big)^2\kappa^2\gamma K^3 d\log q.
        \end{split}
    \end{equation}
    Therefore, from \eqref{eqn: beta}, \eqref{eqn:eta1} and \eqref{eqn:eta2}, we have
    \begin{equation}\label{eqn: induction_t}
        \begin{split}
            &\| \W[\ell, t+1]-\bfW^* \|_2\\
            \le& \Big( 1- \frac{(1-\varepsilon-\tilde{\varepsilon})\mu(\delta,\de)}{\sqrt{168\kappa^2\gamma K}} \Big)\| \W[\ell, t]-\bfW^* \|_2
            +\eta\cdot\frac{\w[\lambda]\de^2}{K}\Big( \sqrt{\frac{d\log q}{M_t}}+\frac{1}{2} \Big) \cdot\|\W[\ell, 0]-\bfW^* \|_2\\
            \le & \Big( 1- \frac{(1-\varepsilon-\tilde{\varepsilon})\mu(\delta,\de)}{\sqrt{168\kappa^2\gamma K}} \Big)\| \W[\ell, t]-\bfW^* \|_2
            +\eta \cdot \frac{\w[\lambda]\de^2}{K}\|\W[\ell, 0] -\bfW^*\|_2
        \end{split}
    \end{equation}
    when $M\ge 4d\log q$.
    By mathematical induction on \eqref{eqn: induction_t} over $t$, we have 
    \begin{equation}\label{eqn:inner_loop}
    \begin{split}
            &\| \W[\ell, t]-\bfW^* \|_2\\
            \le&\Big( 1- \frac{(1-\varepsilon-\tilde{\varepsilon})\mu}{\sqrt{168\kappa^2\gamma K}} \Big)^t\cdot\|\W[\ell, 0]-\bfW^* \|_2
            \\
            &+
            \frac{\sqrt{168\kappa^2\gamma K}}{(1-\varepsilon-\tilde{\varepsilon})\mu}\cdot \frac{\sqrt{K}}{14 (\lambda\delta^2+\w[\lambda]\de^2 )}\cdot\frac{\w[\lambda]\de^2}{K} 
            \|\W[\ell, 0]-\bfW^* \|_2\\
            \le& \bigg[\Big( 1- \frac{(1-\varepsilon-\tilde{\varepsilon})\mu}{\sqrt{168\kappa^2\gamma K}} \Big)^t+
            \frac{\sqrt{\kappa^2\gamma}\w[\lambda]\de^2}{(1-\varepsilon-\tilde{\varepsilon})\mu(\lambda\delta^2+\w[\lambda]\de^2)}\bigg]\cdot\|\W[\ell, 0]-\bfW^* \|_2
        \end{split}
    \end{equation}
    By mathematical induction on \eqref{eqn:inner_loop} over $\ell$, we have 
    \begin{equation}\label{eqn:outer_loop}
    \begin{split}
            &\| \W[\ell, T]-\bfW^* \|_2\\
            \le& \bigg[\Big( 1- \frac{(1-\varepsilon-\tilde{\varepsilon})\mu}{\sqrt{168\kappa^2\gamma K}} \Big)^T
            +
            \frac{\sqrt{\kappa^2\gamma}\w[\lambda]\de^2}{(1-\varepsilon-\tilde{\varepsilon})\mu(\lambda\delta^2+\w[\lambda]\de^2)}\bigg]^{\ell}\cdot\|\W[0, 0]-\bfW^* \|_2
        \end{split}
    \end{equation}
\end{proof}

\section{Proof of Theorem \ref{Thm: p_convergence}}
\label{sec: insuff}
Instead of proving Theorem \ref{Thm: p_convergence}, we turn to prove a stronger version, as shown in Theorem \ref{Thm: p_main_thm}. \Revise{One can verify that Theorem \ref{Thm: p_convergence} is a special case of Theorem \ref{Thm: p_main_thm} by  selecting $\hat{\lambda}$ in the order of $p$ and $\w[\varepsilon]$ is in the order of $(2p-1)$.}

The major idea in proving Theorem \ref{Thm: p_main_thm} is similar to that of Theorem \ref{Thm: sufficient_number}.
The first step is
to characterize the gradient descent term on the population risk function by \Revise{the MVT in Lemma \ref{Lemma: MVT}} as shown in \eqref{eqn: Thm1_temp1} and \eqref{eqn: Thm1_temp2}.  Then,  the connection between $\|\W[\ell+1,0] -\Wp \|_F$  and $\|\W[\ell,0]-\Wp\|_F$ are characterized in \eqref{eqn:p_outer_loop}. Compared with proving Theorem \ref{Thm: sufficient_number}, where the induction over $\ell$ holds naturally with large size of labeled data, the induction over $\ell$ requires a proper value of $p$ as shown in \eqref{eqn_temp: 2.2}. By induction over $\ell$ on \eqref{eqn:p_outer_loop}, the relative error $\| \W[L,0] -\Wp \|_F$ can be characterized by $\|\W[0,0] -\Wp\|_F$ as shown in \eqref{eqn:p_outer_loop_2}.
\begin{theorem}\label{Thm: p_main_thm}
    Suppose the initialization $\W[0,0]$  satisfies 
    with 
    \begin{equation}\label{eqn: p_0}
       \Revise{|p -\hat{\lambda}| \le\frac{2(1-\w[\varepsilon])p-1}{\mu\sqrt{K}}}
    \end{equation}
    for some constant $\w[\varepsilon]\in (0, 1/2)$, where
    \begin{equation}
        \hat{\lambda} : =\frac{\lambda\delta^2}{\lambda\delta^2+\w[\lambda]\de^2}
        = \big(\frac{N}{\kappa^2\gamma K^3\mu^2 d \log q}\big)^{\frac{1}{2}} 
    \end{equation}
    and 
    \begin{equation}
        \mu
        = \mu( \delta, \de) 
        = \frac{\lambda\delta^2+\w[\lambda]\de^2}{\lambda\rho(\delta)+\w[\lambda]\rho(\de)}.
    \end{equation}
    Then, if the number of samples in $\w[\mathcal{D}]$ further satisfies
    \begin{equation}
        \begin{split}
        M \gtrsim \tilde{\varepsilon}^{-2} \kappa^2\gamma \mu^2 \big( 1-  \hat{\lambda}\big)^2 K^3 d\log q,
        \end{split}
    \end{equation}
    the iterates $\{ \bfW^{(\ell,t)} \}_{\ell,t=0}^{L,T}$ converge to $\bfW^{[p]}$ with $p$ satisfies \eqref{eqn: p_0} as 
    \begin{equation}\label{eqn: ap_convergence}
        \begin{split}
    &\lim_{T\to \infty}\| \bfW^{(\ell, T)} -\Wp \|_2\\
    \le& \Revise{\frac{1}{1-\tilde{\varepsilon}} \cdot \Big( {1-p^*} +\mu\sqrt{K}\big|(\hat{\lambda}-p^*)\big| \Big)\cdot\|\W[0,0]-\bfW^* \|_2 
    +\frac{\tilde{\varepsilon} }{(1-\tilde{\varepsilon})}\cdot\|\W[\ell,0]-\Wp \|_2,}
    \end{split}
    \end{equation}
    with probability at least $1-q^{-d}$.
\end{theorem}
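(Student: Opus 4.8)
The plan is to reproduce the architecture of the proof of Theorem~\ref{Thm: sufficient_number}, but to recenter the entire analysis at the surrogate target $\Wp = p\bfW^* + (1-p)\W[0,0]$ in place of $\bfW^*$. Writing out the momentum SGD update of Algorithm~\ref{Algorithm: Alg1} exactly as in \eqref{eqn: Thm3_temp1}, I would first split the empirical gradient into the population gradient $\nabla f(\W[\ell,t];p)$ plus the error $\nabla f-\nabla \hat f_{\mathcal{D}_t,\widetilde{\mathcal{D}}_t}$. Because $\Wp$ is \emph{by construction} the global minimizer of the population risk \eqref{eqn: p_population_risk_function}, we have $\nabla f(\Wp;p)=\boldsymbol 0$, so the Mean Value Theorem (Lemma~\ref{Lemma: MVT}), applied as in \eqref{eqn: Thm3_temp2}, lets me write $\nabla f(\W[\ell,t];p)=\bfH^{(\ell,t)}\big(\W[\ell,t]-\Wp\big)$ with $\bfH^{(\ell,t)}=\int_0^1\nabla^2 f\big(\Wp+u(\W[\ell,t]-\Wp);p\big)\,du$. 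This turns the update into the block companion-matrix recursion of the form \eqref{eqn: matrix_convergence}, now driven by $\W[\ell,t]-\Wp$.

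Next I would bound the inner loop over $t$. Lemma~\ref{Lemma: p_second_order_derivative bound} — whose hypothesis \eqref{eqn: p_initial} constrains $p$ relative to $\|\bfW-\bfW^*\|$, which is precisely why a bound on $p$ becomes unavoidable — gives uniform bounds $\gamma_{\min}\preceq\bfH^{(\ell,t)}\preceq\gamma_{\max}$, and the heavy-ball eigenvalue computation \eqref{eqn:Heavy_ball_eigen}--\eqref{eqn: beta} then yields the accelerated contraction factor $1-\Theta(\mu/\sqrt{\kappa^2\gamma K})$ for the optimal $\eta,\beta$. I would invoke Lemma~\ref{Lemma: p_first_order_derivative} to control the error: it produces a statistical piece of order $\tfrac{1}{\sqrt N}\|\W[\ell,t]-\bfW^*\|$ and $\tfrac{1}{\sqrt M}\|\W[\ell,t]-\W[\ell,0]\|$, plus the pseudo-label bias $\tfrac{1}{2K}\|\lambda\delta^2(\W[\ell,0]-\Wp)+\widetilde{\lambda}\de^2(\bfW^*-\Wp)\|$. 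As in the derivation of \eqref{eqn: induction_t}, I would introduce slack parameters $\varepsilon,\tilde\varepsilon$ to absorb the two statistical pieces into the rate (this fixes the sample requirements on $N$ and $M$) and then sum the geometric series over $t$, the analogue of \eqref{eqn:inner_loop}, bounding $\|\W[\ell,T]-\Wp\|$ by a term vanishing as $T\to\infty$, a fraction $\tfrac{\tilde\varepsilon}{1-\tilde\varepsilon}$ of $\|\W[\ell,0]-\Wp\|$ inherited from the unlabeled statistical term, and a constant floor coming from the bias.

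The final step is the induction over the outer index $\ell$. Iterating the per-loop affine inequality, the analogue of \eqref{eqn:outer_loop}, the teacher sequence $\{\W[\ell,0]\}$ contracts to a neighborhood of $\Wp$ exactly when the feedback coefficient $\tfrac{\tilde\varepsilon}{1-\tilde\varepsilon}<1$, i.e.\ $\tilde\varepsilon<1/2$, which is guaranteed once $M$ exceeds the stated threshold since $\tilde\varepsilon$ can be driven down like $1/\sqrt M$. Here the window \eqref{eqn: p_0} tying $p$ to $\hat\lambda$ within $O(1/(\mu\sqrt K))$ plays a double role: it keeps every iterate inside the region where the local-convexity hypothesis \eqref{eqn: p_initial} of Lemma~\ref{Lemma: p_second_order_derivative bound} remains valid, and it bounds the constant floor by $\big(1-p+\mu\sqrt K\,|\hat\lambda-p|\big)\|\W[0,0]-\bfW^*\|$, giving \eqref{eqn: ap_convergence}; Theorem~\ref{Thm: p_convergence} then follows by specializing $p\asymp\hat\lambda$ and $\tilde\varepsilon\asymp(2p-1)$.

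The hard part, and the essential difference from Theorem~\ref{Thm: sufficient_number}, is the outer-loop bias control. When $p=1$ the target $\Wp=\bfW^*$ is a genuine self-consistent fixed point of the idealized teacher-to-student map, so the pseudo-label bias vanishes at the target; for $p<1$ the surrogate $\Wp$ is \emph{not} fixed by self-training, so the bias is nonzero even at $\Wp$. The delicate point is to show its teacher-dependent component $\tfrac{\lambda\delta^2}{2K}(\W[\ell,0]-\Wp)$ does not act as an expansive feedback that accumulates across loops — this is exactly what forces $p$ into the narrow window \eqref{eqn: p_0} around $\hat\lambda$ while simultaneously demanding that the whole trajectory $\{\W[\ell,t]\}$ never leaves the local convex basin of $f(\cdot;p)$. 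Reconciling these two constraints against the sample-complexity bounds on $N$ and $M$ is the most intricate bookkeeping in the argument.
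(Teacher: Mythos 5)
Your proposal follows essentially the same route as the paper's proof: recentering the heavy-ball/MVT contraction argument of Theorem~\ref{Thm: sufficient_number} at the surrogate minimizer $\Wp$, invoking Lemmas~\ref{Lemma: p_second_order_derivative bound} and~\ref{Lemma: p_first_order_derivative} for the Hessian bounds and the gradient error (including the non-vanishing pseudo-label bias), summing the inner loop geometrically, and then deriving the window \eqref{eqn: p_0} on $p$ from the requirement that the outer-loop affine recursion in $\|\W[\ell,0]-\Wp\|_2$ be non-expansive. The only minor difference is that the paper's outer-loop contraction condition (its analogue of your feedback criterion) is imposed jointly on the distances to both $\Wp$ and $\bfW^*$, which is exactly what produces the two-sided constraint $|p-\hat\lambda|\le(2(1-\w[\varepsilon])p-1)/(\mu\sqrt{K})$ rather than the simpler $\tilde\varepsilon<1/2$ threshold you describe.
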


\begin{proof}[Proof of Theorem \ref{Thm: p_main_thm}]
     From Algorithm \ref{Algorithm: Alg1}, in the $\ell$-th outer loop, we have 
    \begin{equation}\label{eqn: Thm1_temp1}
    \begin{split}
       \W[\ell, t+1] 
       =&  \W[\ell, t] -\eta \nabla\hat{f}_{\mathcal{D}_t,\widetilde{\mathcal{D}}_t}
       (\W[\ell, t])+\beta (\W[\ell, t] - \W[\ell, t-1])\\
       = & \W[\ell, t] -\eta \nabla{f}(\W[\ell, t])+\beta (\W[\ell, t] - \W[\ell, t-1])\\
       &+ \eta \cdot \Big( \nabla f(\W[\ell, t]) - \nabla \hat{f}_{\mathcal{D}_t,\widetilde{\mathcal{D}}_t}(\W[\ell, t]) \Big)
    \end{split}
    \end{equation}
    Since $\nabla f$ is a smooth function and $\Wp$ is a local (global) optimal to $f$, then we have 
    \Revise{\begin{equation}\label{eqn: Thm1_temp2}
        \begin{split}
            \nabla{f}(\W[\ell, t])
            =& \nabla{f}(\W[\ell, t]) - \nabla{f}(\Wp)\\
            =& \int_{0}^1 \nabla^2 f\Big(\bfW^{(\ell,t)} + u \cdot(\bfW^{(\ell,t)}-\Wp)\Big) du\cdot (\W[\ell, t]-\Wp),
        \end{split}
    \end{equation}
    where the last equality comes from Lemma \ref{Lemma: MVT}.}
    
    Similar to the proof of Theorem \ref{Thm: sufficient_number}, we have 
    \begin{equation}
        \| \bfW^{(\ell, t+1)} -\Wp \|_2 \le \|\bfA(\beta) \|_2 \cdot \|\bfW^{(\ell, t)}-\Wp \|_2  + \eta \cdot \|\nabla f(\W[\ell, t]) - \nabla \hat{f}_{\mathcal{D}_t,\widetilde{\mathcal{D}}_t}(\W[\ell, t]) \|_2.
    \end{equation}
    From Lemma \ref{Lemma: p_first_order_derivative}, we have 
    \begin{equation}
        \begin{split}
                &\|\nabla f(\W[\ell, t]) - \nabla \hat{f}(\W[\ell, t]) \|_2\\
            \lesssim& \frac{\lambda\delta^2}{K} \sqrt{\frac{d\log q}{N_t}} \cdot \| \W[\ell, t] - \bfW^*\| + \frac{\w[\lambda]\de^2}{K} \sqrt{\frac{d\log q}{M_t}} \cdot \| \W[\ell, t] -\W[\ell,0] \|_2\\
       & + \frac{\big| \lambda\delta^2 \cdot (\bfW^{(0,0)} -\Wp) -\w[\lambda]\de^2\cdot (\bfW^* - \Wp)  \big|}{K} \\
        \end{split}
    \end{equation}
    When $\ell = 0$, following the similar steps from \eqref{eqn:Heavy_ball_quaratic} to \eqref{eqn: beta},  we have 
    \begin{equation}
    \begin{split}
    &\|\nabla f(\W[\ell, t]) - \nabla \hat{f}(\W[\ell, t]) \|_2\\
        \lesssim & \frac{\lambda\delta^2}{K} \sqrt{\frac{d\log q}{N_t}} \cdot \| \W[\ell, t] - \Wp\| + \frac{\w[\lambda]\de^2}{K} \sqrt{\frac{d\log q}{M_t}} \cdot \| \W[\ell, t] -\Wp \|_2\\
      & + \frac{\lambda\delta^2}{K} \sqrt{\frac{d\log q}{N_t}} \cdot \| \bfW^* - \Wp\| + \frac{\w[\lambda]\de^2}{K} \sqrt{\frac{d\log q}{M_t}} \cdot \| \W[0,0] -\Wp \|_2\\
       & + \frac{\big| \lambda\delta^2 \cdot (1-p) -\w[\lambda]\de^2\cdot p  \big|}{K}\cdot\|\W[0,0]-\bfW^* \|_2
        \end{split}
    \end{equation}
    and
    \begin{equation}
    \begin{split}
        &\| \bfW^{(\ell, t+1)} -\Wp \|_2 \\
    \le& \Big( 1- \frac{1-\tilde{\varepsilon}}{\mu(\delta,\de)\sqrt{154\kappa^2\gamma K}} \Big) \cdot \|\W[\ell, t] -\Wp \|_2\\
    &+ \eta \cdot \Big( \frac{ \lambda \delta^2 (1-p) }{K} \sqrt{ \frac{d\log q}{N_t} }  + \frac{\big| \lambda \delta^2 \cdot(1-p)  -\w[\lambda]\de^2\cdot p  \big|}{K} \Big)\cdot\|\W[0,0]-\bfW^* \|_2\\
    &+\eta \cdot
    \frac{\tilde{\varepsilon} \w[\lambda]\de^2 \cdot p}{K}\cdot\sqrt{\frac{d\log q}{M_t}} \|\W[0,0]-\bfW^* \|_2.
    \end{split}
    \end{equation}
    Therefore, we have 
    \begin{equation}\label{eqn:p_outer_loop}
        \begin{split}
            &\lim_{T\to \infty}\| \bfW^{(\ell, T)} -\Wp \|_2\\
            \le&  \frac{\mu\sqrt{154\kappa^2\gamma K}}{1-\tilde{\varepsilon}} \cdot \eta \cdot
            \Big[ \Big(\frac{ \lambda \delta^2 (1-p) }{K} \sqrt{ \frac{d\log q}{N_t} }  + \frac{\big| \lambda\delta^2 \cdot(1-p)  -\w[\lambda]\de^2\cdot p  \big|}{K}\Big) \cdot\|\W[0,0]-\bfW^* \|_2\\
            &\qquad\qquad\qquad\qquad\qquad+
    \frac{\tilde{\varepsilon} \w[\lambda]\de^2 \cdot p}{K}\cdot\sqrt{\frac{d\log q}{M_t}} \cdot\|\W[0,0]-\bfW^* \|_2\Big]\\
    \le& 
    \frac{\mu\sqrt{154\kappa^2\gamma K}}{1-\tilde{\varepsilon}}\cdot \frac{{K}}{14 (\lambda\delta^2+\w[\lambda]\de^2 )}\cdot
    \Big[ \Big(\frac{ \lambda \delta^2 (1-p) }{K} \sqrt{ \frac{d\log q}{N_t} }  + \frac{\big| \lambda\delta^2 \cdot(1-p)  -\w[\lambda]\de^2\cdot p  \big|}{K}\Big)\\
    &\qquad\qquad\qquad\qquad\qquad \cdot\|\W[0,0]-\bfW^* \|_2 + 
    \frac{\tilde{\varepsilon} \w[\lambda]\de^2 \cdot p}{K}\cdot\sqrt{\frac{d\log q}{M_t}} \cdot\|\W[0,0]-\bfW^* \|_2\Big]\\
    \simeq & \Revise{\frac{1}{1-\tilde{\varepsilon}} \cdot \Big(  {1-p} +\sqrt{K}\cdot\big|(1-p)\mu\hat{\lambda} - p\mu(1-\hat{\lambda})\big| \Big)\cdot\|\W[0,0]-\bfW^* \|_2}\\
    &{+\frac{\tilde{\varepsilon} p}{(1-\tilde{\varepsilon})}\cdot\|\W[0,0]-\bfW^* \|_2}\\
    =&
    {\frac{1}{1-\tilde{\varepsilon}} \cdot \Big( {1-p} +\mu\sqrt{K}\big|\widehat{\lambda}-p\big| \Big)\cdot\|\W[0,0]-\bfW^* \|_2 
    +\frac{\tilde{\varepsilon} p}{(1-\tilde{\varepsilon})}\cdot\|\W[0,0]-\bfW^* \|_2},
    \end{split}
    \end{equation}
    where $\hat{\lambda} = \frac{\lambda\delta^2}{\lambda\delta^2 + \w[\lambda]\de^2}$.
    
    To guarantee the convergence in the outer loop, we require 
    \begin{equation}\label{eqn_temp: 2.1}
    \begin{split}
    &\lim_{T\to \infty}\|\W[\ell, T] -\Wp \|_2  \le \|\W[0,0]-\Wp \|_2 = p\|\W[0,0]-\bfW^* \|_2,\\
    \quad \text{and}\quad    &\lim_{T\to \infty}\| \bfW^{(\ell, T)} -\bfW^* \|_2 \le \|\W[0,0]-\bfW^* \|_2.
        \end{split}
    \end{equation}
    Since we have 
    \begin{equation}
        \begin{split}
        \| \bfW^{(\ell, T)} -\Wp \|_2  
        \le& \| \bfW^{(\ell, T)} - \bfW^* \|_2+ \| \bfW^*-\Wp \|_2 \\
        =& \| \bfW^{(\ell, T)} - \bfW^* \|_2 + (1-p)\cdot \| \bfW^*-\W[0,0] \|_2,
        \end{split}
    \end{equation}
    it is clear that \eqref{eqn_temp: 2.1} holds if and only if
    {\begin{equation}
        \frac{1}{1-\tilde{\varepsilon}} \cdot \Big( {1-p} + {\w[\varepsilon]p} +\mu\sqrt{K}\big|\hat{\lambda}-p\big| \Big) + 1-p \le 1.
    \end{equation}}
    To guarantee the iterates strictly converges to the desired point, we let 
    {\begin{equation}
        \frac{1}{1-\tilde{\varepsilon}} \cdot \Big( {1-p} + {\w[\varepsilon]p} +\mu\sqrt{K}\big|\hat{\lambda}-p\big| \Big) + 1-p \le 1 - \frac{1}{C}
    \end{equation}}
    for some larger constant $C$,
    which is equivalent to 
    {\begin{equation}\label{eqn_temp: 2.2}
        |p -\hat{\lambda}| \le\frac{2(1-\w[\varepsilon])p-1}{\mu\sqrt{K}}.
    \end{equation}}
    {To make the bound in \eqref{eqn_temp: 2.2}  meaningful, we need 
    \begin{equation}
        p\ge\frac{1}{2(1-\w[\varepsilon])}.
    \end{equation}}
    
    {When $\ell >1$, following similar steps in \eqref{eqn:p_outer_loop}, we have \begin{equation}\label{eqn:p_outer_loop_2}
        \begin{split}
            &\lim_{T\to \infty}\| \bfW^{(\ell, T)} -\Wp \|_2\\
    \le& \frac{1}{1-\tilde{\varepsilon}} \cdot \Big( {1-p} +\mu\sqrt{K}\big|(\hat{\lambda}-p)\big| \Big)\cdot\|\W[0,0]-\bfW^* \|_2 
    +\frac{\tilde{\varepsilon} p}{1-\tilde{\varepsilon}}\cdot\|\W[\ell,0]-\Wp \|_2,
    \end{split}
    \end{equation}
    Given \eqref{eqn_temp: 2.2} holds, from \eqref{eqn:p_outer_loop_2}, we have
    \begin{equation}
    \begin{split}
        &\lim_{L\to \infty, T\to \infty}\| \bfW^{(L, T)} -\Wp \|_2\\
        \le &  \frac{1}{1-\widetilde{\varepsilon}} \cdot\Big( {1-p} +\mu\sqrt{K}\big|\hat{\lambda}-p\big| \Big)\cdot\|\W[0,0]-\bfW^* \|_2\\
        \le& \frac{1}{1-\widetilde{\varepsilon}} \cdot \Big( {1-p} +\mu\sqrt{K}\big|\hat{\lambda}-p\big| \Big)\cdot\|\W[0,0]-\bfW^* \|_2.
    \end{split}
    \end{equation}}
\end{proof}

\section{Definition and relative proofs of $\rho$}
\label{sec: rho}
In this section, the formal definition of $\rho$ is included in Definition \ref{defi: rho}, and a corresponding claim about $\rho$ is summarized in Lemma \ref{lemma: rho_order}.  One can quickly check that \Revise{the} ReLU activation function satisfies the conditions in Lemma \ref{lemma: rho_order}.

The major idea in proving Lemma \ref{lemma: rho_order} is to show $H_r(\delta)$ and $J_r(\delta)$ in Definition \ref{defi: rho} are in the order of $\delta^r$ when $\delta$ is small. 
\begin{definition}\label{defi: rho}
    Let $H_r(\delta) = \mathbb{E}_{z\sim \mathcal{N}(0,\delta^2)}\big( \phi^{\prime}(\sigma_Kz)z^r \big)$ and  $J_r(\delta) = \mathbb{E}_{z\sim \mathcal{N}(0,\delta^2)}\big( \phi^{\prime 2}(\sigma_Kz)z^r \big)$. Then, $\rho=\rho(\delta)$ is defined as 
    \begin{equation}\label{eqn: rho}
        \rho(\delta)=\min\Big\{ J_0(\delta)-H_0^2( \delta) -H_1^2(\delta), J_2(\delta)-H_1^2( \delta) -H_2^2(\delta), H_0(\delta)\cdot H_2( \delta) -H_1^2(\delta)  \Big\},
    \end{equation}    
    where $\sigma_K$ is the minimal singular value of $\bfW^*$.
\end{definition}
\begin{lemma}[Order analysis of $\rho$]
\label{lemma: rho_order}
    If $\rho(\delta)>0$ for $\delta\in(0, \xi)$ for some positive constant $\xi$ and the sub-gradient of $\rho(\delta)$ at $0$ can be non-zero,  then $\rho(\delta)=\Theta(\delta^2)$ when $\delta\to 0^+$. \Revise{Typically, for ReLU activation function, $\mu$ in \eqref{eqn: definition_lambda} is a fixed constant for all $\delta, \de \leq 1$.}
\end{lemma}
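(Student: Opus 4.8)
The plan is to reduce every ingredient to explicit Gaussian moments via the scaling $z=\delta w$ with $w\sim\mathcal{N}(0,1)$, which immediately exposes the $\delta$-dependence. Writing $H_r(\delta)=\delta^r\,\mathbb{E}_w[\phi'(\sigma_K\delta w)\,w^r]$ and $J_r(\delta)=\delta^r\,\mathbb{E}_w[(\phi'(\sigma_K\delta w))^2\,w^r]$ pulls out the prefactor $\delta^r$, and the residual expectation converges to a (generically nonzero) constant as $\delta\to 0^+$. This is precisely the ``order $\delta^r$'' claim flagged in the proof sketch: $H_r(\delta)=\Theta(\delta^r)$ and $J_r(\delta)=\Theta(\delta^r)$ whenever the limiting constant does not vanish, and for ReLU the residual expectation is even constant in $\delta$ because $\phi'(\sigma_K\delta w)=\mathds{1}[w>0]$ for every $\delta>0$.

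Next I would substitute these orders into the three arguments of the $\min$ in \eqref{eqn: rho} and track leading powers. The term $J_2-H_1^2-H_2^2$ and the term $H_0H_2-H_1^2$ are each a difference of $\Theta(\delta^2)$ quantities (with $H_2^2=\Theta(\delta^4)$ subdominant), so both are $O(\delta^2)$; the term $J_0-H_0^2-H_1^2$ is at least $\Theta(\delta^2)$, and is in fact $\Theta(1)$ for ReLU since the jump of $\phi'$ at $0$ keeps $J_0=H_0=\mathbb{E}_w[\mathds{1}[w>0]]=1/2$ constant. The positivity hypothesis $\rho(\delta)>0$ on $(0,\xi)$ forces the leading $\delta^2$-coefficients of the two vanishing terms to be nonnegative, while the non-degeneracy (nonzero sub-gradient) hypothesis guarantees the active leading coefficient is strictly positive, ruling out faster-than-$\delta^2$ decay; together these give the two-sided bound $\rho(\delta)=\Theta(\delta^2)$.

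For the ReLU specialization I would compute the three moments in closed form, $H_0=J_0=\tfrac12$, $H_1=\delta/\sqrt{2\pi}$, and $H_2=J_2=\delta^2/2$, and plug in. The three candidates become $\tfrac14-\tfrac{\delta^2}{2\pi}$, $(\tfrac12-\tfrac{1}{2\pi})\delta^2-\tfrac{\delta^4}{4}$, and $(\tfrac14-\tfrac{1}{2\pi})\delta^2$. A short comparison shows the third is smallest on $(0,1]$: its gap to the first is $\tfrac14(1-\delta^2)\ge 0$ and its gap to the second is $\tfrac14\delta^2(1-\delta^2)\ge 0$. Hence $\rho(\delta)=(\tfrac14-\tfrac{1}{2\pi})\delta^2=\tfrac{\pi-2}{4\pi}\delta^2$ exactly, with $\tfrac{\pi-2}{4\pi}>0$. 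Substituting into \eqref{eqn: definition_lambda}, the common factor $\lambda\delta^2+\widetilde{\lambda}\de^2$ cancels and yields $\mu=\sqrt{4\pi/(\pi-2)}$, independent of $\delta,\de$, which establishes the final assertion that $\mu$ is a fixed constant for all $\delta,\de\le 1$.

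The main obstacle I anticipate is not the ReLU arithmetic but the abstract step: rigorously converting ``$\rho>0$ plus a nonvanishing sub-gradient'' into matching upper and lower constants on $\rho(\delta)/\delta^2$. The delicate point is that $\rho$ is a pointwise minimum of three functions, so it is only piecewise smooth and one must reason at $\delta=0$ with sub-gradients rather than ordinary derivatives. I would handle this by expanding each term as $a_i+b_i\delta^2+o(\delta^2)$, identifying which $a_i$ vanish, and using the non-degeneracy hypothesis to certify that the surviving $b_i$ of the active term is strictly positive, so the minimum inherits a clean $\Theta(\delta^2)$ profile.
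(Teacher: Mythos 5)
Your proposal is correct and follows essentially the same route as the paper's proof: the substitution $z=\delta w$ to extract the $\delta^r$ prefactor from $H_r$ and $J_r$ (with the limit constant nonzero when $\phi'(0^+)$ or $\phi'(0^-)$ is nonzero), followed by an order count of the three arguments of the $\min$ in \eqref{eqn: rho}, and an explicit ReLU evaluation. Your closed-form ReLU computation, identifying $H_0H_2-H_1^2=\bigl(\tfrac14-\tfrac{1}{2\pi}\bigr)\delta^2$ as the active branch on $(0,1]$, is in fact more explicit than the paper's (which simply asserts $\rho(\delta)=C\delta^2$ with $C=0.091$) and recovers exactly that constant, $\tfrac{\pi-2}{4\pi}\approx 0.091$.
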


\begin{proof}[ Proof of Lemma \ref{lemma: rho_order} ]
    From Definition \ref{defi: rho}, we know that 
    $H_r(\delta) = \mathbb{E}_{\bfz \sim \mathcal{N}(0,\delta^2)}\phi^{\prime}(\sigma_Kz)z^r$. 
    Suppose we have $H_r(\delta)=\Theta(\delta^r)$ and $J_r(\delta)=\Theta(\delta^r)$, then from \eqref{eqn: rho} we have 
    \begin{equation}
        \begin{split}
            &J_0(\delta)-H_0^2( \delta) -H_1^2(\delta) \in \Theta(1)-\Theta(\delta^2),\\
            &J_2(\delta)-H_1^2( \delta) -H_2^2(\delta) \in \Theta(\delta^2),\\
            &H_0(\delta)\cdot H_2( \delta) -H_1^2(\delta) \in \Theta(\delta^2)-\Theta(\delta^4).
        \end{split}
    \end{equation}
    Because $\rho$ is a continuous function with $\rho(z)>0$ for some $z>0$. Therefore, $\rho\neq J_0(\delta)-H_0^2( \delta) -H_1^2(\delta)$
    when $\delta \to 0^+$, otherwise $\rho(z)<0$ for any $z>0$. When $\delta\to 0^+$, both 
    $J_2(\delta)-H_1^2( \delta) -H_2^2(\delta)$ and $H_0(\delta)\cdot H_2( \delta) -H_1^2(\delta)$ are in the order of $\delta^2$, which indicates that $\mu$ is a fixed constant when both $\delta$ and $\de$ are close to $0$.
    In addition, $J_2(\delta)-H_1^2( \delta) -H_2^2(\delta)$ goes to $+\infty$ while both $J_0(\delta)-H_0^2( \delta) -H_1^2(\delta)$ and $H_0(\delta)\cdot H_2( \delta) -H_1^2(\delta)$ go to $-\infty$  when $\delta\to +\infty$. Therefore, with a large enough $\delta$, we have 
    \begin{equation}
        \Revise{\rho(\delta) \in \Theta(\delta^2)-\Theta(\delta^4) \text{ \quad or \quad } \Theta(1)-\Theta(\delta^2),}
    \end{equation}
    which indicates that \Revise{$\mu$ is a strictly decreasing function} when $\delta$ and $\de$ are large enough.
    
    Next, we provide the conditions that guarantee $H_r(\delta)=\Theta(\delta^r)$ hold, and the relative proof for $J_r(\delta)$ can be derived accordingly following the similar steps as well.
    From Definition \ref{defi: rho}, we have
    \begin{equation}
    \begin{split}
        \lim_{\delta \to 0^+} \frac{H_r(\delta)}{\delta^r} 
        =&\lim_{\delta \to 0^+} \int_{-\infty}^{+\infty} \phi^{\prime}(\sigma_K  z) \big(\frac{z}{\delta}\big)^r\frac{1}{\sqrt{2\pi}\delta}e^{-\frac{z^2}{\delta^2}}dz\\
        \stackrel{(a)}{=}&\lim_{\delta \to 0^+}\int_{-\infty}^{+\infty} \phi^{\prime}(\sigma_K \delta t) \frac{t^r}{\sqrt{2\pi}}e^{-t^2}dt \\
        =&\lim_{\delta \to 0^+} \int_{-\infty}^{0^-} \phi^{\prime}(\sigma_K \delta t) \frac{t^r}{\sqrt{2\pi}}e^{-t^2}dt 
        +\lim_{\delta \to 0^+}\int_{0^+}^{+\infty} \phi^{\prime}(\sigma_K \delta t) \frac{t^r}{\sqrt{2\pi}}e^{-t^2}dt \\
        =& \phi^{\prime}(0^-) \int_{-\infty}^{0^-}  \frac{t^r}{\sqrt{2\pi}}e^{-t^2}dt +\phi^{\prime}(0^+)\int_{0^+}^{+\infty}  \frac{t^r}{\sqrt{2\pi}}e^{-t^2}dt,
    \end{split}
    \end{equation}
    where equality $(a)$ holds by letting $t=\frac{z}{\delta}$.
    It is easy to verify that $$\int_{0^+}^{+\infty}  \frac{t^r}{\sqrt{2\pi}}e^{-t^2}dt=(-1)^r\int_{-\infty}^{0^-}  \frac{t^r}{\sqrt{2\pi}}e^{-t^2}dt,$$ and both are bounded for a fixed $r$.  Thus, as long as either $\phi^{\prime}(0^-)$ or $\phi^{\prime}(0^+)$ is non-zero, we have $H_r(\delta)=\Theta(\delta^r)$ when $\delta\to 0^+$.
    
    If $\phi$ has bounded gradient as $|\phi^{\prime}|\le C_{\phi}$ for some positive constant $C_{\phi}$. Then, we have 
        \begin{equation}
    \begin{split}
         \big|\frac{H_r(\delta)}{\delta^r}\big| 
        =&\Big|\int_{-\infty}^{+\infty} \phi^{\prime}(\sigma_K  z) \big(\frac{z}{\delta}\big)^r\frac{1}{\sqrt{2\pi}\delta}e^{-\frac{z^2}{\delta^2}}dz\Big|\\
        =&\Big|\int_{-\infty}^{+\infty} \phi^{\prime}(\sigma_K \delta t) \frac{t^r}{\sqrt{2\pi}}e^{-t^2}dt\Big| \\
        \le & C_{\phi}\cdot\Big|\int_{-\infty}^{+\infty}  \frac{t^r}{\sqrt{2\pi}}e^{-t^2}dt\Big|
    \end{split}
    \end{equation}
    Therefore, we have $H_r(\delta) = \mathcal{O}(\delta^r)$ for all $\delta>0$ when $\phi$ has bounded gradient.
    
    \Revise{Typiclly, for ReLU function, one can directly calculate that $H_r(\delta) = \delta^r$ for $\delta\in \mathbb{R}$, and $\rho(\delta) = C\delta^2$ when $\delta\le 1$ for some constant $C=0.091$. Then, it is easy to check that $\mu$ is a constant when $\delta, \tilde{\delta}\le1$.}
\end{proof}

\section{Proof of preliminary lemmas}
\label{sec: proof of preliminary lemmas}
\subsection{Proof of Lemma \ref{Lemma: p_second_order_derivative bound}} \label{sec: Proof: second_order_derivative}
The eigenvalues of $\nabla^2 f(\cdot;p)$ at any fixed point $\bfW$ can be bounded in the form of \eqref{eqn: thm1_main} by Weyl's inequality (Lemma \ref{Lemma: weyl}). Therefore, the primary technical challenge lies in bounding $\| \nabla^2 f (\bfW;p) - \nabla^2 f (\Wp;p) \|_2$, which is summarized in Lemma \ref{Lemma: p_Second_order_distance}. \Revise{Lemma \ref{Lemma: Zhong} provides the exact calulation of the lower bound of  $\mathbb{E}_{\bfx}\Big(\sum_{j=1}^K \bfal_j^T\bfx \phi^{\prime}(\bfw_j^{[p]T}\bfx) \Big)^2$ when $\bfx$ belongs to Gaussian distribution with zero mean, which is used in proving the lower bound of the Hessian matrix in \eqref{eqn: lower}.}
\begin{lemma}\label{Lemma: p_Second_order_distance}
   Let $f(\bfW;p)$ be the population risk function defined in \eqref{eqn: p_population_risk_function} with $p$ and $\bfW$ satisfying \eqref{eqn: p_initial}. Then, we have 
   \begin{equation}
       \| \nabla^2 f(\Wp;p)- \nabla^2 f(\bfW;p) \|_2 \lesssim \frac{\lambda\delta^2+(1-\lambda)\de^2}{K}\cdot\frac{\|\Wp-\bfW\|_2}{\sigma_K}.
   \end{equation}

\end{lemma}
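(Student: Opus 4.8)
The plan is to compute the two Hessians block by block and exploit that the residual vanishes at the base point $\Wp$. Writing $f(\bfW;p)$ from \eqref{eqn: p_population_risk_function} as a $\lambda$/$(1-\lambda)$-weighted sum of two squared-ReLU-network losses, each $(j,k)$ block of $\nabla^2 f$ splits into a Gauss--Newton piece proportional to $\mathbb{E}[\phi'(\bfw_j^T\bfx)\phi'(\bfw_k^T\bfx)\bfx\bfx^T]$ and, on the diagonal $j=k$, a residual piece proportional to $\mathbb{E}[(g(\Wp;\bfx)-g(\bfW;\bfx))\,\phi''(\bfw_j^T\bfx)\,\bfx\bfx^T]$, with analogous terms for the unlabeled ($\de$) expectation. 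The key observation is that at $\bfW=\Wp$ the residual $g(\Wp;\bfx)-g(\Wp;\bfx)$ is identically zero, so $\nabla^2 f(\Wp;p)$ is purely Gauss--Newton. Hence $\nabla^2 f(\Wp;p)-\nabla^2 f(\bfW;p)$ is the sum of (i) the difference of the two Gauss--Newton parts (one using $\bfw_j^{[p]}$, the other using $\bfw_j$) and (ii) the residual piece evaluated at $\bfW$ alone. I would bound the spectral norm of each via the quadratic-form representation \eqref{eqn: alpha_definition}, i.e. by bounding $|\bfal^T(\nabla^2 f(\Wp;p)-\nabla^2 f(\bfW;p))\bfal|$ uniformly over unit $\bfal=[\bfal_1^T,\dots,\bfal_K^T]^T$; since everything is a population expectation over a Gaussian, no covering net is required.

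For the Gauss--Newton difference, the quadratic form is $\mathbb{E}[A^2-B^2]$ with $A=\sum_j \phi'(\bfw_j^{[p]T}\bfx)(\bfal_j^T\bfx)$ and $B=\sum_j \phi'(\bfw_j^T\bfx)(\bfal_j^T\bfx)$, so I would write $A^2-B^2=(A-B)(A+B)$. The difference of indicators $\phi'(\bfw_j^{[p]T}\bfx)-\phi'(\bfw_j^T\bfx)$ is nonzero only when $\bfx$ falls in the angular wedge between the hyperplanes $\bfw_j^{[p]T}\bfx=0$ and $\bfw_j^T\bfx=0$; by the rotational symmetry of the isotropic Gaussian, the Gaussian measure of that wedge is proportional to the angle $\theta_j$ between $\bfw_j^{[p]}$ and $\bfw_j$, and $\theta_j\lesssim \|\bfw_j^{[p]}-\bfw_j\|_2/\|\bfw_j^{[p]}\|_2$. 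Applying Cauchy--Schwarz together with the sub-Gaussian moment bounds on $\bfal_j^T\bfx$ and on $A+B$ (controlled because $\|\bfal\|_2=1$), this produces a bound of the order $\tfrac{\delta^2}{K}\sum_j\theta_j$ (respectively $\tfrac{\de^2}{K}\sum_j\theta_j$ for the unlabeled term). The factor $1/\sigma_K$ then enters through $\|\bfw_j^{[p]}\|_2\gtrsim \sigma_K$, which holds because $\Wp$ lies near $\bfW^*$ whose smallest singular value is $\sigma_K$ and because \eqref{eqn: p_initial} keeps $\bfW$ in that regime; summing the $\theta_j$ gives $\sum_j\|\bfw_j^{[p]}-\bfw_j\|_2\lesssim\sqrt{K}\,\|\Wp-\bfW\|_F$.

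For the residual piece (ii), note $r(\bfx):=g(\Wp;\bfx)-g(\bfW;\bfx)=\tfrac1K\sum_k(\phi(\bfw_k^{[p]T}\bfx)-\phi(\bfw_k^T\bfx))$ obeys $|r(\bfx)|\le\tfrac1K\sum_k|(\bfw_k^{[p]}-\bfw_k)^T\bfx|$, so the residual magnitude is controlled by $\|\Wp-\bfW\|$ times moments of $\bfx$. The delicate point, which I expect to be the main obstacle, is the $\phi''$ term: for ReLU, $\phi''$ is a Dirac mass on the activation boundary $\{\bfw_j^T\bfx=0\}$, so term (ii) is really a surface integral of $r(\bfx)(\bfal_j^T\bfx)^2$ against the Gaussian density restricted to that hyperplane. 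I would make this rigorous by a co-area/smoothing argument, computing the conditional Gaussian density on the hyperplane, whose normalization contributes the factor $1/\|\bfw_j\|_2\lesssim 1/\sigma_K$; combined with $|r|\lesssim\|\Wp-\bfW\|$ and the variance scale $\delta^2$ or $\de^2$, this matches the target order. Finally I would assemble the two variance-weighted contributions, so that the labeled and unlabeled terms combine into $\frac{\lambda\delta^2+(1-\lambda)\de^2}{K}\cdot\frac{\|\Wp-\bfW\|_2}{\sigma_K}$, which is exactly the claimed bound and, through Weyl's inequality (Lemma \ref{Lemma: weyl}), lets one transfer the eigenvalue bounds at $\Wp$ to $\bfW$ in Lemma \ref{Lemma: p_second_order_derivative bound}.
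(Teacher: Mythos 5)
Your proposal is correct and follows essentially the same route as the paper's proof: both reduce the bound to the difference of Gauss--Newton blocks, observe that $\phi'(\bfw_{j}^{[p]T}\bfx)-\phi'(\bfw_{j}^{T}\bfx)$ is supported on the angular wedge between the two hyperplanes, bound the Gaussian measure of that wedge by the angle $\lesssim \|\bfw_{j}^{[p]}-\bfw_{j}\|_2/\|\bfw_{j}^{[p]}\|_2$ with $\|\bfw_j^{[p]}\|_2\gtrsim\sigma_K$, and sum over blocks. The only divergence is that you explicitly retain and propose to control the diagonal residual term involving $\phi''$ (a surface integral for ReLU), whereas the paper treats the population Hessian as purely Gauss--Newton and omits this term entirely; your co-area sketch yields a contribution of the same order, so this extra care does not change the conclusion, but it is the more rigorous accounting of the two.
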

\Revise{
\begin{lemma}[Lemma D.6, \citep{ZSJB17}]\label{Lemma: Zhong}
For any $\{\bfw_j\}_{j=1}^K\in \mathbb{R}^d$, let $\bfal \in \mathbb{R}^{dK}$ be the unit vector defined in \eqref{eqn: alpha_definition}. When the $\phi$ is ReLU function, we have
   \begin{equation}
       \min_{\|\bfal\|_2=1}\mathbb{E}_{\bfx\sim \mathcal{N}(0,\sigma^2)}\Big(\sum_{j=1}^K \bfal_j^T\bfx \phi^{\prime}(\bfw_j^{T}\bfx) \Big)^2 \gtrsim \rho(\sigma),
   \end{equation}
    where $\rho(\sigma)$ is defined in Definition \ref{defi: rho}.
\end{lemma}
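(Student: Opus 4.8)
The plan is to recognize the left-hand side as the smallest eigenvalue of an explicit positive semidefinite matrix and then lower bound that eigenvalue by reducing to a low-dimensional Gram-determinant computation. Writing $\phi'(z) = \mathds{1}[z>0]$ for ReLU and expanding the square, for any unit vector $\bfal = [\bfal_1^T,\dots,\bfal_K^T]^T$ as in \eqref{eqn: alpha_definition} we have $\mathbb{E}_{\bfx}\big(\sum_{j}\bfal_j^T\bfx\,\phi'(\bfw_j^T\bfx)\big)^2 = \bfal^T\bfM\bfal$, where $\bfM\in\mathbb{R}^{dK\times dK}$ is the block matrix with blocks $\bfM_{ij} = \mathbb{E}_{\bfx}[\phi'(\bfw_i^T\bfx)\phi'(\bfw_j^T\bfx)\,\bfx\bfx^T]$. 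Hence $\min_{\|\bfal\|_2=1}\mathbb{E}_{\bfx}(\cdots)^2 = \lambda_{\min}(\bfM)$, and it suffices to show $\lambda_{\min}(\bfM)\gtrsim\rho(\sigma)$. Since $\phi'$ is scale invariant for ReLU, I would first normalize $\|\bfw_j\|_2 = 1$ without loss of generality, so that only the directions of the $\bfw_j$ enter $\bfM$.

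Next I would decompose each block of $\bfal$ along and orthogonal to its own neuron, $\bfal_j = r_j\bfw_j + \bfs_j$ with $\bfs_j\perp\bfw_j$. By the rotational invariance of the Gaussian measure, each diagonal block $\bfM_{jj}$ splits into a ``parallel'' part acting on $r_j$ and an ``orthogonal'' part acting on $\bfs_j$, whose entries are exactly the one-dimensional moments $H_r(\sigma)$ and $J_r(\sigma)$ of Definition \ref{defi: rho} (for ReLU $J_r = H_r$, with $H_0 = \tfrac12$, $H_1 = \sigma/\sqrt{2\pi}$, $H_2 = \sigma^2/2$). The off-diagonal blocks $\bfM_{ij}$ depend only on the angle between $\bfw_i$ and $\bfw_j$ through closed-form Gaussian half-space integrals; these are precisely the terms that can induce cancellation and shrink the quadratic form.

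The core of the argument is a uniform lower bound on $\bfal^T\bfM\bfal$ over all neuron configurations $\{\bfw_j\}$ and all unit $\bfal$. The plan is to argue, via a rotation and compactness reduction, that the infimum is governed by the degenerate configurations in which the relevant directions collapse into a low-dimensional subspace; in that regime the quadratic form reduces to a Gram-type matrix built from the monomials $1, z, z^2$ weighted by the half-Gaussian density $\mathds{1}[z>0]\,d\mathcal{N}(0,\sigma^2)$. The three candidate minima in \eqref{eqn: rho} are exactly the determinant-type positivity quantities extracted from this matrix, e.g. $H_0 H_2 - H_1^2 = \det\left(\begin{smallmatrix} H_0 & H_1 \\ H_1 & H_2 \end{smallmatrix}\right)$, each strictly positive by Cauchy--Schwarz since $1, z, z^2$ are linearly independent on the positive half-line; taking their minimum delivers the claimed bound $\gtrsim\rho(\sigma)$.

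The main obstacle I expect is the third step: controlling the angle-dependent off-diagonal blocks $\bfM_{ij}$ and proving that no configuration of $\{\bfw_j\}$ can drive $\lambda_{\min}(\bfM)$ below the order of $\rho(\sigma)$. This requires the low-dimensional reduction together with a careful accounting of how the parallel coordinates $r_j$ and the orthogonal coordinates $\bfs_j$ interact across neurons; the minimization of the reduced quadratic form over $\{r_j\}$ and $\{\bfs_j\}$ is where the three-term structure of $\rho$ genuinely arises, and establishing strict positivity uniformly, rather than merely nonnegativity, is the delicate point.
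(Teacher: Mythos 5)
First, note that the paper does not prove this lemma at all: it is imported verbatim as Lemma D.6 of \citep{ZSJB17}, so your proposal can only be judged against that cited proof. Your framing is right in spirit --- the left-hand side is $\lambda_{\min}$ of the block matrix $\bfM$ with blocks $\bfM_{ij}=\mathbb{E}\big[\phi'(\bfw_i^T\bfx)\phi'(\bfw_j^T\bfx)\,\bfx\bfx^T\big]$, and the three terms of $\rho$ do arise as determinant/Schur-complement positivity quantities of the one-dimensional moment matrices built from $H_r$ and $J_r$.

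However, there is a genuine gap, and it sits exactly at the step you flag as delicate: a bound of the form $\lambda_{\min}(\bfM)\gtrsim\rho(\sigma)$ with a constant \emph{uniform over all configurations} $\{\bfw_j\}$ is false. If $\bfw_1=\bfw_2$ (or the two are nearly parallel), choose $\bfal_1=-\bfal_2$ orthogonal to $\bfw_1$ and $\bfal_j=\bfzero$ otherwise; then $\sum_j\bfal_j^T\bfx\,\phi'(\bfw_j^T\bfx)=(\bfal_1+\bfal_2)^T\bfx\,\phi'(\bfw_1^T\bfx)\equiv 0$, so $\lambda_{\min}(\bfM)=0$. No rotation-plus-compactness argument can produce a configuration-free constant; the correct statement in \citep{ZSJB17} carries a factor $1/(\kappa^2\gamma)$ measuring the conditioning of $\bfW$, which degenerates precisely in this limit and which the present paper silently reinstates when it applies the lemma in \eqref{eqn: lower}. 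The cited proof obtains that factor by an explicit change of variables that orthonormalizes the neuron directions, paying $\kappa^2\gamma$ through the Jacobian and the distortion of the quadratic form; after that reduction, $\phi'(\bfe_i^T\bfz)$ and $\bfe_j^T\bfz$ are independent for $i\neq j$ and the computation collapses to the moments $H_0,H_1,H_2,J_0,J_2$. Relatedly, your per-neuron decomposition $\bfal_j=r_j\bfw_j+\bfs_j$ with $\bfs_j\perp\bfw_j$ does not decouple the off-diagonal blocks, since $\bfs_j^T\bfx$ still correlates with $\phi'(\bfw_i^T\bfx)$ for $i\neq j$; the decomposition that works splits each $\bfal_j$ into its component in $\mathrm{span}\{\bfw_1,\dots,\bfw_K\}$ and the common orthogonal complement, the latter being independent of \emph{all} the indicators simultaneously. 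A final minor caveat: not all three terms of $\rho$ are Gram determinants, and $J_0-H_0^2-H_1^2$ is in fact negative for large $\sigma$, so ``strictly positive by Cauchy--Schwarz'' does not apply to it; positivity of $\rho$ is only available in the regime $\sigma\le 1$ invoked by Lemma \ref{lemma: rho_order}.
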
}
\begin{proof}[Proof of Lemma \ref{Lemma: p_second_order_derivative bound}]
    Let $\lambda_{\max}(\bfW)$ and $\lambda_{\min}(\bfW)$ denote the largest and smallest eigenvalues of $\nabla^2 f(\bfW;p)$ at point $\bfW$, respectively. Then, from Lemma \ref{Lemma: weyl}, we have 
    \begin{equation}\label{eqn: thm1_main}
    \begin{gathered}
        \lambda_{\max}(\bfW) \le \lambda_{\max}(\Wp) + \| \nabla^2 f (\bfW;p) - \nabla^2 f (\Wp;p) \|_2,\\
        \lambda_{\min}(\bfW) \ge \lambda_{\min}(\Wp) \Revise{ - } \| \nabla^2 f (\bfW;p) - \nabla^2 f (\Wp;p) \|_2.
    \end{gathered}
    \end{equation}
    Then, we provide the lower bound of the Hessian matrix of the population function at $\Wp$. \Revise{For any $\bfal\in\mathbb{R}^{dK}$ defined in \eqref{eqn: alpha_definition} with $\|\bfal\|_2 =1$,} we have 
    \begin{equation}\label{eqn: lower}
    \begin{split}
        &\min_{\|\boldsymbol{\alpha}\|_2=1} \bfal^T \nabla^2 f(\Wp;p) \bfal\\
        =& \frac{1}{K^2} \min_{\|\bfal\|_2=1 } \Big[\lambda \mathbb{E}_{\bfx}\Big(\sum_{j=1}^K \bfal_j^T\bfx \phi^{\prime}(\bfw_j^{[p]T}\bfx) \Big)^2
        +\widetilde{\lambda} \mathbb{E}_{\widetilde{\bfx}}\Big(\sum_{j=1}^K \bfal_j^T\w[\bfx] \phi^{\prime}(\bfw_j^{[p]T}\w[\bfx]) \Big)^2
        \Big] \\
        \ge &\frac{1}{K^2} \min_{\|\bfal\|_2=1 } \lambda \mathbb{E}_{\bfx}\Big(\sum_{j=1}^K \bfal_j^T\bfx \phi^{\prime}(\bfw_j^{[p]T}\bfx) \Big)^2
        +\min_{\|\bfal\|_2=1 }\widetilde{\lambda} \mathbb{E}_{\widetilde{\bfx}}\Big(\sum_{j=1}^K \bfal_j^T\w[\bfx] \phi^{\prime}(\bfw_j^{[p]T}\w[\bfx]) \Big)^2\\
        \ge & \frac{\lambda\rho(\delta) + \w[\lambda]\rho(\de)}{11\kappa^2\gamma K^2},
        \end{split}
    \end{equation}
    where the last inequality comes from Lemma \ref{Lemma: Zhong}. 
    
    Next, the upper bound can be bounded as 
    \begin{equation}
        \begin{split}
            &\max_{\|\boldsymbol{\alpha}\|_2=1} \bfal^T \nabla^2 f(\Wp;p) \bfal\\
        =& \frac{1}{K^2} \max_{\|\bfal\|_2=1 } \Big[\lambda \mathbb{E}_{\bfx}\Big(\sum_{j=1}^K \bfal_j^T\bfx \phi^{\prime}(\bfw_j^{[p]T}\bfx) \Big)^2
        +\widetilde{\lambda} \mathbb{E}_{\widetilde{\bfx}}\Big(\sum_{j=1}^K \bfal_j^T\w[\bfx] \phi^{\prime}(\bfw_j^{[p]T}\w[\bfx]) \Big)^2
        \Big] \\
        \le &\frac{1}{K^2} \max_{\|\bfal\|_2=1 } \lambda \mathbb{E}_{\bfx}\Big(\sum_{j=1}^K \bfal_j^T\bfx \phi^{\prime}(\bfw_j^{[p]T}\bfx) \Big)^2
        +\max_{\|\bfal\|_2=1 }\widetilde{\lambda} \mathbb{E}_{\widetilde{\bfx}}\Big(\sum_{j=1}^K \bfal_j^T\w[\bfx] \phi^{\prime}(\bfw_j^{[p]T}\w[\bfx]) \Big)^2.\\
        \end{split}
    \end{equation}
    \Revise{For $\mathbb{E}_{\bfx}\Big(\sum_{j=1}^K \bfal_j^T\bfx \phi^{\prime}(\bfw_j^{[p]T}\bfx) \Big)^2$, we have 
    \begin{equation}
        \begin{split}
            &\mathbb{E}_{\bfx}\Big(\sum_{j=1}^K \bfal_j^T\bfx \phi^{\prime}(\bfw_j^{[p]T}\bfx) \Big)^2\\
            =& \mathbb{E}_{\bfx}\sum_{j_1=1}^K \sum_{j_2=1}^K \bfal_{j_1}^T\bfx \phi^{\prime}(\bfw_{j_1}^{[p]T}\bfx) \bfal_{j_2}^T\bfx \phi^{\prime}(\bfw_{j_2}^{[p]T}\bfx)\\
            =&\sum_{j_1=1}^K \sum_{j_2=1}^K \mathbb{E}_{\bfx} \bfal_{j_1}^T\bfx \phi^{\prime}(\bfw_{j_1}^{[p]T}\bfx) \bfal_{j_2}^T\bfx \phi^{\prime}(\bfw_{j_2}^{[p]T}\bfx)\\
            \le & \sum_{j_1=1}^K \sum_{j_2=1}^K 
            \Big[\mathbb{E}_{\bfx} (\bfal_{j_1}^T\bfx)^4 \mathbb{E}_{\bfx} (\phi^{\prime}(\bfw_{j_1}^{[p]T}\bfx))^4
            \mathbb{E}_{\bfx} (\bfal_{j_2}^T\bfx)^4 \mathbb{E}_{\bfx} (\phi^{\prime}(\bfw_{j_2}^{[p]T}\bfx))^4\Big]^{1/4}\\
            \le& \sum_{j_1=1}^K \sum_{j_2=1}^K 3\delta^2 \|\bfal_{j_1}\|_2 \|\bfal_{j_2}\|_2\\
            \le& 6\delta^2 \sum_{j_1=1}^K \sum_{j_2=1}^K \frac{1}{2}(\|\bfal_{j_1}\|_2^2+ \|\bfal_{j_2}\|_2^2)\\
            =&6K\delta^2
        \end{split}
    \end{equation}
    Therefore, we have 
        \begin{equation}
        \begin{split}
            &\max_{\|\boldsymbol{\alpha}\|_2=1} \bfal^T \nabla^2 f(\Wp;p) \bfal\\
         \le &\frac{1}{K^2} \max_{\|\bfal\|_2=1 } \lambda \mathbb{E}_{\bfx}\Big(\sum_{j=1}^K \bfal_j^T\bfx \phi^{\prime}(\bfw_j^{[p]T}\bfx) \Big)^2
         +\max_{\|\bfal\|_2=1 }\widetilde{\lambda} \mathbb{E}_{\widetilde{\bfx}}\Big(\sum_{j=1}^K \bfal_j^T\w[\bfx] \phi^{\prime}(\bfw_j^{[p]T}\w[\bfx]) \Big)^2\\
        \le& \frac{6(\lambda\delta^2+\w[\lambda]\de^2)}{K}.
        \end{split}
    \end{equation}
    }Then, given \eqref{eqn: p_initial}, we have 
    \begin{equation}\label{eqn: thm11_temp}
        \|\W[0,0] - \Wp \|_F = p\|\W[0,0]-\bfW^* \|_F \lesssim \frac{\sigma_K}{\mu^2K}.
    \end{equation}
    Combining \eqref{eqn: thm11_temp} and Lemma \ref{Lemma: p_Second_order_distance}, we have 
    \begin{equation}\label{eqn: thm11_temp2}
        \| \nabla^2 f (\bfW;p) - \nabla^2 f (\Wp;p) \|_2\lesssim \frac{\lambda\rho(\delta) + \w[\lambda]\rho(\de)}{\Revise{132\kappa^2\gamma K^2}}.
    \end{equation}
    Therefore, \eqref{eqn: thm11_temp2} and \eqref{eqn: thm1_main} completes the whole proof.
\end{proof}

\subsection{Proof of Lemma \ref{Lemma: p_first_order_derivative}}
\label{sec: first_order_derivative}
\Revise{The task of bounding of the quantity} between $\|\nabla \hat{f} -\nabla f \|_2$ is dividing into bounding $I_1$, $I_2$, $I_3$ and $I_4$ as shown in \eqref{eqn: Lemma2_temp}. $I_1$ and $I_3$ represent the deviation of  the mean of several random variables to their expectation, which can be bounded through concentration inequality, i.e, Chernoff bound. $I_2$ and $I_4$ come from the inconsistency of the output label $y$ and pseudo label $\widetilde{y}$ in the empirical risk function in \eqref{eqn: empirical_risk_function} and population risk function in \eqref{eqn: p_population_risk_function}. The major challenge lies in characterizing the upper bound of $\bfI_2$ and $\bfI_4$ as the linear function of $\w[\bfW]-\Wp$ and $\Wp-\bfW^*$, which is summarized in \eqref{eqn:first_order_term3}.
\begin{proof}[Proof of Lemma \ref{Lemma: p_first_order_derivative}]
    From \eqref{eqn: empirical_risk_function}, we know that 
    \begin{equation}
        \begin{split}
        \frac{\partial \hat{f}}{\partial \bfw_k}(\bfW) 
        =& \frac{\lambda}{N}\sum_{n=1}^N \Big( \frac{1}{K}\sum_{j=1}^K\phi(\bfw_j^T\bfx_n) - y_n \Big)\bfx_n
         +\frac{1-\lambda}{M}\sum_{m=1}^M \Big( \frac{1}{K}\sum_{j=1}^K\phi(\bfw_j^T\w[\bfx]_m) - \w[y]_n \Big)\w[\bfx]_m\\
        =& \frac{\lambda}{K^2N}\sum_{n=1}^N \sum_{j=1}^K\Big( \phi(\bfw_j^T\bfx_n)-\phi(\bfw_j^{*T}\bfx_n) \Big)\bfx_n \\
         & + \frac{1-\lambda}{K^2M}\sum_{m=1}^M \sum_{j=1}^K\Big( \phi(\bfw_j^T\w[\bfx]_m)-\phi(\w[\bfw]_j^T\w[\bfx]_m) \Big)\w[\bfx]_m.
        \end{split}
    \end{equation}
    From \eqref{eqn: population_risk_function}, we know that
    \begin{equation}
        \begin{split}
        \frac{\partial \hat{f}}{\partial \bfw_k}(\bfW) 
        = \frac{\lambda}{K^2}\mathbb{E}_{\bfx} \sum_{j=1}^K\Big( \phi(\bfw_j^T\bfx)-\phi(\bfw_j^{*T}\bfx) \Big)\bfx
         + \frac{1-\lambda}{K^2}\mathbb{E}_{\w[\bfx]} \sum_{j=1}^K\Big( \phi(\bfw_j^T\w[\bfx])-\phi(\w[\bfw]_j^T\w[\bfx]) \Big)\w[\bfx].
        \end{split}
    \end{equation}
    Then, from \eqref{eqn: p_population_risk_function}, we have 
    \begin{equation}\label{eqn: Lemma2_temp}
        \begin{split}
            &\frac{\partial \hat{f} }{ \partial \bfw_k } (\bfW) - \frac{\partial {f} }{ \partial \bfw_k} (\bfW;p)\\
            = & \frac{\lambda}{K^2N} \sum_{j=1}^K \Big[ \sum_{n=1}^N\big( \phi(\bfw_j^T\bfx_n) - \phi(\bfw_j^{*T}\bfx_n)\big)\bfx_n - \mathbb{E}_{\bfx}\big( \phi(\bfw_j^T\bfx) -\phi(\bfw^{[p]T}_j\bfx) \big)\bfx 
            \Big]\\
             &+ \frac{1-\lambda}{K^2M} \sum_{j=1}^K \Big[ \sum_{m=1}^M\big( \phi(\bfw_j^T\w[\bfx]_m) - \phi(\w[\bfw]_j^T\w[\bfx]_m)\big)\w[\bfx]_m - \mathbb{E}_{\w[\bfx]}\big( \phi(\bfw_j^T\w[\bfx]) -\phi(\bfw^{[p]T}_j\w[\bfx]) \big)\w[\bfx] 
            \Big]\\
            = & \frac{\lambda}{K^2} \sum_{j=1}^K \Big[ \frac{1}{N}\sum_{n=1}^N\big( \phi(\bfw_j^T\bfx_n) - \phi(\bfw_j^{*T}\bfx_n)\big)\bfx_n - \mathbb{E}_{\bfx}\big( \phi(\bfw_j^T\bfx) -\phi(\bfw^{*T}_j\bfx) \big)\bfx 
            \Big]\\
            & +\frac{\lambda}{K^2}\sum_{j=1}^K \mathbb{E}_{\bfx}\Big[ \big( \phi(\bfw_j^{*T}\bfx) - \phi(\bfw^{[p]T}_j\bfx) \big)\bfx  \Big]\\
             & + \frac{1-\lambda}{K^2} \sum_{j=1}^K \Big[ \frac{1}{M}\sum_{m=1}^M\big( \phi(\bfw_j^T\w[\bfx]_m) - \phi(\w[\bfw]_j^T\w[\bfx]_m)\big)\w[\bfx]_m - \mathbb{E}_{\w[\bfx]}\big( \phi(\bfw_j^T\w[\bfx]) -\phi(\w[\bfw]^T_j\w[\bfx]) \big)\w[\bfx] 
            \Big]\\
            & + \frac{1-\lambda}{K^2}\sum_{j=1}^K \mathbb{E}_{\w[\bfx]}\Big[ \big( \phi(\w[\bfw]_j^T\w[\bfx]) - \phi(\bfw^{[p]T}_j(p)\w[\bfx]) \big)\w[\bfx]  \Big]\\
            :=& \bfI_1 + \bfI_2 + \bfI_3 +\bfI_4.
        \end{split}
    \end{equation}
    For any $\boldsymbol{\alpha}_j \in \mathbb{R}^{d}$ with $\| \boldsymbol{\alpha}_j \|_2 \le 1$, we define a random variable $Z(j) = \big( \phi(\bfw_j^T\bfx) - \phi(\bfw_j^{*T}\bfx)\big)\boldsymbol{\alpha}^T_j\bfx$ and $Z_n(j) = \big( \phi(\bfw_j^T\bfx_n) - \phi(\bfw_j^{*T}\bfx_n)\big)\boldsymbol{\alpha}^T_j\bfx_n$ as the realization of $Z(j)$ for $n=1,2\cdots, N$. Then, for any $p\in \mathbb{N}^+$, we have 
    \begin{equation}\label{eqn: temppp}
        \begin{split}
        \big(\mathbb{E}|Z|^p\big)^{1/p}
        =& \Big( \mathbb{E} |\phi(\bfw_j^T\bfx)-\phi(\bfw_j^{*T}\bfx)|^p \cdot |\boldsymbol{\alpha}_j^T\bfx|^p \Big)^{1/p}\\
        \le& \Big( \mathbb{E} |(\bfw_j-\bfw_j^*)^T\bfx|^p \cdot |\boldsymbol{\alpha}_j^T\bfx|^p \Big)^{1/p}\\
        \le& C \cdot \delta^2 \| \bfw_j - \bfw_j^* \|_2 \cdot p, 
        \end{split}
    \end{equation}
    where $C$ is a positive constant and the last inequality holds since $\bfx\sim \mathcal{N}(0, \delta^2)$. From Definition \ref{Def: sub-exponential}, we know that $Z$ belongs to sub-exponential distribution with $\|Z\|_{\psi_1}\lesssim \delta^2 \| \bfw_j - \bfw_j^* \|_2$.
    Therefore, by Chernoff inequality, we have 
    \begin{equation}
        \mathbb{P}\bigg\{ \Big| \frac{1}{N}\sum_{n=1}^N Z_n(j) -\mathbb{E}Z(j) \Big| < t \bigg\} \le 1- \frac{e^{-C(\delta^2\| \bfw_j - \bfw_j^* \|_2)^2\cdot Ns^2}}{e^{Nst}}
    \end{equation}
    for some positive constant $C$ and any $s\in\mathbb{R}$. 
    
    Let $t=\delta^2\|\bfw_j-\bfw_j^*\|_2\sqrt{\frac{d\log q}{N}}$ and $s = \frac{2}{C\delta^2\|\bfw_j-\bfw_j^*\|_2}\cdot t$ for some large constant $q>0$, we have
    \begin{equation}
        \begin{split}
                \Big| \frac{1}{N}\sum_{n=1}^N Z_n(j) -\mathbb{E}Z(j) \Big| \lesssim \delta^2\|\bfw_j-\bfw_j^*\|_2 \cdot \sqrt{\frac{d\log q}{N}}
        \end{split}
    \end{equation}
    with probability at least $1-q^{-d}$.
    From Lemma \ref{Lemma: spectral_norm_on_net}, we have 
    \begin{equation}\label{eqn:first_order_term1}
    \begin{split}
       &\Big\| \frac{1}{N}\sum_{n=1}^N\big( \phi(\bfw_j^T\bfx_n) - \phi(\bfw_j^{*T}\bfx_n)\big)\bfx_n - \mathbb{E}_{\bfx}\big( \phi(\bfw_j^T\bfx) -\phi(\bfw^{*T}_j\bfx) \big)\bfx\Big\|_2\\
       \le& 2 \delta^2\|\bfw_j-\bfw_j^*\|_2 
       \cdot \sqrt{\frac{d\log q}{N}}
    \end{split}
    \end{equation}
    with probability at least $1-(q/5)^{-d}$. Since $q$ is a large constant, we release the probability as $1-q^{-d}$ for simplification. 
    Similar to $Z$, we have  
    \begin{equation}\label{eqn:first_order_term2}
        \begin{split}
        &\Big\|\frac{1}{M}\sum_{m=1}^M\big( \phi(\bfw_j^T\w[\bfx]_m) - \phi(\w[\bfw]_j^T\w[\bfx]_m)\big)\w[\bfx]_m - \mathbb{E}_{\w[\bfx]}\big( \phi(\bfw_j^T\w[\bfx]) -\phi(\w[\bfw]^T_j\w[\bfx]) \big)\w[\bfx]\Big\|_2\\
        \lesssim& \de^2\|\bfw_j-\w[\bfw]_j\|_2 \cdot \sqrt{\frac{d\log q}{M}}
        \end{split}
    \end{equation}
    with probability at least $1-q^{-d}$.
    
    \begin{figure}[h]
  	\centering
  	\includegraphics[width=0.4\linewidth]{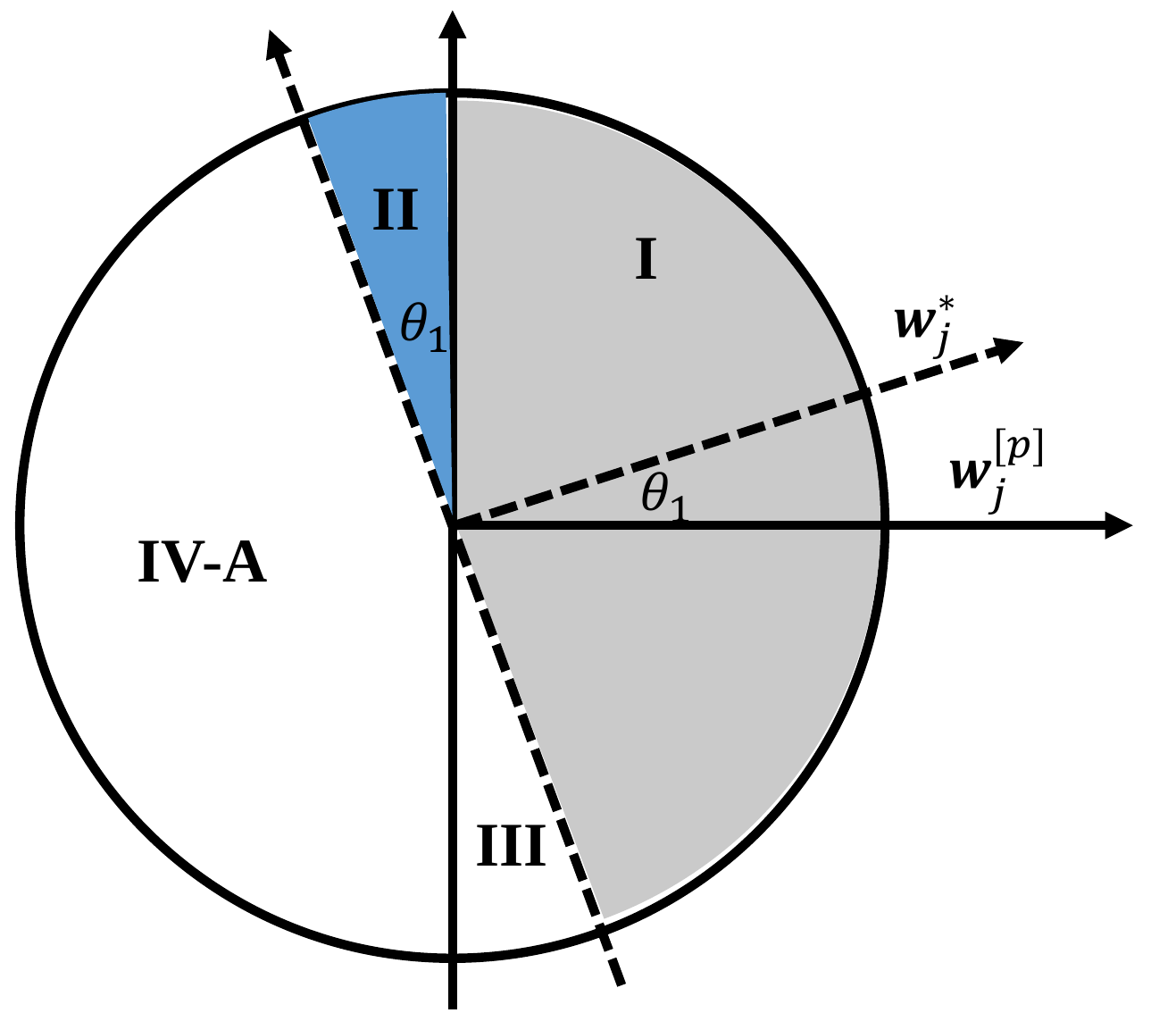}
  	\caption{The subspace spanned by $\bfw_j^*$ and $\bfw_j^{[p]}$}
  	\label{fig:circle}
    \end{figure}
    
    For term $\mathbb{E}_{\bfx}\Big[ \big( \phi(\bfw_j^{*T}\bfx) - \phi(\bfw^{[p]T}_j\bfx) \big)\bfx  \Big]$, let us define the angle between $\bfw_j^*$ and $\bfw^{[p]}_j$ as $\theta_1$. Figure \ref{fig:circle} shows the subspace spanned by the vector $\bfw_j^*$ and $\w[\bfw]_j$. We divide the subspace by 4 pieces, where the gray region denotes area I, and the blue area denotes area II. Areas III and IV are the symmetries of II and I from the origin, respectively. Hence, we have 
    \begin{equation}
        \begin{split}
            &\mathbb{E}_{\bfx}\Big[ \big( \phi(\bfw_j^{*T}\bfx) - \phi(\bfw^{[p]T}_j\bfx) \big)\bfx  \Big]\\
            =&\mathbb{E}_{\bfx\in \text{ area I} }\Big[ \big( \phi(\bfw_j^{*T}\bfx) - \phi(\bfw^{[p]T}_j\bfx) \big)\bfx  \Big]
            +\mathbb{E}_{\bfx\in \text{ area II} }\Big[ \big( \phi(\bfw_j^{*T}\bfx) - \phi(\bfw^{[p]T}_j\bfx) \big)\bfx  \Big]\\
            &+\mathbb{E}_{\bfx\in \text{ area III} }\Big[ \big( \phi(\bfw_j^{*T}\bfx) - \phi(\bfw^{[p]T}_j\bfx) \big)\bfx  \Big]
            +\mathbb{E}_{\bfx\in \text{ area IV} }\Big[ \big( \phi(\bfw_j^{*T}\bfx) - \phi(\bfw^{[p]T}_j\bfx) \big)\bfx  \Big]\\
            =&\mathbb{E}_{\bfx\in \text{ area I} }\Big[ \big( \phi(\bfw_j^{*T}\bfx) - \phi(\bfw^{[p]T}_j\bfx) \big)\bfx  \Big]
            + \mathbb{E}_{\bfx\in \text{ area II} }\big[ \bfw^{*T}_j\w[\bfx]\w[\bfx] \big] 
            - \mathbb{E}_{\bfx\in \text{ area III} }\big[ \bfw^{[p]T}_j\bfx\bfx \big]\\
            =&\mathbb{E}_{\bfx\in \text{ area I} }\big[ (\bfw^{*}_j-\bfw^{[p]}_j)^T\bfx \bfx \big]
            + \mathbb{E}_{\bfx\in \text{ area II} }\big[ (\bfw^{*}_j-\bfw^{[p]}_j)^T\bfx \bfx \big] \\
            =&\frac{1}{2}\mathbb{E}_{\bfx}\big[ (\bfw^{*}_j-\bfw^{[p]}_j)^T\bfx \bfx \big]
        \end{split}
    \end{equation}
    Therefore, we have
    \begin{equation}\label{eqn:first_order_term3}
    \begin{split}
        &\bigg\|\frac{\lambda}{K^2}\mathbb{E}_{\bfx}\Big[ \big( \phi(\bfw_j^{*T}\bfx) - \phi(\bfw^{[p]T}_j\bfx) \big)\bfx  \Big] +  \frac{1-\lambda}{K^2}\mathbb{E}_{\w[\bfx]}\Big[ \big( \phi(\w[\bfw]_j^T\w[\bfx]) - \phi(\bfw^{[p]T}_j\w[\bfx]) \big)\w[\bfx]  \Big]\bigg\|_2\\
        =&\bigg\|\frac{\lambda}{2K^2}\mathbb{E}_{\bfx}\big[ (\bfw^{*}_j-\bfw^{[p]}_j)^T\bfx \bfx \big] + 
        \frac{1-\lambda}{2K^2}
        \mathbb{E}_{\w[\bfx]}\big[ (\w[\bfw]_j-\bfw^{[p]}_j)^T\bfx \bfx \big]\bigg\|_2\\
        =& \frac{\big\| \lambda\delta^2 \cdot \big(\w[\bfw]_j-\bfw^{[p]}_{j}\big) +(1-\lambda)\de^2 \cdot \big(\bfw_j^*-\bfw^{[p]}_{j}\big)  \big\|_2}{2K^2}.
    \end{split}
    \end{equation}
    From \eqref{eqn:first_order_term1}, \eqref{eqn:first_order_term2} and \eqref{eqn:first_order_term3}, we have
        \begin{equation}
        \begin{split}
            &\Big\|\frac{\partial \hat{f} }{ \partial \bfw_k } (\bfW;p) - \frac{\partial {f} }{ \partial \bfw_k } (\bfW)\Big\|_2\\
            \le & \frac{\lambda}{K^2} \sum_{j=1}^K \Big\| \frac{1}{N}\sum_{n=1}^N\big( \phi(\bfw_j^T\bfx_n) - \phi(\bfw_j^{*T}\bfx_n)\big)\bfx_n - \mathbb{E}_{\bfx}\big( \phi(\bfw_j^T\bfx) -\phi(\bfw^{*T}_j\bfx) \big)\bfx 
            \Big\|_2\\
             & + \frac{1-\lambda}{K^2} \sum_{j=1}^K \Big\| \frac{1}{M}\sum_{m=1}^M\big( \phi(\bfw_j^T\w[\bfx]_m) - \phi(\w[\bfw]_j^T\w[\bfx]_m)\big)\w[\bfx]_m - \mathbb{E}_{\w[\bfx]}\big( \phi(\bfw_j^T\w[\bfx]) -\phi(\w[\bfw]^T_j\w[\bfx]) \big)\w[\bfx] 
            \Big\|_2\\
            & + \sum_{j=1}^K\bigg\|\frac{\lambda}{K^2}\mathbb{E}_{\bfx}\Big[ \big( \phi(\bfw_j^{*T}\bfx) - \phi(\bfw^{[p]T}_j\bfx) \big)\bfx  \Big] +  \frac{1-\lambda}{K^2}\mathbb{E}_{\w[\bfx]}\Big[ \big( \phi(\w[\bfw]_j^T\w[\bfx]) - \phi(\bfw^{[p]T}_j\w[\bfx]) \big)\w[\bfx]  \Big]\bigg\|_2\\
            \le &  \frac{\lambda}{K^{2}}\delta^2 \sqrt{\frac{d\log q}{N}}\cdot \sum_{j=1}^K\| \bfw_j -\bfw_j^* \|_2  + \frac{1- \lambda}{K^{2}}\cdot\de^2 \sqrt{\frac{d\log q}{M}} \cdot\sum_{j=1}^K\| \bfw_j - \w[\bfw]_j \|_2\\
            &+\frac{1}{2K^{2}} \cdot\sum_{j=1}^K\big\| \lambda\delta^2 \cdot \big(\w[\bfw]_j-\bfw^{[p]}_{j}\big) +\w[\lambda]\de^2 \cdot \big(\bfw_j^*-\bfw^{[p]}_{j}\big)  \big\|_2\\
            \le &\frac{\lambda}{K^{3/2}}\delta^2 \sqrt{\frac{d\log q}{N}}\cdot \| \bfW -\bfW^* \|_2  + \frac{1- \lambda}{K^{3/2}}\cdot\de^2 \sqrt{\frac{d\log q}{M}} \cdot\| \bfW - \w[\bfW] \|_2\\
            &+\frac{1}{2K^{3/2}} \big\| \lambda\delta^2 \cdot \big(\w[\bfW]-\Wp\big) +(1-\lambda)\de^2 \cdot \big(\bfW^*-\Wp\big)  \big\|_2
        \end{split}
    \end{equation}
    with probability at least $1-q^{-d}$.
    
    In conclusion, let $\boldsymbol{\alpha}\in \mathbb{R}^{Kd}$ and  $\boldsymbol{\alpha}_{j}\in \mathbb{R}^d$ with $\boldsymbol{\alpha} =[\boldsymbol{\alpha}_{1}^T, \boldsymbol{\alpha}_{2}^T, \cdots, \boldsymbol{\alpha}_{K}^T]^T$, we have 
    \begin{equation}
        \begin{split}
            \|\nabla f(\bfW) -\nabla \hat{f}(\bfW) \|_2
            = & \Big| \boldsymbol{\alpha}^T \big( \nabla f(\bfW) -\nabla \hat{f}(\bfW) \big)  \Big|\\
            \le & \sum_{k=1}^K\Big| \boldsymbol{\alpha}_k^T \big(\frac{\partial \hat{f} }{ \partial \bfw_k } (\bfW) - \frac{\partial {f} }{ \partial \bfw_k } (\bfW)\big) \Big|\\
            \lesssim & \sum_{k=1}^K\Big\|\frac{\partial \hat{f} }{ \partial \bfw_k } (\bfW) - \frac{\partial {f} }{ \partial \bfw_k } (\bfW)\Big\|_2\cdot \|\boldsymbol{\alpha}_k\|_2\\
            \lesssim &\frac{\lambda}{K}\delta^2 \sqrt{\frac{d\log q}{N}}\cdot \| \bfW -\bfW^* \|_2  + \frac{1- \lambda}{K}\cdot\de^2 \sqrt{\frac{d\log q}{M}} \cdot\| \bfW - \w[\bfW] \|_2\\
            &+\frac{1}{\Revise{2}K} \big\| \lambda\delta^2 \cdot \big(\w[\bfW]-\Wp\big) +(1-\lambda)\de^2 \cdot \big(\bfW^*-\Wp\big)  \big\|_2
        \end{split}
    \end{equation}
    with probability at least $1-q^{-d}$.
\end{proof}

\subsection{Proof of Lemma \ref{Lemma: p_Second_order_distance}}
The distance of the second order derivatives of the population risk function $f(\cdot; p)$ at point $\bfW$ and $\Wp$ can be converted into bounding  $\bfP_1$, $\bfP_2$, $\bfP_3$ and $\bfP_4$, which are defined in \eqref{eqn: Lemma11_main}. The major idea in proving $\bfP_1$ is to connect the error bound to the angle between $\bfW$ and $\Wp$. Similar ideas apply in bounding the other three items as well. 
\begin{proof}[Proof of Lemma \ref{Lemma: p_Second_order_distance}]
	From \eqref{eqn: p_population_risk_function}, we have 
	\begin{equation}
		\begin{split}
		\frac{\partial^2 f}{\partial {\bfw}_{j_1}\partial {\bfw}_{j_2}}(\Wp;p)
		=&\frac{\lambda}{K^2} \mathbb{E}_{\bfx} \phi^{\prime} (\bfw^{[p]T}_{j_1}\bfx) \phi^{\prime} (\bfw^{[p]T}_{j_2}\bfx) \bfx \bfx^T
		+\frac{1-\lambda}{K^2} \mathbb{E}_{\w[\bfx]} \phi^{\prime} (\bfw^{[p]T}_{j_1}\w[\bfx])\phi^{\prime} (\bfw^{[p]T}_{j_2}\w[\bfx]) \w[\bfx] \w[\bfx]^T,
		\end{split}
	\end{equation}
	\begin{equation}
	\begin{split}
	\text{and \quad}  \frac{\partial^2 f}{\partial {\bfw}_{j_1}\partial {\bfw}_{j_2}}(\bfW;p)
	=&\frac{\lambda}{K^2} \mathbb{E}_{\bfx} \phi^{\prime} ({\bfw}_{j_1}^T\bfx)\phi^{\prime} ({\bfw}_{j_2}^T\bfx) \bfx \bfx^T
	+\frac{1-\lambda}{K^2} \mathbb{E}_{\w[\bfx]} \phi^{\prime} ({\bfw}_{j_1}^T\w[\bfx])\phi^{\prime} ({\bfw}_{j_2}^T\w[\bfx]) \w[\bfx] \w[\bfx]^T,
	\end{split}
	\end{equation}
	where \Revise{${\bfw}^{[p]}_j$} is the $j$-th column of $\Wp$.
	Then, we have 
	\begin{equation}\label{eqn: Lemma11_main}
	\begin{split}
	&\frac{\partial^2 f}{\partial {\bfw}_{j_1}\partial {\bfw}_{j_2}}(\bfW^*)
	-
	\frac{\partial^2 f}{\partial {\bfw}_{j_1}\partial {\bfw}_{j_2}}(\bfW)\\
	=&\frac{\lambda}{K^2} \mathbb{E}_{\bfx} \Big[
	\Revise{\phi^{\prime} (\bfw^{[p]T}_{j_1}\bfx)}\phi^{\prime} (\bfw^{[p]T}_{j_2}\bfx)
	-
	\phi^{\prime} ({\bfw}_{j_1}^T\bfx)\phi^{\prime} ({\bfw}_{j_2}^T\bfx)\Big] \bfx \bfx^T\\
	&+\frac{1-\lambda}{K^2} \mathbb{E}_{\w[\bfx]} \Big[
	\phi^{\prime} (\bfw^{[p]T}_{j_1}\w[\bfx])\phi^{\prime} (\bfw^{[p]T}_{j_2}\w[\bfx])
	-
	\phi^{\prime} ({\bfw}_{j_1}^T\w[\bfx])\phi^{\prime} ({\bfw}_{j_2}^T\w[\bfx])\Big] \w[\bfx] \w[\bfx]^T\\
	=&\frac{\lambda}{K^2}\mathbb{E}_{\bfx} 
	\Big[ \phi^{\prime} (\bfw^{[p]T}_{j_1}\bfx)
	\big(\phi^{\prime} (\bfw^{[p]T}_{j_2}\bfx)
	- \phi^{\prime} ({\bfw}_{j_2}^T\bfx)
	\big)
	+
	\phi^{\prime} ({\bfw}_{j_2}^T\bfx)
	\big(\phi^{\prime} (\bfw^{[p]T}_{j_1}\bfx)
	- \phi^{\prime} ({\bfw}_{j_1}^T\bfx)
	\big)
	\Big]\bfx \bfx^T\\
	&+\frac{1-\lambda}{K^2}\mathbb{E}_{\w[\bfx]} 
	\Big[ \phi^{\prime} (\bfw^{[p]T}_{j_1}\w[\bfx])
	\big(\phi^{\prime} (\bfw^{[p]T}_{j_2}\w[\bfx])
	- \phi^{\prime} ({\bfw}_{j_2}^T\w[\bfx])
	\big)
	+
	\phi^{\prime} ({\bfw}_{j_2}^T\w[\bfx])
	\big(\phi^{\prime} (\bfw^{[p]T}_{j_1}\w[\bfx])
	- \phi^{\prime} ({\bfw}_{j_1}^T\w[\bfx])
	\big)
	\Big]\w[\bfx] \w[\bfx]^T\\
	=&\frac{\lambda}{K^2}\Big[
	\mathbb{E}_{\bfx} 
	\phi^{\prime} (\bfw^{[p]T}_{j_1}\bfx)
	\big(\phi^{\prime} (\bfw^{[p]T}_{j_2}\bfx)
	- \phi^{\prime} ({\bfw}_{j_2}^T\bfx)
	\big)\bfx \bfx^T
	+
	\mathbb{E}_{\bfx}
	\phi^{\prime} ({\bfw}_{j_2}^T\bfx)
	\big(\phi^{\prime} (\bfw^{[p]T}_{j_1}\bfx)
	- \phi^{\prime} ({\bfw}_{j_1}^T\bfx)
	\big)\bfx \bfx^T
	\Big]\\
	&+\frac{1-\lambda}{K^2}\Big[
	\mathbb{E}_{\w[\bfx]} 
	\phi^{\prime} (\bfw^{[p]T}_{j_1}\w[\bfx])
	\big(\phi^{\prime} (\bfw^{[p]T}_{j_2}\w[\bfx])
	- \phi^{\prime} ({\bfw}_{j_2}^T\w[\bfx])
	\big)\w[\bfx] \w[\bfx]^T
	+
	\mathbb{E}_{\w[\bfx]}
	\phi^{\prime} ({\bfw}_{j_2}^T\w[\bfx])
	\big(\phi^{\prime} (\bfw^{[p]T}_{j_1}\w[\bfx])
	- \phi^{\prime} ({\bfw}_{j_1}^T\w[\bfx])
	\big)\w[\bfx] \w[\bfx]^T
	\Big]\\
	:=& \frac{\lambda}{K^2} (\bfP_1 + \bfP_2) + \frac{1-\lambda}{K^2}(\bfP_3+\bfP_4).
	\end{split}
	\end{equation}
	For any $\bfa\in\mathbb{R}^{d}$ with \Revise{$\|\bfa\|_2=1$}, we have 
	\begin{equation}
	\begin{split}
	\bfa^T\bfP_1 \bfa 
	=&\mathbb{E}_{\bfx} 
	\phi^{\prime} (\bfw^{[p]T}_{j_1}\bfx)
	\big(\phi^{\prime} (\bfw^{[p]T}_{j_2}\bfx)
	- \phi^{\prime} ({\bfw}_{j_2}^T\bfx)
	\big) 
	(\bfa^T\bfx)^2\\	
	\end{split}
	\end{equation}
	 where $\bfa\in\mathbb{R}^d$.
	 Let $I =
    \phi^{\prime} (\bfw^{[p]T}_{j_1}\bfx)
    \big(\phi^{\prime} (\bfw^{[p]T}_{j_2}\bfx)
    - \phi^{\prime} (\bfw_{j_2}^T\bfx)
    \big) 
    \cdot
    (\bfa^T\bfx)^2$. 
    It is easy to verify there exists \Revise{a group of orthonormal vectors} such that $\mathcal{B}=\{\bfa, \bfb, \bfc, \bfa_4^{\perp}, \cdots, \bfa_d^{\perp}\}$ with $\{\bfa, \bfb, \bfc \}$ spans a subspace that contains $\bfa, \bfw_{j_2}$ and $\bfw_{j_2}^*$. Then, for any $\bfx$, we have a unique $\bfz=\begin{bmatrix}
		z_1,&z_2,&\cdots, & z_d
		\end{bmatrix}^T$ such that 
		\begin{equation*}
		\bfx = z_1\bfa + z_2 \bfb +z_3\bfc + \cdots + z_d\bfa^{\perp}_d.
		\end{equation*}
		Also, since $\bfx \sim \mathcal{N}(\boldsymbol{0}, \delta^2\bfI_d)$, we have $\bfz \sim \mathcal{N}(\boldsymbol{0}, \delta^2\bfI_d)$. Then, we have 
		\begin{equation*}
		\begin{split}
		I 
		=& \mathbb{E}_{z_1, z_2, z_3}|\phi^{\prime}\big(\bfw_{j_2}^T{\bfx}\big)-\phi^{\prime}\big({\bfw^{[p]T}_{j_2}}{\bfx}\big)|\cdot 
		|\bfa^T{\bfx}|^2\\
		=&\int|\phi^{\prime}\big(\bfw_{j_2}^T{\bfx}\big)-\phi^{\prime}\big({\bfw^{[p]T}_{j_2}}{\bfx}\big)|\cdot 
		|\bfa^T{\bfx}|^2 \cdot f_Z(z_1, z_2,z_3)dz_1 dz_2 dz_3,
		\end{split}
		\end{equation*}
		where ${\bfx}= z_1\bfa + z_2\bfb + z_3\bfc$ and $f_Z(z_1, z_2,z_3)$ is probability density function of $(z_1, z_2, z_3)$. Next, we consider spherical coordinates with $z_1= Rcos\phi_1 , z_2 = Rsin\phi_1sin\phi_2, \Revise{z_3= Rsin\phi_1cos\phi_2}$. Hence,
		\begin{equation}
		\begin{split}
		I
		=&\int|\phi^{\prime}\big(\bfw_{j_2}^T{\bfx}\big)-\phi^{\prime}\big(\bfw^{[p]T}_{j_2}{\bfx}\big)|\cdot |R\cos\phi_1|^2
		\cdot f_Z(R, \phi_1, \phi_2)R^2\sin \phi_1 dR d\phi_1 d \phi_2.
		\end{split}
		\end{equation}
		It is easy to verify that $\phi^{\prime}\big(\bfw_{j_2}^T{\bfx}\big)$ only depends on the direction of ${\bfx}$ and 
		\Revise{\begin{equation*}
		f_Z(R, \phi_1, \phi_2) = \frac{1}{(2\pi\delta^2)^{\frac{3}{2}}}e^{-\frac{z_1^2+z_2^2+z_3^2}{2\delta^2}}=\frac{1}{(2\pi\delta^2)^{\frac{3}{2}}}e^{-\frac{R^2}{2\delta^2}}
		\end{equation*}}
		only depends on $R$. Then, we have 
		\begin{equation}\label{eqn: ean111}
		\begin{split}
		&I(i_2, j_2)\\
		=&\int|\phi^{\prime}\big(\bfw_{j_2}^T({\bfx}/R)\big)-\phi^{\prime}\big({\bfw^{[p]T}_{j_2}}({\bfx}/R)\big)|
		\cdot |R\cos\phi_1|^2
		\cdot f_Z(R)R^2\sin \phi_1 dRd\phi_1d\phi_2\\
		=& \int_0^{\infty} R^4f_z(R)dR\int_{0}^{\pi}\int_{0}^{2\pi}|\cos \phi_1|^2\cdot \sin\phi_1
		\cdot|\phi^{\prime}\big(\bfw_{j_2}^T({\bfx}/R)\big)-\phi^{\prime}\big({\bfw^{[p]T}_{j_2}}({\bfx}/R)\big)|d\phi_1d\phi_2\\
		\stackrel{(a)}{\le} & \Revise{3\delta^2} \cdot \int_0^{\infty} R^2f_z(R)dR\int_{0}^{\pi}\int_{0}^{2\pi} \sin\phi_1
		\cdot|\phi^{\prime}\big(\bfw_{j_2}^T({\bfx}/R)\big)-\phi^{\prime}\big({\bfw^{[p]T}_{j_2}}({\bfx}/R)\big)|d\phi_1d\phi_2\\
		= &\Revise{3\delta^2}\cdot\mathbb{E}_{z_1, z_2, z_3}\big|\phi^{\prime}\big(\bfw_{j_2}^T{\bfx}\big)-\phi^{\prime}\big({\bfw^{[p]T}_{j_2}}{\bfx}\big)|\\
		\le & \Revise{3\delta^2}\cdot\mathbb{E}_{\bfx}\big|\phi^{\prime}\big(\bfw_{j_2}^T{\bfx}\big)-\phi^{\prime}\big({\bfw^{[p]T}_{j_2}}{\bfx}\big)|,
		\end{split}
		\end{equation}
    \Revise{where the inequality (a) is derived from the fact that $|cos\phi_1|\le 1$ and 
    \begin{equation}
    \begin{split}
        \int_{0}^{\infty} R^4 \frac{1}{(2\pi \delta^2)^{\frac{3}{2}}}e^{-\frac{R^2}{2\delta^2}}dR
         =&  \int_{0}^{\infty}  -\frac{R^3\delta^2}{(2\pi \delta^2)^{\frac{3}{2}}}d(e^{-\frac{R^2}{2\delta^2}})\\
         =&  \int_{0}^{\infty}e^{-\frac{R^2}{2\delta^2}} d\frac{R^3\delta^2}{(2\pi \delta^2)^{\frac{3}{2}}}\\
         =&3\delta^2\int_{0}^{\infty}R^2\frac{1}{(2\pi \delta^2)^{\frac{3}{2}}}e^{-\frac{R^2}{2\delta^2}}dR.
    \end{split}
    \end{equation}}

	Define a set $\mathcal{A}_1=\{\bfx| ({\bfw^{[p]T}_{j_2}}\bfx)(\bfw_{j_2}^T\bfx)<0 \}$. If $\bfx\in\mathcal{A}_1$, then ${\bfw^{[p]T}_{j_2}}\bfx$ and $\bfw_{j_2}^T\bfx$ have different signs, which means the value of $\phi^{\prime}(\bfw_{j_2}^T\bfx)$ and $\phi^{\prime}({\bfw^{[p]T}_{j_2}}\bfx)$ are different. This is equivalent to say that 
	\begin{equation}\label{eqn:I_1_sub1}
	|\phi^{\prime}(\bfw_{j_2}^T\bfx)-\phi^{\prime}({\bfw^{[p]T}_{j_2}}\bfx)|=
	\begin{cases}
	&1, \text{ if $\bfx\in\mathcal{A}_1$}\\
	&0, \text{ if $\bfx\in\mathcal{A}_1^c$}
	\end{cases}.
	\end{equation}
	Moreover, if $\bfx\in\mathcal{A}_1$, then we have 
	\begin{equation}
	\begin{split}
	|{\bfw^{[p]T}_{j_2}}\bfx|
	\le&|{\bfw^{[p]T}_{j_2}}\bfx-\bfw_{j_2}^T\bfx|
	\le\|{\bfw^{[p]}_{j_2}}-{\bfw_{j_2}}\|_2\cdot\|\bfx\|_2.
	\end{split}
	\end{equation}
	Let us define a set $\mathcal{A}_2$ such that
	\begin{equation}
	\begin{split}
	\mathcal{A}_2
	=&\Big\{\bfx\Big|\frac{|{\bfw^{[p]T}_{j_2}}\bfx|}{\|\bfw_{j_2}^*\|_2\|\bfx\|_2}\le\frac{\|{\bfw_{j_2}^*}-{\bfw_{j_2}}\|_2}{\|\bfw_{j_2}^*\|_2}   \Big\}
	=\Big\{\theta_{\bfx,\bfw_{j_2}^*}\Big||\cos\theta_{\bfx,\bfw^{[p]}_{j_2}}|\le\frac{\|{\bfw^{[p]}_{j_2}}-{\bfw_{j_2}}\|_2}{\|\bfw^{[p]}_{j_2}\|_2}   \Big\}.
	\end{split}
	\end{equation}	
	Hence, we have that
	\begin{equation}\label{eqn:I_1_sub2}
	\begin{split}
	\mathbb{E}_{\bfx}
	|\phi^{\prime}(\bfw_{j_2}^T\bfx)-\phi^{\prime}({\bfw^{[p]T}_{j_2}}\bfx)|^2
	=& \mathbb{E}_{\bfx}
	|\phi^{\prime}(\bfw_{j_2}^T\bfx)-\phi^{\prime}({\bfw^{[p]T}_{j_2}}\bfx)|\\
	=& \text{Prob}(\bfx\in\mathcal{A}_1)\\
	\le& \text{Prob}(\bfx\in\mathcal{A}_2).
	\end{split}
	\end{equation}
	Since $\bfx\sim \mathcal{N}({\bf0},\delta^2\|\bfa\|_2^2 \bfI)$, $\theta_{\bfx,\bfw^{[p]}_{j_2}}$ belongs to the uniform distribution on $[-\pi, \pi]$, we have
	\begin{equation}\label{eqn:I_1_sub3}
	\begin{split}
	\text{Prob}(\bfx\in\mathcal{A}_2)
	=\frac{\pi- \arccos\frac{\|{\bfw^{[p]}_{j_2}}-{\bfw_{j_2}}\|_2}{\|\bfw^{[p]}_{j_2}\|_2} }{\pi}
	\le&\frac{1}{\pi}\tan(\pi- \arccos\frac{\|{\bfw^{[p]}_{j_2}}-{\bfw_{j_2}}\|_2}{\|\bfw^{[p]}_{j_2}\|_2})\\
	=&\frac{1}{\pi}\cot(\arccos\frac{\|{\bfw^{[p]}_{j_2}}-{\bfw_{j_2}}\|_2}{\|\bfw^{[p]}_{j_2}\|_2})\\
	\le&\frac{2}{\pi}\frac{\|{\bfw^{[p]}_{j_2}}-{\bfw_{j_2}}\|_2}{\|\bfw^{[p]}_{j_2}\|_2}.
	\end{split}
	\end{equation}

	Hence,  \eqref{eqn: ean111} and \eqref{eqn:I_1_sub3} suggest that
	\begin{equation}\label{eqn:I_1_bound}
	I\le\frac{6\delta^2}{\pi}\frac{\|\bfw_{j_2} -\bfw^{[p]}_{j_2}\|_2}{\sigma_K} \cdot\|\bfa\|_2^2.
	\end{equation}
	The same bound that shown in \eqref{eqn:I_1_bound} holds for $\bfP_2$ as well.  
	
	$\bfP_3$ and $\bfP_4$ satisfy \eqref{eqn:I_1_bound} except for changing $\delta^2$ to $\de^2$.
	
	Therefore, we have 
	\begin{equation}
	    \begin{split}
	    &\| \nabla^2f(\Wp;p) - \nabla^2f(\bfW;p)\|_2\\
	    =&\max_{\|\boldsymbol{\alpha}\|_2\le 1} \Big|\boldsymbol{\alpha}^T(\nabla^2f(\Wp;p) - \nabla^2f(\bfW;p))\boldsymbol{\alpha}\Big|\\
	    \le&\sum_{j_1=1}^K\sum_{j_2=1}^K\Bigg|\boldsymbol{\alpha}_{j_1}^T\bigg(\frac{\partial^2 f}{\partial {\bfw}_{j_1}\partial {\bfw}_{j_2}}(\Wp;p)
	    -
	    \frac{\partial^2 f}{\partial {\bfw}_{j_1}\partial {\bfw}_{j_2}}(\bfW;p)\bigg)\boldsymbol{\alpha}_{j_2}\Bigg|\\
	    \le &\frac{1}{K^2}\sum_{j_1=1}^K\sum_{j_2=1}^K\Big(\lambda \|\bfP_1+\bfP_2\|_2 + (1-\lambda)\|\bfP_3+\bfP_4\|_2\Big)\|\boldsymbol{\alpha}_{j_1}\|_2 \|\boldsymbol{\alpha}_{j_2}\|_2\\
	    \le & \frac{1}{K^2}\sum_{j_1=1}^K\sum_{j_2=1}^K4(\lambda\delta^2+(1-\lambda)\de^2)\frac{\|\bfw^{[p]}_{j_2}-\bfw_{j_2}\|_2}{\sigma_K}\|\boldsymbol{\alpha}_{j_1}\|_2 \|\boldsymbol{\alpha}_{j_2}\|_2\\
	    \le & \frac{4}{K}\big(\lambda\delta^2+(1-\lambda)\de^2\big)\cdot\frac{\|\Wp-\bfW\|_2}{\sigma_K},
	    \end{split}
	\end{equation}
	where $\boldsymbol{\alpha}\in \mathbb{R}^{Kd}$ and  $\boldsymbol{\alpha}_{j}\in \mathbb{R}^d$ with $\boldsymbol{\alpha} =[\boldsymbol{\alpha}_{1}^T, \boldsymbol{\alpha}_{2}^T, \cdots, \boldsymbol{\alpha}_{K}^T]^T$.
\end{proof}

\Revise{
\section{Initialization via tensor method}\label{sec: initialization}
In this section, we briefly summarize the tensor initialization in \citep{ZSJB17} by studying the target function class as 
\begin{equation}\label{eqn: two_layer}
    y = \frac{1}{K}\sum_{j=1}^K v_j^* \phi(\bfw_j^{*T}\bfx),
\end{equation}
where $v_j^*\in R$.
Note that for ReLU function, we have $v_j^* \phi(\bfw_j^{*T}\bfx) = \text{sign}(v_j^*) \phi(|v_j^*|\bfw_j^{*T}\bfx)$. Without loss of generalization, we can assume $v_j^*\in \{+1, -1\}$. Additionally, it is clear that the function studied in \eqref{eqn: teacher_model} is the special case of \eqref{eqn: two_layer} when $v_j^*=1$ for all $j$. In addition, Theorem 5.6 in \citep{ZSJB17} show that the sign of $v_j^*$ can be directly recovered using tensor initialization, which indicates the the equivalence of \eqref{eqn: teacher_model} and \eqref{eqn: two_layer} when using tensor initialization.}

\Revise{We first define some high order momenta in the following 
way:
\begin{equation}\label{eqn: M_1}
\bfM_{1} = \mathbb{E}_{\bfx}\{y \bfx \}\in \mathbb{R}^{d},
\end{equation}
\begin{equation}\label{eqn: M_2}
\bfM_{2} = \mathbb{E}_{\bfx}\Big[y\big(\bfx\otimes \bfx-\delta^2\bfI\big)\Big]\in \mathbb{R}^{d\times d},
\end{equation}
\begin{equation}\label{eqn: M_3}
\bfM_{3} = \mathbb{E}_{\bfx}\Big[y\big(\bfx^{\otimes 3}- \bfx\widetilde{\otimes} \delta^2\bfI  \big)\Big]\in \mathbb{R}^{d\times d \times d},
\end{equation}
where $\mathbb{E}_{\bfx}$ is the expectation over $\bfx$ and $\bfz^{\otimes 3} := \bfz \otimes \bfz \otimes \bfz$. The operator $\widetilde{\otimes}$ is defined as    
\begin{equation}
\bfv\widetilde{\otimes} \bfZ=\sum_{i=1}^{d_2}(\bfv\otimes \bfz_i\otimes \bfz_i +\bfz_i\otimes \bfv\otimes \bfz_i + \bfz_i\otimes \bfz_i\otimes \bfv),
\end{equation} 
for any vector $\bfv\in \mathbb{R}^{d_1}$ and $\bfZ\in \mathbb{R}^{d_1\times d_2}$.}

\Revise{Following the same calculation formulas in the Claim 5.2 \citep{ZSJB17},  there exist some known constants $\psi_i, i =1, 2, 3$, such that
\begin{equation}\label{eqn: M_1_2}
\bfM_{1} = \sum_{j=1}^{K} \psi_1\cdot \|{\bfw}_{j}^*\|_2\cdot\overline{\bfw}_j^*,
\end{equation}
\begin{equation}\label{eqn: M_2_2}
\bfM_{2} = \sum_{j=1}^{K} \psi_2\cdot \|{\bfw}_{j}^*\|_2\cdot\overline{\bfw}_{j}^* \overline{\bfw}_{j}^{*T},
\end{equation}\label{eqn: M_3_2}
\begin{equation}\label{eqn:M_3_v2}
\bfM_{3} = \sum_{j=1}^{K} \psi_3\cdot \|{\bfw}_{j}^*\|_2\cdot\overline{\bfw}_{j}^{*\otimes3},
\end{equation} 
where $\overline{\bfw}^*_{j}={\bfw}_{j}^*/\|{\bfw}_{j}^*\|_2$ in \eqref{eqn: M_1}-\eqref{eqn: M_3} is the normalization of ${\bfw}_{j}^*$. Therefore, we can see that the information of $\{\bfw_j^*\}_{j=1}^K$ are separated as the direction of $\bfw_j$ and the magnitude of $\bfw_j$ in $M_1$, $M_2$ and $M_3$.}

\Revise{$\bfM_{1}$, $\bfM_{2}$ and $\bfM_{3}$ can be estimated through the samples $\big\{(\bfx_n, y_n)\big\}_{n=1}^{N}$, and let $\widehat{\bfM}_{1}$, $\widehat{\bfM}_{2}$, $\widehat{\bfM}_{3}$ denote the corresponding estimates. 
First, we will decompose the rank-$K$ tensor $\bfM_{3}$ and obtain the $\{\overline{\bfw}^*_{j}\}_{j=1}^K$. By applying the tensor decomposition method \citep{KCL15} to $\widehat{\bfM}_{3}$, the outputs, denoted by $\widehat{\overline{\bfw}}_{j}^*$, are the estimations of $\{\bfs_j\overline{\bfw}^*_{j}\}_{j=1}^K$, where $s_j$ is an unknown sign.
Second, we will estimate $s_j$, $\bfv_j^*$ and $\|\bfw_j^*\|_2$ through $\bfM_{1}$ and $\bfM_{2}$. Note that $\bfM_{2}$ does not contain the information of $\bfs_j$ because $\bfs_j^2$ is always $1$. Then, through solving the following two optimization problem:
\begin{equation}\label{eqn: int_op}
\begin{gathered}
\widehat{\boldsymbol{\alpha}}_1 = \arg\min_{\boldsymbol{\alpha}_1\in\mathbb{R}^K}:\quad  \Big|\widehat{\bfM}_{1} - \sum_{j=1}^{K}\psi_1 \alpha_{1,j} \widehat{\overline{\bfw}}^*_{j}\Big|,\\
\widehat{\boldsymbol{\alpha}}_2 = \arg\min_{\boldsymbol{\alpha}_2\in\mathbb{R}^K}:\quad  \Big|\widehat{\bfM}_{2} - \sum_{j=1}^{K}\psi_2 \alpha_{2,j} \widehat{\overline{\bfw}}^*_{j} {\widehat{\overline{\bfw}}_j^{*T}}\Big|,
\end{gathered}
\end{equation}
The estimation of $s_j$ can be given as 
$$\hat{s}_j = \text{sign}(\widehat{\alpha}_{1,j}/\widehat{\alpha}_{2,j}).$$ 
Also, we know that $|\widehat{\alpha}_{1,j}|$ is the estimation of 
$\|\bfw_j^*\|$ and 
$$\hat{v}_j = \text{sign}(\widehat{\alpha}_{1,j}/s_j)= \text{sign}(\widehat{\alpha}_{2,j}).$$
Thus, 
$\bfW^{(0)}$ is given as 
 $$\begin{bmatrix}
 \text{sign}(\widehat{\alpha}_{2,1})\widehat{\alpha}_{1,1}\widehat{\overline{\bfw}}^*_{1}, &\cdots,& \text{sign}(\widehat{\alpha}_{2,K})\widehat{\alpha}_{1,K}
 \widehat{\overline{\bfw}}^*_{K}
 \end{bmatrix}.$$}

\floatname{algorithm}{Subroutine}
\setcounter{algorithm}{0}
\begin{algorithm}[h]
	\caption{Tensor Initialization Method}\label{Alg: initia_snn}
	\begin{algorithmic}[1]
		\State \textbf{Input:} labeled data $\mathcal{D}=\{(\bfx_n, y_n) \}_{n=1}^{N}$;
		\State Partition $\mathcal{D}$ into three disjoint subsets $\mathcal{D}_1$, $\mathcal{D}_2$, $\mathcal{D}_3$;
		\State Calculate $\widehat{\bfM}_{1}$, $\widehat{\bfM}_{2}$ {following \eqref{eqn: M_1}, \eqref{eqn: M_2}} using $\mathcal{D}_1$, $\mathcal{D}_2$, respectively;
		\State Obtain the estimate subspace $\widehat{\bfV}$ of $\widehat{\bfM}_{2}$;
		\State Calculate $\widehat{\bfM}_{3}(\widehat{\bfV},\widehat{\bfV},\widehat{\bfV})$  through $\mathcal{D}_3$;
		\State Obtain $\{ \widehat{\bfs}_j \}_{j=1}^K$ via {tensor decomposition method \citep{KCL15}} on $\widehat{\bfM}_{3}(\widehat{\bfV},\widehat{\bfV},\widehat{\bfV})$;
		\State Obtain $\widehat{\boldsymbol{\alpha}}_1$, $\widehat{\boldsymbol{\alpha}}_2$ by solving  optimization problem $\eqref{eqn: int_op}$;
		\State \textbf{Return:}  $\bfw^{(0)}_j=\text{sign}(\widehat{\alpha}_{2,j})\widehat{\alpha}_{1,j}\widehat{\bfV}\widehat{\bfu}_j$ and ${v}_j^{(0)}=\text{sign}(\widehat{{\alpha}}_{2,j}) $, {$j=1,...,K$.}
	\end{algorithmic}
\end{algorithm}

\Revise{To reduce the computational complexity of tensor decomposition, one can project $\widehat{\bfM}_{3}$ to a lower-dimensional tensor \citep{ZSJB17}. The idea is to first estimate the subspace spanned by $\{\bfw_{j}^* \}_{j=1}^{K}$, and let $\widehat{\bfV}$ denote the estimated subspace. 
Moreover, we have 
\begin{equation}\label{eqn: M_3_vv2}
\bfM_{3}(\widehat{\bfV},\widehat{\bfV},\widehat{\bfV}) = \mathbb{E}_{\bfx}\Big[y\big((\widehat{\bfV}^T\bfx)^{\otimes 3}- (\widehat{\bfV}^T\bfx)\widetilde{\otimes} \mathbb{E}_{\bfx}(\widehat{\bfV}^T\bfx)(\widehat{\bfV}^T\bfx)^T  \big)\Big]\in \mathbb{R}^{K\times K \times K},
\end{equation}
Then, one can decompose the estimate $\widehat{\bfM}_{3}(\widehat{\bfV}, \widehat{\bfV}, \widehat{\bfV})$ to obtain unit vectors $\{\hat{\bfs}_j \}_{j=1}^K \in \mathbb{R}^{K}$.
Since $\overline{\bfw}^*$ lies in the subspace $\bfV$, we have $\bfV\bfV^T\overline{\bfw}_j^*=\overline{\bfw}_j^*$. Then, $\widehat{\bfV}\hat{\bfs}_j$ is an estimate of $\overline{\bfw}_j^*$. 
The initialization process is summarized in Subroutine \ref{Alg: initia_snn}.
}

\Revise{
\section{Classification Problems}
The framework in this paper is extendable to binary classification problem. For binary classification problem, the output $y$ given input $\bfx$ is defined as 
\begin{equation}
    \text{Prob}\{ y= 1 \} = g(\bfW^*;\bfx)
\end{equation}
with some ground truth parameter $\bfW^*$. To guarantee the output is within $[0,1]$, the activation function is often used as sigmoid. For classification, the loss function is cross-entropy, and the objective function over labeled data $\mathcal{D}$ is defined as 
\begin{equation}
    f_{\mathcal{D}}(\bfW) = \frac{1}{N} \sum_{(\bfx_n,y_n)\in \mathcal{D}} -y_n\log g(\bfW;\bfx_n) - (1-y_n) \log (1-g(\bfW;\bfx_n)).    
\end{equation}
The expectation of objective function can be written as
\begin{equation}\label{eqn: class_prf}
    \begin{split}
    \mathbb{E}_{\mathcal{D}}f_{\mathcal{D}}(\bfW) 
    =& \mathbb{E}_{(\bfx,y)} -y\log(g(\bfW;\bfx_n)) - (1-y)\log(1-g(\bfW;\bfx))\\
    =&  \mathbb{E}_{\bfx}\mathbb{E}_{(y|\bfx)} -y\log(g(\bfW;\bfx_n)) - (1-y)\log(1-g(\bfW;\bfx))\\
    =& \mathbb{E}_{\bfx}\Big[ -g(\bfW^*;\bfx)\log(g(\bfW;\bfx_n)) - (1-g(\bfW^*;\bfx))\log(1-g(\bfW;\bfx_n))\Big]
    \end{split}
\end{equation}
Please note that \eqref{eqn: class_prf} is exactly the same as \eqref{eqn: population_risk_function} with $\lambda=1$ when the loss function is squared loss.}

\Revise{
For cross entropy loss function, the second order derivative of \eqref{eqn: class_prf} is calculated as 
\begin{equation}
\begin{split}
\frac{\partial f_{\mathcal{D}}(\bfW)}{\partial \bfw_j \partial \bfw_k} 
=\frac{1}{N}[\frac{y_n}{g^2(\bfW;\bfx)} + \frac{1-y_n}{(1-g(\bfW;\bfx))^2}]\cdot \phi'(\bfw_j^T\bfx)\phi'(\bfw_k^T\bfx)\bfx\bfx^T.
\end{split}
\end{equation}
when $j\neq k$.
Refer to (88) in \citep{FCL20} or (132) in \citep{ZWLC20_2}, we have 
\begin{equation}
\begin{split}
        \Big\|\frac{y_n(\phi'(\bfw_j^T\bfx)\phi'(\bfw_k^T\bfx))}{g^2(\bfW;\bfx)}\Big\|_2
    \le&\Big\|\frac{\phi'(\bfw_j^T\bfx)\phi'(\bfw_k^T\bfx)}{g^2(\bfW;\bfx)}\Big\|_2
    \le K^2. 
\end{split}
\end{equation}
Following similar steps in \eqref{eqn: temppp}, from Defintion \ref{Def: sub-exponential}, we know that $\alpha_j^T\frac{\partial f_{\mathcal{D}}(\bfW)}{\partial \bfw_j \partial \bfw_k}\alpha_k$ belongs to the sub-exponential distribution. Therefore, similiar results for objective function with cross-entropy loss can be established as well. One can check \citep{FCL20}  or \citep{ZWLC20_2} for details.}

\Revise{\section{One-hidden layer neural network with top layer weights}\label{sec: two_layer}}
\Revise{For a general one-hidden layer neural network, the output of the neural network is defined as 
\begin{equation}\label{eqn: two_layer_n}
    g(\bfW,\bfv;\bfx) = \frac{1}{K}\sum_{j=1}^K v_j \phi(\bfw_j^{T}\bfx),
\end{equation}
where $\bfv = [v_1, v_2, \cdots, v_K]\in R^K$. Then, the target function can be defined as 
\begin{equation}\label{eqn: two_layer_*}
    y = g(\bfW^*,\bfv^*;\bfx) = \frac{1}{K}\sum_{j=1}^K v_j^* \phi(\bfw_j^{*T}\bfx)
\end{equation}
for some unknown weights $\bfW^*$ and $\bfv^*$.}

\Revise{In the following paragraphs, we will provide a short description for the equivalence of \eqref{eqn: two_layer_*} and \eqref{eqn: teacher_model} in theoretical analysis.
Note that for ReLU functions, we have $v_j \phi(\bfw_j^{T}\bfx) = \text{sign}(v_j) \phi(|v_j|\bfw_j^{T}\bfx)$. Without loss of generalization, we can assume $v_j, v_j^*\in \{+1, -1\}$ for all $j\in[K]$\footnote{To see this, one can view $|v_j^*| \bfw^{*}_j$ as the new ground truth weights, and the goal for this paper is to recover the new ground truth weights.}. From Appendix \ref{sec: initialization}, we know that the sign of $v_j^*$ can exactly estimated through tensor initialization. There, we can focus on analysis the neural network in the form as 
\begin{equation}\label{eqn: two_layer_1}
    g(\bfW;\bfx) = \frac{1}{K}\sum_{j=1}^K v_j^* \phi(\bfw_j^{T}\bfx).
\end{equation}
Considering the objective function in \eqref{eqn: empirical_risk_function}  and population risk function in \eqref{eqn: p_population_risk_function}, we have 
\begin{equation}
\begin{split}
    &\Big\|\frac{\partial \hat{f} }{ \partial \bfw_k } (\bfW) - \frac{\partial {f} }{ \partial \bfw_k} (\bfW;p)\Big\|_2\\
    = & \Big\|\frac{\lambda}{K^2N} \sum_{j=1}^K v_j^*\Big[ \sum_{n=1}^N\big( \phi(\bfw_j^T\bfx_n) - \phi(\bfw_j^{*T}\bfx_n)\big)\bfx_n - \mathbb{E}_{\bfx}\big( \phi(\bfw_j^T\bfx) -\phi(\bfw^{[p]T}_j\bfx) \big)\bfx 
    \Big]\\
    &+ \frac{1-\lambda}{K^2M} v_j^*\sum_{j=1}^K \Big[ \sum_{m=1}^M\big( \phi(\bfw_j^T\w[\bfx]_m) - \phi(\w[\bfw]_j^T\w[\bfx]_m)\big)\w[\bfx]_m - \mathbb{E}_{\w[\bfx]}\big( \phi(\bfw_j^T\w[\bfx]) -\phi(\bfw^{[p]T}_j\w[\bfx]) \big)\w[\bfx] 
            \Big]\Big\|_2\\
    \le& \sum_{j=1}^K \cdot |v_j^*| \cdot \Big\|\frac{\lambda}{K^2N} \Big[ \sum_{n=1}^N\big( \phi(\bfw_j^T\bfx_n) - \phi(\bfw_j^{*T}\bfx_n)\big)\bfx_n - \mathbb{E}_{\bfx}\big( \phi(\bfw_j^T\bfx) -\phi(\bfw^{[p]T}_j\bfx) \big)\bfx 
    \Big]\\
    &+\frac{1-\lambda}{K^2M} \Big[ \sum_{m=1}^M\big( \phi(\bfw_j^T\w[\bfx]_m) - \phi(\w[\bfw]_j^T\w[\bfx]_m)\big)\w[\bfx]_m - \mathbb{E}_{\w[\bfx]}\big( \phi(\bfw_j^T\w[\bfx]) -\phi(\bfw^{[p]T}_j\w[\bfx]) \big)\w[\bfx] 
            \Big]\Big\|_2\\
    =& \sum_{j=1}^K \Big\| \frac{\lambda}{K^2N}\Big[ \sum_{n=1}^N\big( \phi(\bfw_j^T\bfx_n) - \phi(\bfw_j^{*T}\bfx_n)\big)\bfx_n - \mathbb{E}_{\bfx}\big( \phi(\bfw_j^T\bfx) -\phi(\bfw^{[p]T}_j\bfx) \big)\bfx 
    \Big]\\
    &+\frac{1-\lambda}{K^2M} \Big[ \sum_{m=1}^M\big( \phi(\bfw_j^T\w[\bfx]_m) - \phi(\w[\bfw]_j^T\w[\bfx]_m)\big)\w[\bfx]_m - \mathbb{E}_{\w[\bfx]}\big( \phi(\bfw_j^T\w[\bfx]) -\phi(\bfw^{[p]T}_j\w[\bfx]) \big)\w[\bfx] 
            \Big]\Big\|_2,\\
\end{split}
\end{equation}
which is exact the same as \eqref{eqn: Lemma2_temp}. Similar results can be derived for Lemma \ref{Lemma: p_Second_order_distance}. Therefore, the conclusions and proofs of Lemma \ref{Lemma: p_second_order_derivative bound} and Lemma \ref{Lemma: p_first_order_derivative} does not change at all.}

\Revise{Additionally, fixing the second-layer weights and only training the hidden layer is the state-of-the-art practice in analyzing two-layer neural networks \citep{ADHL19,ADHLW19,ALL19,SS18,LL18,BG17, OS18, ZYWG19}. Additionally, as indicated in \citep{SS18},  training a one-hidden-layer neural network  with all $v_j$ fixed as $1$ has intractable many spurious local minima, which indicates that training problem is not trivial.}

\end{document}